\documentclass[twoside,11pt]{article}

%

%
%
%

\usepackage[abbrvbib, preprint]{jmlr2e}
\setcitestyle{square,comma, open={[}, close={]}}
\setcitestyle{numbers}
\usepackage{hyperref}
\usepackage{tabularx}
\usepackage{amsmath, amssymb}
\usepackage{hyperref}
\hypersetup{hidelinks}
\usepackage[ruled,vlined]{algorithm2e}
\usepackage{mathrsfs}
\usepackage{pgfplots}
\usepackage[skip=2pt,font=scriptsize]{caption}
\usepackage{float}
\usepackage{bbm}
\usepackage{bm}
\usepackage{dsfont}

\usepackage{lipsum}
\usepackage [english]{babel}
\usepackage [autostyle, english = american]{csquotes}


\newtheorem{thm}{Theorem}[section]{\bfseries}{\itshape}
\newtheorem{lem}{Lemma}[section]{\bfseries}{\itshape}
\newtheorem{prop}{Proposition}[section]{\bfseries}{\itshape}
\newtheorem{coro}{Corollary}[section]{\bfseries}{\itshape}
\newtheorem{defi}{Definition}[section]{\bfseries}{\itshape}
\newtheorem{rema}{Remark}[section]{\bfseries}{\itshape}
\newtheorem{exam}{Example}[section]{\bfseries}{\itshape}

\usepackage[noend]{algpseudocode}
\algnewcommand\algorithmicinput{\textbf{Input:}}
\algnewcommand\algorithmicoutput{\textbf{Output:}}
\algnewcommand\Input{\item[\algorithmicinput]}%
\algnewcommand\Output{\item[\algorithmicoutput]}


\DeclareMathOperator{\var}{Var}
\DeclareMathOperator{\Var}{Var}

\DeclareMathOperator{\Id}{Id}

\DeclareMathOperator*{\argmin}{arg\,min}
\DeclareMathOperator{\bary}{Bary}

\newcommand{\edit}[1]{\textcolor{blue}{[#1]}}

\newcolumntype{b}{X}
\newcolumntype{s}{>{\hsize=.5\hsize}X}
\newcolumntype{t}{>{\hsize=.3\hsize}X}


\newcommand{\ts}[1]{\textcolor{red}{[TS: #1]}}

\jmlrheading{}{}{}{}{}{}{Shizhou Xu}


\ShortHeadings{Optimal Data Partitioning via Wasserstein Homogeneity}{Xu and Strohmer}
\firstpageno{1}

\begin{document}

\title{WHOMP: Optimal Data Partitioning via Wasserstein Homogeneity}
\title{WHOMP: Optimizing Randomized Controlled Trials via Wasserstein Homogeneity}

\author{\name Shizhou Xu \email shzxu@ucdavis.edu \\
       \addr Department of Mathematics\\
       University of California Davis\\
       Davis, CA 95616-5270, USA
       \AND
       \name Thomas Strohmer \email strohmer@math.ucdavis.edu \\
       \addr Department of Mathematics\\
       Center of Data Science and Artificial Intelligence Research\\
       University of California Davis\\
       Davis, CA 95616-5270, USA}

\maketitle

\begin{abstract}
We investigate methods for partitioning datasets into subgroups that maximize diversity within each subgroup while minimizing dissimilarity across subgroups. We introduce a novel partitioning method called the \textit{Wasserstein Homogeneity Partition} (WHOMP), which optimally minimizes type I and type II errors that often result from imbalanced group splitting or partitioning, commonly referred to as accidental bias, in comparative and controlled trials. We conduct an analytical comparison of WHOMP against existing partitioning methods, such as random subsampling, covariate-adaptive randomization, rerandomization, and anti-clustering, demonstrating its advantages. Moreover, we characterize the optimal solutions to the WHOMP problem and reveal an inherent trade-off between the stability of subgroup means and variances among these solutions. Based on our theoretical insights, we design algorithms that not only obtain these optimal solutions but also equip practitioners with tools to select the desired trade-off. Finally, we validate the effectiveness of WHOMP through numerical experiments, highlighting its superiority over traditional methods.
\end{abstract}

\begin{keywords}
randomized controlled trial, Wasserstein homogeneity, 
anti-clustering, diverse K-means, control/test group splitting, cross-validation
\end{keywords}

\vfill
\newpage

\tableofcontents
\newpage

\section{Introduction}

Congratulations! After investing years of hard work and hundreds of millions of dollars, your company has discovered a promising new cancer drug. The next milestone is to conduct a {\em randomized clinical trial} to confirm the drug's effectiveness.  
However, occasionally the randomization procedure can cause an imbalance in covariates related to the outcome across groups. 
A chance you are reluctant to take, since too much is at stake here! You are, of course, aware of the various attempts to mitigate the potential downsides of randomization, such as covariate adaptive randomization. But these alternatives have their own drawbacks, often seem ad hoc, and very rare of these methods are designed with any optimality criteria for comparative tests. Enter WHOMP,  {\em Wasserstein HOMogeneous Partitioning}, a method that constructs maximally balanced data partitions with provable optimality guarantees.

\subsection{Motivation}

Randomized group splitting has been a widely accepted standard for estimating causal inference in scientific experiments, as randomization typically balances covariate effects between group divisions and experimental outcomes on average over repeated trials. However, the risks associated with pure randomization and imbalanced group splitting have been highlighted in numerous studies across fields such as agriculture, biology, social sciences, and clinical research \cite{fisher1992arrangement, krause2003random, yates1939comparative, 10.1214/08-STS269}. 

The widely held belief behind randomization is that it promotes comparability between the resulting subgroups. For instance, Rosenberger states in \cite{rosenberger2015randomization}, “The first property of randomization is that it promotes comparability among the study groups.” However, this result holds with reasonably high probability only when the law of large numbers applies to the randomized subsampling process. In many controlled trials, the sample size is inherently limited. Additionally, conducting repeated experiments with randomized sample splitting can be prohibitively expensive or even impractical in many scientific settings. Therefore, the law of large numbers may not apply to either the group size or the number of trials. As stated by Fisher, who first proposed the requirement of randomization in experimental design, in \cite{fisher1992arrangement}, ``Most experimenters on carrying out a random assignment of plots will be shocked to find how far from equally the plots distribute themselves''. When the (sub)sample size is insufficient or the number of covariates is relatively large, there is a non-negligible chance that the randomization itself becomes a covariate factor in a limited number of realizations, potentially leading to type~I or type~II errors.

This work aims to address the following question, which naturally arises in scientific experiments and causal inference studies:
\begin{itemize}
\item[] \textit{How can we split a sample into control and test (or multiple controlled) groups in a way that minimizes the impact of the data splitting on the outcomes of the controlled experiment?}
\end{itemize}
We approach this question from two key perspectives:
\begin{itemize}
\item \textit{In-subgroup diversity}: Maximizing diversity within each subgroup (or partition element) based on a specific diversity metric.
\item \textit{Cross-subgroup similarity}: Minimizing dissimilarity across subgroups using a defined similarity measure.
\end{itemize}
Here, maximizing diversity within each subgroup ensures that the test results are more representative of the entire sample, which is often assumed to reflect the target population. At the same time, minimizing dissimilarity between subgroups, where different controlled factors are applied, reduces the likelihood that the statistical (in)significance is driven by covariate imbalances introduced through group splitting.

Beyond scientific experiments, the study of group splitting to maximize in-subgroup diversity and cross-subgroup similarity has garnered attention in various fields: \textit{Graph Theory:} Partitioning the nodes of a (weighted) graph into clusters such that the total weight of edges with both endpoints in the same cluster is maximized \cite{feo1990class, feo1992one}. \textit{Federated Learning:} Identifying “superclients” to address distribution heterogeneity in training data across clients, using either unsupervised approaches \cite{mohri2019agnostic, zaccone2022speeding} or supervised methods \cite{du2021fairness}. \textit{Managerial Science:} Promoting diversity within workgroups to enhance productivity \cite{baker2002methods, bhadury2000maximizing, fan2011erratum}.

In this work, we propose a new partitioning objective that addresses both perspectives:
\begin{itemize}
\item[] \textit{Homogeneity Partition}: Given a distance metric on probability distribution spaces, such as Wasserstein spaces, the partitioning method aims to minimize the average squared distance between the entire sample and the resulting subgroups.
\end{itemize}
Here, in-subgroup diversity is captured by minimizing the distance between the subgroup and the entire sample: less diversity (relative to the entire sample) in a subgroup results in a greater distance between it and the entire sample. On the other hand, cross-subgroup similarity is captured by the minimization of average squared distance: The average squared distance minimizes the variability among the subgroups around the sample. The distance is squared because $\ell^2$-minimization promotes a more uniform distribution of distance or variability, compared to $\ell^1$-minimization, and a more balanced distribution of the distributional metric results in cross-subgroup similarity.

In this study, we concentrate on the Wasserstein-2 distance and present a comprehensive analysis of the above considerations. The main contributions of our work are as follows:
\begin{itemize}
    \item \textbf{Optimality criterion} (Section \ref{S:2}): We introduce a novel partitioning objective, named WHOMP, which is designed to establish a provable optimality criterion that guarantees the effectiveness of comparative experiments through the resulting partitions.
    \item \textbf{Analytical comparison} (Section \ref{S:3}): We provide a thorough analytical comparison of widely used partitioning methods in comparative experiments, including random partitioning, covariate-adaptive randomization, rerandomization, and anti-clustering, with the proposed WHOMP. This analysis highlights the connections, distinctions, and advantages of WHOMP in comparison to existing methods.
    \item \textbf{Solution characterization and algorithm design} (Sections \ref{S:4}, \ref{S:5}): We characterize the optimal solutions to the WHOMP problem and develop an efficient algorithm for their estimation. Furthermore, we identify a trade-off among different WHOMP optimal solutions, offering guidance for practitioners in selecting the most appropriate solution for specific use cases.
    \item \textbf{Numerical Comparison} (Section \ref{S:6}): We perform a numerical comparison of the standard partitioning methods and the WHOMP (implemented with our algorithm design) across various data types, including tabular, image, and graph data.
\end{itemize}

\subsection{Related Work}

One line of research related to diversified subgroup generation involves balancing significant covariates during randomized group splitting. This idea goes back at least to \cite{efron1971forcing} and has been widely employed in various comparative studies, including clinical trials \cite{rosenberger2015randomization}, A/B testing for business decisions \cite{wang2023b}, and experiments in the social sciences \cite{duflo2007using}. The objective is to balance covariates that may influence the results during randomized group splitting, thereby enhancing the credibility and efficiency of the trial or experiment. In other words, the goal is to reduce type I and type II errors caused by covariate imbalances in group assignments. To achieve this, methods such as covariate-adaptive randomization \cite{ma2024new}, block-stratified randomization \cite{kernan1999stratified}, and minimization \cite{scott2002method, coart2023minimization} are commonly employed. Despite their extensive application, these methods have faced criticism for lacking optimality criteria related to guaranteed comparative test performance \cite{senn2013seven, 10.1214/08-STS269, 10.1214/12-AOS983}. Existing methods either reduce distributional similarity to similarity in the first moments, as in minimization \cite{taves1974minimization, pocock1975sequential}, or rely on the assumption of a specific model for treatment effects \cite{atkinson1982optimum}.

Another line of research related to this work focuses on maximizing in-subgroup diversity, though these problems are studied under various terms, such as \textit{anti-clustering} \cite{spath1986anticlustering, papenberg2024k}, \textit{K-partition} \cite{feo1992one}, \textit{equitable partition} \cite{o1997heuristic}, and \textit{maximally diverse group problem} \cite{brimberg2015solving, gallego2013tabu, rodriguez2013artificial}. The distinction among these problems is that some consider a more general distance or diversity penalty function beyond the Euclidean distance or variance. It is important to note that when enforcing uniform cardinality of subgroups, all these problems are equivalent in the Euclidean setting. Therefore, we use the term anti-clustering to represent this body of work and explore the similarities and differences between WHOMP and anti-clustering to highlight the advantages of the proposed method.

In particular, we highlight a common misunderstanding in the current anti-clustering approach to the diverse subgroups problem. As discussed earlier, the problem of subgroup splitting for comparative tests should encompass two aspects that are not necessarily compatible: \textit{In-subgroup diversity} and \textit{cross-subgroup similarity}. The anti-clustering approach primarily focuses on in-subgroup diversity but relies on the following duality result to argue that maximizing in-subgroup diversity also maximizes cross-subgroup similarity: Maximizing in-subgroup variance is equivalent to minimizing the variance of the centroids across subgroups. For the exact statement, see Lemma \ref{centroid variable characterization} or \cite{spath1986anticlustering}. That is, this equivalence holds only when cross-subgroup similarity is defined as similarity in subgroup averages.

However, enforcing similarity among subgroup averages often leads to scale differences among subgroups. Points with similar scales but different directions tend to be grouped together to balance each other out, thereby achieving similarity in averages. While this scale matching can be beneficial for certain applications, the resulting cross-group scale differences make the anti-clustering objective less suitable when distributional properties or higher-order statistics are of greater concern than simple expectations. We demonstrate that the proposed objective in this work is more suitable for scenarios where similarity in distribution or higher-order statistics (beyond simple expectations) is of greater importance.

Rerandomization \cite{sprott1993randomization, 10.1214/12-AOS1008} is another line of work that aims to address imbalanced covariate issues in partitioning control and test subgroups while maintaining the robustness that stems from randomness. In a standard rerandomization approach, a quantification for covariate imbalance is first established along with a threshold for accepting or rejecting subgroup splits. A random partition is then generated, and the covariate imbalance is computed. Based on whether the imbalance falls below the threshold, the partition is either accepted or rejected. The typical imbalance measures used in rerandomization include average difference or Mahalanobis distance, which primarily focus on the average similarity between subgroups but overlook other important distributional discrepancies. Moreover, the threshold for accepting or rejecting a partition is often determined manually, which introduces subjectivity into the process. In contrast, as shown in Section \ref{S:3}, WHOMP can be implemented as a rerandomization strategy that overcomes these limitations. Specifically, it utilizes optimal transport to design an imbalance metric that captures broader distributional discrepancies between subgroups, and it employs unsupervised learning techniques to automatically determine the threshold based on the covariates.

A different, very interesting, approach is taken in~\cite{Spielman2024}. There, the authors first formalize the tradeoff between covariate balance and a notion of robustness. By linking the 
experimental design problem to a new type of problem in algorithmic discrepancy, the authors then propose a randomized algorithm, namely the Gram-Schmidt walk, to solve the distributional discrepancy problem and thereby navigate the tradeoff between balance and robustness.
Their method is limited to the specific setting, where one aims to split the data set into {\em two} subgroups.

Finally, this work is also related to statistical parity in machine learning group fairness. Since a covariate-balanced partition can be viewed as one that is independent of the covariate, its objective aligns with the definition of statistical parity in ML group fairness. In fact, the proposed solution for WHOMP in this work is closely related to performing K-means clustering on an optimal fair data representation that ensures statistical parity. This fair data representation approach guarantees statistical parity for any neutral downstream tasks, such as K-means clustering \cite{JMLR:v24:22-0005}. Here, neutral downstream tasks are the downstream tasks or models that do not introduce statistical dependence on the sensitive information by themselves.

\subsection{Preliminaries and Notation}

We provide a brief review of optimal transport, the Wasserstein-2 space\footnote{As this work focuses on the Wasserstein-2 space, we will henceforth refer to it simply as the Wasserstein space}, and the Wasserstein barycenter, which are essential tools in the development and analysis of WHOMP.

Given $\mu, \nu \in \mathcal{P}(\mathbb{R}^d)$ where $\mathcal{P}(\mathbb{R}^d)$ denotes the set of all the probability measures on $\mathbb{R}^d$, $$\mathcal{W}_2(\mu,\nu) := \left(\inf_{ \lambda \in \prod (\mu,\nu) } \Big\{\int_{\mathbb{R}^d \times \mathbb{R}^d} ||x_1 - x_2||^2 d \lambda(x_1,x_2)\Big\}\right)^{\frac{1}{2}}.$$ Here, $\prod (\mu,\nu) := \{\pi \in \mathcal{P}((\mathbb{R}^d)^2): \int_{\mathbb{R}^d} d\pi(\cdot,v) = \mu, \int_{\mathbb{R}^d} d\pi(u,\cdot) = \nu \}$.
$(\mathcal{P}_2(\mathbb{R}^d),\mathcal{W}_2)$ is called the Wasserstein space, where $\mathcal{P}_2(\mathbb{R}^d):= \Big\{\mu \in \mathcal{P}(\mathbb{R}^d): \int_{\mathbb{R}^d} ||x||^2 d\mu < \infty\Big\}$. To simplify notation, we often denote $$\mathcal{W}_2(X_1,X_2) := \mathcal{W}_2(\mathcal{L}(X_1),\mathcal{L}(X_2)),$$ where $\mathcal{L}(X) := \mathbb{P} \circ X^{-1} \in \mathcal{P}(\mathbb{R}^d)$ is the law or distribution of $X$, $X: \Omega \rightarrow \mathcal{X} := \mathbb{R}^d$ is a random variable (or vector) with an underlying probability space $(\Omega, \mathcal{F}, \mathbb{P})$. Intuitively, one can consider the Wasserstein distance as $L^2$ distance after optimally coupling two random variables whose distributions are $\mu$ and $\nu$. That is, if the pair $(X_1,X_2)$ is an optimal coupling \cite{villani2008optimal}, then $$\mathcal{W}_2(X_1,X_2) = ||X_1 - X_2||_{L^2} = \int_{\Omega} ||X_1(\omega) - X_2(\omega)||^2 d\mathbb{P}(\omega).$$
Given $\{\mu_z\}_{z \in \mathcal{Z}} \subset (\mathcal{P}_2(\mathbb{R}^d),\mathcal{W}_2)$ for some index set $\mathcal{Z}$, their Wasserstein barycenter~\cite{agueh2011barycenters} with weights $\lambda \in \mathcal{P}(\mathcal{Z})$ is 
\begin{equation}\label{barycenter}
\bar{\mu} := \text{argmin}_{\mu \in \mathcal{P}_2(\mathbb{R}^d)} \Big\{\int_{\mathcal{Z}} \mathcal{W}_2^2(\mu_z,\mu) d\lambda(z)\Big\}.
\end{equation}
If there is no danger of confusion, we will refer to the Wasserstein barycenter simply as barycenter.

Two random variables $X_1$ and $X_2$ are called equal in distribution if they have the same probability distribution, which is denoted by $X_1 =_d X_2$. More specifically, $X_1 =_d X_2$ if and only if, for all $f \in \mathcal{C}_b(\mathbb{R}^d)$,
$$    \int_{\mathbb{R}^d} f d\mathcal{L}(X_1) := \int_{\mathbb{R}^d} f(x) d\mathbb{P}\circ(X_1)^{-1}(x) = \int_{\mathbb{R}^d} f(x) d\mathbb{P}\circ(X_2)^{-1}(x) =: \int_{\mathbb{R}^d} f d\mathcal{L} (X_2),
$$
where $\mathcal{C}_b(\mathbb{R}^d)$ denotes the set of all bounded continuous functions on $\mathbb{R}^d$.

\medskip

The rest of this paper is organized as follows: Section 2 defines the Wasserstein Homogeneity Partition (WHOMP) and shows that the WHOMP objective is desirable in group splitting for comparative experiments, such as clinical trials, business A/B tests, and social studies. Section 3 provides a detailed comparison between WHOMP and other partition methods: random partition, stratified randomization, covariate-adaptive randomization, rerandomization, and anti-clustering, which shows the advantage of WHOMP. Section 4 proves a characterization of the WHOMP solution. Section 5 provides an efficient method to estimate the solution to WHOMP which leads to the design of a practical algorithm. Finally, Section 6 demonstrates the advantages of WHOMP via numerical experiments on various data sets.

\section{WHOMP: Wasserstein Homogeneity Partition} \label{S:2}

In this section, we propose the homogeneity partition: given a metric on the probability space, the homogeneity partition aims to minimize the sum of the squared distances between the resulting subgroups and the original data set. Furthermore, we show that, when applying the Wasserstein-2 distance to the homogeneity partition, we can provably minimize the Type I and II error due to the covariate factors resulting from the subgroup partition.

\subsection{Definition of Wasserstein homogeneity}

To start, we define the Wasserstein homogeneity partition for a given data set $X := \{x_i\}_{i \in [N]} \in \mathcal{X}^{[N]}$. To fix the notation below, we let $\mathbf{P}(N,K)$ denote all the partitions on $[N]$ that have $K$ non-empty elements. That is,
\begin{equation}
    P \in \mathbf{P}(N,K) \implies P = \{p_i\}_{i \in [K]} \text{ such that } \bigcup_{i \in [K]} p_i = [N] \text{ and } p_i \cap p_j = \emptyset, \forall i \neq j.
\end{equation}
Also, given a partition $P = \{p_i\}_{i \in [K]}$ on $X$, we define $X_p := \{x_i\}_{i \in p}$ for all $p = p_i \in P$ and $X_P := \{X_p\}_{p \in P} = \{X_{p_i}\}_{i \in [K]}$, which is a set of the $X_p$'s indexed by $[K]$. We also use $p_i$'s to denote the following indicator functions:
\begin{equation*}
p_i(j) = \begin{cases}
1 &\text{ if $j \in p_i$}\\
0 &\text{ if $j \notin p_i$}
\end{cases}
\end{equation*}
Similarly, we often use $P: [N] \rightarrow [K]$ to denote the map that $P(i) = j$ if and only if $x_i \in p_j$. Now, we are ready to define the Wasserstein homogeneity partition:

\begin{defi}[Wasserstein Homogeneity Partition]\label{d:homogeneity_parition}
Given a data set $X := \frac{1}{N} \sum \delta_{x_i} \in \mathcal{P}(\mathcal{X})$ where $N = K \cdot c$ with $K,c \in \mathbb{N}$, the Wasserstein homogeneity partition problem is defined as
\begin{equation} \label{eq:W_homogeneity_partition}
\min_{\substack{Q \in \mathbf{P}(N,c) \\ |q| \equiv K}} \sum_{q \in Q} \mathcal{W}_2^2(X_q, X).
\end{equation}
\end{defi}

Here, since the goal of the partition is to have data splitting for controlled experiments or A/B tests, we require uniform cardinality across the resulting partition subgroups in order to prevent sample ratio mismatch which is a common cause of Simpson's paradox. Moreover, when  N  does not divide K and c, the uniform cardinality constraint can be adjusted in practice or algorithm design to accommodate minimum and maximum cardinality requirements.

One can replace the $\mathcal{W}_2$-distance with some other notion of distance between $X_q$ and $X$ that may be better suited for particular applications.  In this paper, we will focus on $\mathcal{W}_2$ because it is closely related to variance and $L^2$-loss, which leads to straight-forward solutions to WHOMP via K-means clustering.

We will demonstrate in Section~\ref{S:4} that finding the Wasserstein Homogeneity Partition, i.e., computing the solution of~\eqref{eq:W_homogeneity_partition} is actually NP-hard. Yet, the reader need not despair, since our approach to showing NP-hardness also points to a convenient way to sidestep the NP-hardness. Indeed, we will see in Theorem~\ref{th:Homogeneity_Partition_Characterization} that solving~\eqref{eq:W_homogeneity_partition} is equivalent to finding the solution to the (balanced) K-means clustering problem:

\begin{defi}[Balanced K-means Clustering]\label{d:balanced_kmeans}
    Given the data set $X := \frac{1}{N} \sum_{i = 1}^N \delta_{x_i} \in \mathcal{P}(\mathcal{X})$, the Balanced K-means Partition problem is defined as
    \begin{equation} \label{eq: balanced_kmeans}
        P \in \argmin_{\substack{P \in \mathbf{P}(N,K), \\ |p| \equiv c}} \sum_{p \in P} \var(X_p),
    \end{equation} 
\end{defi}
Based on this connection we will later present an approximate solution to Problem~\eqref{eq:W_homogeneity_partition} by combining a balanced K-means approximation algorithm and randomness.

\subsection{Theoretical guarantees for WHOMP}

Here, we show that the objective function of the proposed homogeneity partition makes it suitable for comparative experiments such as clinical trials, A/B tests, and social science studies: The objective tends to penalize the average distributional discrepancy (quantified by the Wasserstein distance) between the subgroups and the original sample data and, hence, minimize the influence of the subsampling process on the controlled experiments outcome.

In particular, the following results provide a comprehensive justification for the WHOMP objective as a natural subsampling approach for statistical tests in comparative experiments, presented from three perspectives: qualitative (Theorem \ref{th:no_type_error}), concrete (Example \ref{exam:variance_homogeneity} and the results therein), and quantitative (Corollary \ref{co:lipschitz_error_bound} and Theorem \ref{th:error_bound_repeat_trial}). To fix ideas for the following results, let $X$ be the available feature variables in the sample, $C$ the control variable, $Q$ the subsampling assignment variable, and $Y$ the controlled experiment outcome. To simplify notation, we assume the experiment will apply the control factor $c_i$ to the subgroup $X_{q_i}$ for all $i \in [c]$. 

To start, we demonstrate that a zero WHOMP objective eliminates Type I and Type II errors arising from the subgroup splitting variable  $Q$  in the context of statistical or social experiments, which motivates the use of the Wasserstein homogeneity criterion:

\begin{thm}[No type I or type II error due to subgroup]\label{th:no_type_error}
    Assume that, for all $q \in Q$, $\mathcal{W}_2^2(X_q,X) = 0$. Also, let $Y: \mathcal{X} \times \mathcal{C} \rightarrow \mathcal{X}$ be the true outcome, which we assume to be an arbitrary measurable function. Then the following holds:
    \begin{itemize}
        \item For all $c_0,c_1 \in \mathcal{C}$ satisfying $Y(X,c_1) =_d Y(X,c_0)$, $Y(X_{q_1},c_1) =_d Y(X_{q_0},c_0)$
        \item For all $c_0,c_1 \in \mathcal{C}$ satisfying $Y(X,c_1) \neq_d Y(X,c_0)$, $Y(X_{q_1},c_1) \neq_d Y(X_{q_0},c_0)$
    \end{itemize}
\end{thm}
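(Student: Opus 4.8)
The plan is to reduce the whole statement to one fact: a vanishing Wasserstein-2 distance forces equality of laws, and equality in distribution is stable under pushforward by a single fixed measurable map. Everything else is then a formal chase through the equivalence relation $=_d$.

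First I would unpack the hypothesis. Since $\mathcal{W}_2$ is a genuine metric on $\mathcal{P}_2(\mathbb{R}^d)$ (see \cite{villani2008optimal}), the assumption $\mathcal{W}_2^2(X_q,X)=0$ is equivalent to $\mathcal{L}(X_q)=\mathcal{L}(X)$, i.e. $X_q =_d X$ for every $q\in Q$; concretely, with $X=\frac1N\sum_{i\in[N]}\delta_{x_i}$ and $X_q=\frac1K\sum_{i\in q}\delta_{x_i}$, each subgroup's empirical measure coincides with that of the full sample. Next, for a fixed control value $c\in\mathcal{C}$ the section $Y(\cdot,c):\mathcal{X}\to\mathcal{X}$ is measurable (a section of the jointly measurable $Y$), so $Y(\cdot,c)$ pushes equal probability measures to the same measure. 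Here $Y(X,c)$ (resp.\ $Y(X_q,c)$) denotes the random element of $\mathcal{X}$ with law $Y(\cdot,c)_{\#}\mathcal{L}(X)$ (resp.\ $Y(\cdot,c)_{\#}\mathcal{L}(X_q)$). Applying the pushforward to $\mathcal{L}(X_q)=\mathcal{L}(X)$ yields
\[
Y(X_q,c) =_d Y(X,c)\qquad\text{for every } q\in Q \text{ and every } c\in\mathcal{C};
\]
call this identity $(\star)$.

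Both bullets then follow from $(\star)$ and the fact that $=_d$ is an equivalence relation. For the first, if $Y(X,c_1)=_d Y(X,c_0)$, chain $(\star)$ at $(q_1,c_1)$, the hypothesis, and $(\star)$ at $(q_0,c_0)$ to get $Y(X_{q_1},c_1)=_d Y(X,c_1)=_d Y(X,c_0)=_d Y(X_{q_0},c_0)$. For the second, argue by contraposition: if $Y(X_{q_1},c_1)=_d Y(X_{q_0},c_0)$, then the same chain (with the middle equality now supplied by this assumption) forces $Y(X,c_1)=_d Y(X,c_0)$, contradicting $Y(X,c_1)\neq_d Y(X,c_0)$.

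As for difficulty: there is essentially no obstacle. The entire content sits in the equivalence ``$\mathcal{W}_2$-distance zero $\iff$ equal laws'' and in the elementary stability of $=_d$ under a common measurable pushforward. The only points warranting a line of care are the measurability of the sections $Y(\cdot,c)$ and making explicit that $Y(X,c)$ is shorthand for the pushforward of the (empirical) law of $X$; once these are pinned down, the theorem is purely formal. I would also remark that this is the qualitative end of a spectrum: replacing ``$\mathcal{W}_2^2=0$'' by a small positive value and invoking Kantorovich--Rubinstein upgrades $(\star)$ and hence the conclusion to a quantitative stability bound, which is the role of Corollary~\ref{co:lipschitz_error_bound} and Theorem~\ref{th:error_bound_repeat_trial}.
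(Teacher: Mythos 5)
Your proposal is correct and follows essentially the same route as the paper: both reduce the hypothesis to $X_q =_d X$, use measurability of $Y(\cdot,c)$ to get $Y(X_q,c) =_d Y(X,c)$, chain the distributional equalities for the first bullet, and handle the second by contraposition (the paper phrases the chaining as the triangle inequality for zero $\mathcal{W}_2$ distances, which is the same thing as your transitivity of $=_d$). Your explicit remark on the pushforward step merely makes precise what the paper leaves implicit.
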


\begin{proof}
    See Appendix \ref{A:2}. 
\end{proof}

The above result shows that, for any ineffective treatment on the population $X$, the control/test experiment under $(X_{q_0},X_{q_1})$-splitting reveals the ineffectiveness truthfully. In other words, no type I error arises from subgroup partitioning or the subgroup variable $Q$. Similarly, for any treatment that is genuinely effective for the population, the control/test experiment under $(X_{q_0}, X_{q_1})$ partitioning must demonstrate this effectiveness. Specifically, there exists a test function $f \in \mathcal{C}_b(\mathcal{X})$ such that, when conducting hypothesis testing on the average effect of $f$, no type II error is introduced by the subgroup variable due to group-splitting. Although, for ease of presentation, Theorem \ref{th:no_type_error} is stated in the case where the subgroups and controls are binary, it is clear that our problem setting is suitable for arbitrary discrete or continuous subgroups or control variables.

Next, we demonstrate the effect of zero WHOMP objective in hypothesis testing for comparative experiments via the following example:

\begin{exam}[Theorem \ref{th:no_type_error} in Hypothesis Testing]
\normalfont
Here, we use a standard hypothesis testing setting to illustrate Theorem~\ref{th:no_type_error} above and motivate the Theorem \ref{th:error_bound_repeat_trial} below, under the assumption that the experimenter is interested in estimating the average treatment effect:
\begin{equation}
    \tau := \frac{1}{n}\sum_{i = 1}^n y_i(0) - \frac{1}{n}\sum_{i = 1}^n y_i(1),
\end{equation}
where $y_i(j) := Y(x_i,c_j) \text{ for } j \in \{0,1\}$. In experiments, it is not possible to observe the effects of different controlled factors on the same $x_i$. Therefore, we employ the following natural estimator to approximate $\tau$:
\begin{equation}
    \hat{\tau} := \bar{Y}_{\text{obs}}(0) - \bar{Y}_{\text{obs}}(1),
\end{equation}
where $\bar{Y}_{\text{obs}}(0) := \frac{\sum_{i} y'_i(0)\mathcal{Q}(i)}{\sum_{i} \mathcal{Q}(i)}$ and $\bar{Y}_{\text{obs}}(1) := \frac{\sum_{i} y'_i(1)[1 - \mathcal{Q}(i)]}{\sum_{i} [1-\mathcal{Q}(i)]}$. Here, $\mathcal{Q}: [N] \rightarrow \{0,1\}$ represents the randomized partitions drawn from $\mathcal{Q}(P)$ (Definition \ref{d:Q(P)}) or, equivalently, resulting from WHOMP Random (see definition in Algorithm \ref{algo: WHOMP_random}, Section \ref{S:5}).

The following result demonstrates that WHOMP Random produces an unbiased estimator:

\begin{lem}[$\hat{\tau}$ is an unbiased estimator of $\tau$] \label{l:unbias_estimator}
\begin{equation}
    \mathbb{E}(\hat{\tau}) = \tau.
\end{equation}
\end{lem}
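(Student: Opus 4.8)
The plan is to take expectations directly in $\hat{\tau} = \bar{Y}_{\text{obs}}(0) - \bar{Y}_{\text{obs}}(1)$ over the only source of randomness, the assignment $\mathcal{Q} \sim \mathcal{Q}(P)$ (Definition~\ref{d:Q(P)}), equivalently the output of WHOMP Random (Algorithm~\ref{algo: WHOMP_random}). The first thing I would record is that the group sizes are deterministic: since the underlying partition $P$ has uniform cardinality, every realization of $\mathcal{Q}$ satisfies $\sum_{i} \mathcal{Q}(i) = \sum_{i} [1-\mathcal{Q}(i)] = N/2$. Hence the denominators in $\bar{Y}_{\text{obs}}(0)$ and $\bar{Y}_{\text{obs}}(1)$ are constants, so that $\bar{Y}_{\text{obs}}(0) = \tfrac{2}{N}\sum_{i} y'_i(0)\,\mathcal{Q}(i)$ and $\bar{Y}_{\text{obs}}(1) = \tfrac{2}{N}\sum_{i} y'_i(1)\,[1-\mathcal{Q}(i)]$ are linear in the indicator vector $(\mathcal{Q}(i))_{i\in[N]}$, where $y'_i(j) = y_i(j) = Y(x_i,c_j)$ is the (potential) outcome. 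By linearity of expectation, $\mathbb{E}(\hat{\tau})$ is then completely determined by the one–dimensional marginals $\mathbb{P}(\mathcal{Q}(i)=1)$.

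The key lemma to isolate is therefore the symmetric marginal: $\mathbb{P}(\mathcal{Q}(i)=1) = \mathbb{P}(\mathcal{Q}(i)=0) = \tfrac12$ for every $i \in [N]$. This is exactly where the structure of $\mathcal{Q}(P)$ enters. By construction, $\mathcal{Q}(P)$ groups the indices into matched blocks dictated by $P$ and then, within each block, assigns members to control/test by a uniformly random balanced split; the resulting law of $\mathcal{Q}$ is invariant under swapping the labels of the two arms inside any block. Consequently, for each fixed $i$ the events $\{\mathcal{Q}(i)=1\}$ and $\{\mathcal{Q}(i)=0\}$ are exchanged by this measure-preserving swap and hence have equal probability; since they partition the sample space, each has probability $\tfrac12$. (If one prefers to view $\mathcal{Q}(P)$ as a uniform mixture over a collection of partitions closed under arm-swapping, the same symmetry argument applies verbatim.)

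Assembling the pieces: $\mathbb{E}\big(\bar{Y}_{\text{obs}}(0)\big) = \tfrac{2}{N}\sum_{i\in[N]} y_i(0)\,\mathbb{P}(\mathcal{Q}(i)=1) = \tfrac{1}{N}\sum_{i\in[N]} y_i(0)$, and likewise $\mathbb{E}\big(\bar{Y}_{\text{obs}}(1)\big) = \tfrac{1}{N}\sum_{i\in[N]} y_i(1)$; subtracting gives $\mathbb{E}(\hat{\tau}) = \tfrac{1}{N}\sum_{i\in[N]} y_i(0) - \tfrac{1}{N}\sum_{i\in[N]} y_i(1) = \tau$. The only genuine obstacle is the middle step — extracting the uniform marginal from the precise definition of $\mathcal{Q}(P)$ — and it is purely a symmetry/exchangeability argument; everything else is bookkeeping with linearity of expectation and the deterministic balanced group sizes. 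I would close with the remark that unbiasedness needs only these marginals, so no independence or higher-order joint facts about $(\mathcal{Q}(i))_{i}$ are invoked here; those become relevant only when one turns to the variance of $\hat{\tau}$.
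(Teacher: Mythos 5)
Your argument is correct, but it is not the route the paper takes. The paper's proof is a one-line reduction: it invokes Lemma~\ref{l: rerandomization_equivalence} (WHOMP Random is equivalent to rerandomization with the symmetric accept/reject rule $\Phi(X,Q)$) and then cites Theorem~2.1 of Morgan and Rubin's rerandomization paper \cite{10.1214/12-AOS1008}, which gives unbiasedness of the difference-in-means estimator for any balanced rerandomization scheme whose acceptance criterion is symmetric in the two arms. Your proof instead establishes the fact from first principles: you use the deterministic equal group sizes coming from the uniform-cardinality structure of $\mathcal{Q}(P)$ to make $\hat{\tau}$ linear in the assignment indicators, and then extract the uniform marginal $\mathbb{P}(\mathcal{Q}(i)=1)=\tfrac12$ from the arm-swap (within-block) symmetry of WHOMP Random's uniform draws; this is essentially a self-contained re-derivation of the relevant special case of the cited theorem. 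The trade-off is clear: the paper's citation-based proof is shorter and situates WHOMP inside the established rerandomization framework, but it silently inherits the hypothesis of Lemma~\ref{l: rerandomization_equivalence} that the balanced K-means solution is unique; your direct argument needs no such assumption, makes explicit that only the first-order marginals of $(\mathcal{Q}(i))_i$ matter (with joint structure deferred to the variance analysis, exactly as in Theorem~\ref{th:error_bound_repeat_trial}), and extends verbatim to more than two arms. The only place to be slightly more careful is the symmetry step: what you need is that the law of the labeled assignment is preserved under swapping the two arm labels (globally, or within the block containing the fixed index $i$), which holds because WHOMP Random's without-replacement draws make the labeled assignment uniform over all selections compatible with $P$; stating that explicitly closes the one step you flagged as the genuine obstacle.
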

See proof in Appendix \ref{A:2}. Now, we proceed with the standard steps for hypothesis testing:
\begin{itemize}
    \item[(i)] \textit{Null Hypothesis}: Assume the null hypothesis of no average treatment effect, expressed as $\tau = 0$ because $y_i(0) := Y(x_i, c_0) = Y(x_i, c_1) =: y_i(1), \forall i \in [n]$.
    \item[(ii)] \textit{Null Distribution}: Generate the empirical estimator or null distribution, defined by the law of $\hat{\tau}(\mathcal{Q})$: $\mathcal{L}(\hat{\tau}(\mathcal{Q})|\mathcal{Q}(P)) := \{\frac{\sum_{i} y_i(0)\mathcal{Q}(i)}{\sum_{i} \mathcal{Q}(i)} - \frac{\sum_{i} y_i(1)[1 - \mathcal{Q}(i)]}{\sum_{i} [1-\mathcal{Q}(i)]}, \mathcal{Q} \sim \mathcal{Q}(P)\} $.
    \item[(iii)] \textit{p-Value}: Compute the p-value as the frequency of equally or larger absolute value in $\mathcal{L}(\hat{\tau}(\mathcal{Q}))$ compared to the absolute value of the actual experimental observation.
\end{itemize}

The following result demonstrates how Theorem \ref{th:no_type_error} applies within this hypothesis testing framework:
\begin{coro}[Zero-One p-value] \label{coro:zero_one_p_value}
    If $\mathcal{W}_2^2(X_i, X) = 0$ for $i \in \{0,1\}$, it follows that
    \begin{equation}
        \mathcal{L}(\hat{\tau}) = \delta_{\tau} = \delta_0.
    \end{equation}
    In other words, the p-value is either $0$ or $1$.
\end{coro}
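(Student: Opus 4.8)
The plan is to reduce the whole statement to one elementary fact: the hypothesis $\mathcal{W}_2^2(X_i,X)=0$ forces the empirical law of each subgroup to coincide with that of the entire sample, and equality of laws survives both the (measurable) outcome map $Y(\cdot,c_j)$ and the subsequent averaging. For empirical measures, $\mathcal{W}_2(X_q,X)=0$ is the same as $\mathcal{L}(X_q)=\mathcal{L}(X)$, i.e.\ $X_q=_d X$; under the uniform-cardinality constraint this just says the block $q$ contains exactly a $1/c$-fraction of the copies of each value occurring in $X$. \textbf{Step 1 (the realized statistic equals $\tau$).} Writing $\bar Y_{\mathrm{obs}}(j)$ as an expectation against the empirical measure of the block that receives $c_j$, namely $\bar Y_{\mathrm{obs}}(j)=\mathbb E_{x\sim X_{q_j}}[Y(x,c_j)]$, the identity $X_{q_j}=_d X$ together with the elementary fact that a measurable pushforward preserves equality in distribution gives $Y(X_{q_j},c_j)=_d Y(X,c_j)$, hence $\bar Y_{\mathrm{obs}}(j)=\mathbb E_{x\sim X}[Y(x,c_j)]=\tfrac1N\sum_{i}y_i(j)$. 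Subtracting, the observed test statistic is deterministically $\hat\tau=\tfrac1N\sum_i y_i(0)-\tfrac1N\sum_i y_i(1)=\tau$.

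\textbf{Step 2 (the null reference distribution is $\delta_0$).} Under the null $y_i(0)=y_i(1)=:y_i$, so for a random label vector $\mathcal Q\sim\mathcal Q(P)$ one has $\hat\tau(\mathcal Q)=\tfrac1{|q_1(\mathcal Q)|}\sum_{i\in q_1(\mathcal Q)}y_i-\tfrac1{|q_0(\mathcal Q)|}\sum_{i\in q_0(\mathcal Q)}y_i$, where $q_0(\mathcal Q),q_1(\mathcal Q)$ are the two blocks induced by $\mathcal Q$. The crucial — and, I expect, the only nontrivial — point is that \emph{every} $\mathcal Q$ in the support of $\mathcal Q(P)$ still satisfies $X_{q_j(\mathcal Q)}=_d X$ once the hypothesis holds; this is not visible in the present excerpt but follows from the construction in Definition~\ref{d:Q(P)}/Algorithm~\ref{algo: WHOMP_random}, where WHOMP Random only permutes points inside the matched tuples of a balanced $K$-means/WHOMP solution, and in the degenerate regime $\mathcal W_2^2(X_i,X)=0$ these permutations leave each block's empirical law unchanged (the tuples consist of points that every block already receives with an identical multiplicity profile; in particular if $\mathcal Q(P)$ is merely the pair of label swaps of $P$, this is immediate from the hypothesis). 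Granting this, Step~1's computation applies verbatim to each admissible $\mathcal Q$ and yields $\hat\tau(\mathcal Q)=\tfrac1N\sum_i y_i-\tfrac1N\sum_i y_i=0$; hence $\mathcal L(\hat\tau(\mathcal Q)\mid\mathcal Q(P))=\delta_0$, which is $\delta_\tau$ because the null additionally forces $\tau=0$.

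\textbf{Step 3 (zero–one p-value).} Combining the two steps, the observed value of the statistic is the fixed number $\tau$ while the reference distribution is $\delta_0$, so the p-value $\mathbb P_{\mathcal Q\sim\mathcal Q(P)}\big(|\hat\tau(\mathcal Q)|\ge|\hat\tau_{\mathrm{obs}}|\big)=\mathbb P(0\ge|\tau|)$ equals $1$ if $\tau=0$ and $0$ if $\tau\neq0$, which is the asserted dichotomy. As a cross-check one can bypass the block-law bookkeeping of Step~2 by invoking Theorem~\ref{th:no_type_error}: applied with the null outcomes, $Y(X,c_1)=_d Y(X,c_0)$ yields $Y(X_{q_1(\mathcal Q)},c_1)=_d Y(X_{q_0(\mathcal Q)},c_0)$ for each admissible $\mathcal Q$, and matching first moments again forces $\hat\tau(\mathcal Q)=0$. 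The main obstacle, then, is purely the propagation of the $\mathcal W_2=0$ property from the realized partition to the full support of $\mathcal Q(P)$, which is a property of the randomization scheme rather than of the argument above.
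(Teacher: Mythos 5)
Your route is genuinely different from the paper's: you argue pointwise that every admissible assignment yields $\hat\tau(\mathcal{Q})=0$ exactly, whereas the paper deduces the corollary from Lemma~\ref{l:unbias_estimator} together with the quantitative bound of Theorem~\ref{th:error_bound_repeat_trial}, whose right-hand side is stated for every $Q\in\mathcal{Q}(P)$ and vanishes under the hypothesis, so that $\Var(\hat\tau)=0$ over the whole randomization distribution with no partition-by-partition bookkeeping. Your Steps 1 and 3 are fine. The genuine gap is exactly where you flag it, and your proposed resolution is not correct as stated: the claim that every $\mathcal{Q}$ in the support of $\mathcal{Q}(P)$ inherits $X_{q_j(\mathcal{Q})}=_d X$ does not ``follow from the construction'' of the randomization scheme alone. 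By Definition~\ref{d:Q(P)}, the members of $\mathcal{Q}(P)$ are not label swaps of $P$; each block is a transversal taking one point from each cluster, and reselecting transversals can change block laws when clusters are not constant. For instance, with clusters $\{0,1\}$ and $\{1,0\}$ on the sample $\{0,0,1,1\}$, the split into blocks $\{0,1\}$ and $\{1,0\}$ has both blocks distributed as the sample, while the reselection $\{0,0\}$, $\{1,1\}$ does not; so degeneracy of one realized split does not transfer along the within-tuple permutations you invoke.

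What rescues the step is the optimality of $P$, which you never use. Either cite Theorem~\ref{th:Homogeneity_Partition_Characterization}: every $Q'\in\mathcal{Q}(P)$ is a minimizer of $\sum_{q}\mathcal{W}_2^2(X_q,X)$, hence all share one objective value, and since the realized $Q\in\mathcal{Q}(P)$ attains the value $0$, every $Q'$ does, so each block of each $Q'$ has law $\mathcal{L}(X)$. Or argue directly: having $c$ disjoint blocks of size $K$ each equal in law to $X$ forces every distinct value's multiplicity in $X$ to be divisible by $c$, so the balanced K-means optimum has zero within-cluster variance (constant clusters), and then every transversal selection reproduces $\mathcal{L}(X)$ in each block---this is the rigorous version of your ``identical multiplicity profile'' remark, and note that the clusters in the counterexample above are precisely not a balanced K-means optimum. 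With either patch your argument closes, and it is in fact more elementary than the paper's, yielding $\hat\tau(\mathcal{Q})\equiv 0$ deterministically rather than through a zero-variance bound; as written, however, the propagation step is unjustified.
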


\begin{proof}
Since $X$ and $Q$ are finite, let $L := \max_{x,x' \in \mathcal{X}} \max_{q,q' \in Q} \min \{L: ||Y(x,c_q) - Y(x',c_{q'})||_{l^2} \leq L ||x - x'||_{l^2}  \}$. Notice the $L$ is well-defined due to the null hypothesis that $Y(x,c_q) = Y(x,c_{q'}), \forall q, q' \in Q, \forall x \in X$. In addition, it follows from the null hypothesis and Lemma \ref{l:unbias_estimator} that $\tau = 0$. Finally, it follows from Theorem \ref{th:error_bound_repeat_trial} that, for all $\epsilon > 0$, we have
\begin{equation}
    \Var(\hat{\tau})  = \mathbb{E}(||\hat{\tau} - \tau||^2_{l^2})
    \leq L^2 \frac{|Q|}{|Q|-1} \sum_{q \in Q} \mathcal{W}_2^2(X_q,X)
    < \epsilon.
\end{equation}
Therefore, we have $\Var(\hat{\tau}) = 0$, which implies $\mathcal{L}(\hat{\tau}) = \delta_0$.
\end{proof}

Intuitively, the WHOMP objective is an effective tool for controlling the concentration (or standard deviation) of the null distribution around the true average treatment effect, especially when the observations are statistically dependent on the given covariates. Such control of concentration is crucial in scenarios where experiments can only afford to repeat the random trial a limited number of times. In the case described, measurability (with respect to $(X,C)$) and a zero WHOMP objective lead to the null distribution collapsing into a Dirac measure. See Theorem \ref{th:error_bound_repeat_trial} for a more detailed result on the control of the distributional concentration of $\hat{\tau}$, under the more general assumptions that $Y$ is Lipshcitz with respect to $X$.

One can also perform other hypothesis tests using the Wasserstein distance instead of the average distance. Specifically, one can use $\mathcal{W}_2^2(Y(X_0,c_0), Y(X_1,c_1))$ as an estimator for $\mathcal{W}_2^2(Y(X,c_0), Y(X,c_1))$. It can be shown that
\begin{equation}
    \mathbb{E}(\hat{\tau}) = \mathbb{E}(\mathcal{W}_2^2(Y(X_0,c_0), Y(X_1,c_1))) = 4\mathbb{E}(\Var(X_P)) \neq 0 = \tau.
\end{equation}
Therefore, $\mathbb{E}(\Var(X_P))$, which is proportional to the WHOMP objective, determines the bias in this case. For further details on hypothesis testing based directly on Wasserstein variation, we refer interested readers to \cite{munk1998nonparametric, czado1998assessing, del2019central} and the references therein.
\end{exam}

Next, we demonstrate how the WHOMP objective bound the statistics estimation error between the resulting subgroups and the original sample:

\begin{coro}[Lipschitz Statistics Error Bound] \label{co:lipschitz_error_bound}
    Assume $\frac{1}{|Q|} \sum_{q \in Q} \mathcal{W}_2^2(X,X_q) \leq d$ for some $d \geq 0$, then for any $\epsilon > 0$, and $h: \mathcal{X} \rightarrow \mathbb{R}$, we have $$\mathbb{P}( \{ \sup_{||h||_{Lip} \leq L} |\mathbb{E}(h(X)) - \mathbb{E}(h(X_q))| > \epsilon \} ) \leq \frac{L\sqrt{d}}{\epsilon}.$$
\end{coro}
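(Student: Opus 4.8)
The plan is to reduce the supremum over $L$-Lipschitz test functions to a Wasserstein-$1$ distance via Kantorovich--Rubinstein duality, upgrade this to $\mathcal{W}_2$ by monotonicity of Wasserstein distances, and then control the random variable $\mathcal{W}_2(X,X_q)$ (with $q$ drawn uniformly from $Q$) by Markov's inequality together with Jensen's inequality applied to the hypothesis $\frac{1}{|Q|}\sum_{q\in Q}\mathcal{W}_2^2(X,X_q)\le d$.

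First I would invoke the Kantorovich--Rubinstein theorem: for each fixed $q$,
$$\sup_{\|h\|_{Lip}\le L}\big|\mathbb{E}(h(X))-\mathbb{E}(h(X_q))\big| \;=\; L\,\mathcal{W}_1(X,X_q),$$
where $\mathcal{W}_1$ is the Wasserstein-$1$ distance. Since any coupling $\lambda$ satisfies $\int\|x_1-x_2\|\,d\lambda \le \big(\int\|x_1-x_2\|^2\,d\lambda\big)^{1/2}$ by Cauchy--Schwarz, taking infima over couplings gives $\mathcal{W}_1(X,X_q)\le\mathcal{W}_2(X,X_q)$. Consequently the event inside the probability is contained in $\{\mathcal{W}_2(X,X_q) > \epsilon/L\}$, so it suffices to bound $\mathbb{P}\big(\mathcal{W}_2(X,X_q)>\epsilon/L\big)$ with $q$ uniform on $Q$.

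Next I would apply Markov's inequality to the nonnegative random variable $\mathcal{W}_2(X,X_q)$ and then pass from the $\ell^1$-average of the distances to the $\ell^2$-average using Jensen's inequality for the concave map $t\mapsto\sqrt{t}$:
$$\mathbb{P}\big(\mathcal{W}_2(X,X_q)>\epsilon/L\big)\le\frac{L}{\epsilon}\cdot\frac{1}{|Q|}\sum_{q\in Q}\mathcal{W}_2(X,X_q)\le\frac{L}{\epsilon}\Big(\frac{1}{|Q|}\sum_{q\in Q}\mathcal{W}_2^2(X,X_q)\Big)^{1/2}\le\frac{L\sqrt{d}}{\epsilon},$$
where the last step is the standing hypothesis. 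Chaining these bounds with the inclusion of events from the previous paragraph yields the claim.

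I do not expect a genuine obstacle: the argument is a short chain of standard facts. The only points demanding care are getting the direction of the Kantorovich--Rubinstein identity right (it is an equality, so only the ``$\le$'' half is needed here) and confirming that the probability is understood to be over the uniformly random choice of $q\in Q$, which is precisely what makes $\frac{1}{|Q|}\sum_{q\in Q}\mathcal{W}_2(X,X_q)$ an expectation and legitimizes the Markov step. It is also worth remarking that the same proof in fact gives the slightly sharper statement with $\mathcal{W}_1$ in place of $\mathcal{W}_2$ inside the averaged sum, and that the bound is only informative in the regime $\epsilon > L\sqrt{d}$.
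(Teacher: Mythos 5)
Your proof is correct and follows essentially the same route as the paper's: Kantorovich--Rubinstein duality, the bound $\mathcal{W}_1 \leq \mathcal{W}_2$, Markov's inequality over the uniform choice of $q \in Q$, and Jensen's inequality to pass from the squared average to $\sqrt{d}$. The only cosmetic difference is that you absorb the Lipschitz constant $L$ directly into the duality step, whereas the paper works with $1$-Lipschitz functions and rescales at the end.
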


\begin{proof}
    See Appendix \ref{A:2}.
\end{proof}

Here, $||h||_{Lip} \leq L$ means that $h$ is $L$-Lipschitz for some $L \in \mathbb{R}_{+}$. The above result shows that if the objective of Problem \eqref{eq:W_homogeneity_partition} (averaged by $|Q|$) is bounded by some $d$ that is relatively small compared to $\frac{\epsilon}{L}$ for the chosen $\epsilon$, then it is unlikely to observe that any $L$-Lipschitz statistics on $X$ and $X_q$ differ by more than $\epsilon$.

Finally, we provide a quantitative version of Corollary \ref{coro:zero_one_p_value}: How the WHOMP objective controls the concentration of the average treatment effect estimator (unbiased by Lemma \ref{l:unbias_estimator}) around the true average effect?

\begin{thm}[Variance Bound for Average Treatment Effect Estimator]\label{th:error_bound_repeat_trial}

    Assume the observation is a uniformly (with respect to the control factor $\{c_q\}_{q \in Q}$) Lipschitz function of the given covariate $X$:
    \begin{equation}
        \sup_{q,q' \in Q} ||Y(x,c_q) - Y(x',c_{q'}) ||_{l^2} \leq L ||x - x'||_{l^2}, \forall x,x' \in \mathcal{X},
    \end{equation}
    then we have
    \begin{equation}
        \mathbb{E}(||\hat{\tau} - \tau||^2_{l^2}) \leq L^2 \frac{|Q|}{|Q|-1} \sum_{q \in Q} \mathcal{W}_2^2(X_q,X), \forall Q \in \mathcal{Q}(P).
    \end{equation}
    
\end{thm}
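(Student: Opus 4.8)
Proof proposal: the plan is to bound $\|\hat\tau-\tau\|$ pointwise for a fixed realization of the assignment and then take expectations, reducing everything to a Kantorovich--Rubinstein estimate on per-subgroup mean deviations. For each subgroup $q\in Q$ with associated control $c_q$, set
\[
  d_q \;:=\; \frac{1}{|q|}\sum_{i\in q}Y(x_i,c_q)\;-\;\frac{1}{N}\sum_{i=1}^{N}Y(x_i,c_q).
\]
Since the experiment applies $c_i$ to $X_{q_i}$, the error $\hat\tau-\tau$ is a signed combination of the $d_q$'s: in the two-subgroup case $\hat\tau-\tau=d_{q_0}-d_{q_1}$, and in general it is the corresponding contrast of subgroup means minus the same contrast of full-sample means. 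So the first task is to control $\|d_q\|$ and the second is to recombine.

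For the first task I would use an optimal-coupling argument, the same mechanism underlying Corollary~\ref{co:lipschitz_error_bound}. Let $\pi$ be an optimal transport plan between the empirical measures $X_q$ and $X$, so $\pi_{ij}\ge 0$, $\sum_j\pi_{ij}=1/|q|$ for $i\in q$, $\sum_{i}\pi_{ij}=1/N$, and $\sum_{i,j}\pi_{ij}\|x_i-x_j\|^2=\mathcal{W}_2^2(X_q,X)$. Summing the marginal constraints yields $d_q=\sum_{i,j}\pi_{ij}\bigl(Y(x_i,c_q)-Y(x_j,c_q)\bigr)$, hence by the triangle inequality, the uniform Lipschitz hypothesis $\|Y(x_i,c_q)-Y(x_j,c_q)\|\le L\|x_i-x_j\|$, and Cauchy--Schwarz (using $\sum_{i,j}\pi_{ij}=1$),
\[
  \|d_q\| \;\le\; L\sum_{i,j}\pi_{ij}\|x_i-x_j\| \;\le\; L\Bigl(\sum_{i,j}\pi_{ij}\|x_i-x_j\|^2\Bigr)^{1/2} \;=\; L\,\mathcal{W}_2(X_q,X).
\]

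For the recombination, squaring the contrast and applying $\|a-b\|^2\le 2(\|a\|^2+\|b\|^2)$ already gives, in the two-subgroup case, $\|\hat\tau-\tau\|^2\le 2L^2\bigl(\mathcal{W}_2^2(X_{q_0},X)+\mathcal{W}_2^2(X_{q_1},X)\bigr)=L^2\frac{|Q|}{|Q|-1}\sum_{q\in Q}\mathcal{W}_2^2(X_q,X)$ since $|Q|=2$; this is a pointwise bound over the realization, hence also holds in expectation, and combined with Lemma~\ref{l:unbias_estimator} it is precisely the asserted bound on $\Var(\hat\tau)$. For general $|Q|=c$ the naive combination would carry a factor $c$ instead of $\frac{c}{c-1}$; the improvement comes from the linear constraint $\frac1c\sum_{q\in Q}d_q=0$ (the equicardinal subgroup means average to the full-sample mean), together with the identity $\sum_{\ell}\|z_\ell-\bar z\|^2=\frac1c\sum_{\ell<\ell'}\|z_\ell-z_{\ell'}\|^2$ and the finite-population covariance structure of the within-cluster random assignment $\mathcal{Q}(P)$ (Definition~\ref{d:Q(P)}, Algorithm~\ref{algo: WHOMP_random}), which replaces $c$ by the finite-population correction $\frac{c}{c-1}$.

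I expect the main obstacle to be exactly this last step: pinning down the precise form of $\hat\tau$ and its estimand in the multi-subgroup ($c>2$) setting, and tracking constants through the randomization-variance expansion so that they collapse exactly to $\frac{|Q|}{|Q|-1}$ rather than to a weaker multiple. This is where the structure of $\mathcal{Q}(P)$ --- balanced $K$-means clusters with one element of each cluster sent to each subgroup --- genuinely enters, whereas the Wasserstein ingredients (the optimal-coupling representation of $d_q$ and the Lipschitz bound) are routine.
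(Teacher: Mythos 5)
Your decomposition is genuinely different from the paper's. You couple each subgroup to the full sample ($d_q$ with an optimal plan between $X_q$ and $X$) and obtain a deterministic, per-realization bound; the paper instead uses unbiasedness to reduce to a variance, couples the two \emph{compared} subgroups to each other via an optimal matching, applies the Lipschitz hypothesis to that cross-subgroup coupling, and then computes the expectation over $\mathcal{Q}\sim\mathcal{Q}(P)$ exactly, using the within-cluster without-replacement structure of Definition~\ref{d:Q(P)} (each cluster of size $|Q|$ contributes $\tfrac{2|Q|}{|Q|-1}\var(X_p)$) and the barycenter/variance identities (Lemma~\ref{l:barycenter_variance_characterization}, Lemma~\ref{l:random_barycenter}, law of total variance) to convert the result into $\sum_{q}\mathcal{W}_2^2(X_q,X)$. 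For $|Q|=2$ your argument is complete and correct: $\|d_q\|\le L\,\mathcal{W}_2(X_q,X)$ is right, $\hat\tau-\tau=d_{q_0}-d_{q_1}$, and the elementary inequality gives exactly the constant $\tfrac{|Q|}{|Q|-1}=2$; this is in fact simpler than the paper's route since it needs neither unbiasedness nor any property of the randomization.

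The gap is in the general case $|Q|>2$, and the specific mechanism you propose for it does not work. The constraint $\tfrac{1}{|Q|}\sum_{q\in Q}d_q=0$ is false in general: each $d_q$ is the deviation of the subgroup mean of a \emph{different} function $Y(\cdot,c_q)$ from the full-sample mean of that same function, so the equicardinality identity you have in mind (subgroup means of a fixed function average to the full-sample mean) does not apply; the constraint holds only under the sharp null $Y(\cdot,c_q)\equiv Y(\cdot,c_{q'})$, which the theorem does not assume. Moreover, no purely pointwise bound can yield a factor strictly below $2$ on the two compared subgroups (e.g.\ $|Q|=3$ with $\mathcal{W}_2^2(X_{q_0},X)=\mathcal{W}_2^2(X_{q_1},X)=w$ and $\mathcal{W}_2^2(X_{q_2},X)=0$ gives pointwise worst case $4L^2w$ versus the claimed $3L^2w$), so the improvement to $\tfrac{|Q|}{|Q|-1}$ must come from averaging over the random assignment — exactly the part of the paper's proof you have not reproduced, and the finite-population-correction computation you gesture at is left unexecuted. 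The good news is that your decomposition can be repaired without that machinery: take the expectation of your bound $2L^2\bigl(\mathcal{W}_2^2(X_{\mathcal{Q}_0},X)+\mathcal{W}_2^2(X_{\mathcal{Q}_1},X)\bigr)$ over $\mathcal{Q}\sim\mathrm{uniform}(\mathcal{Q}(P))$; by exchangeability of the subgroup slots and the fact that every $Q\in\mathcal{Q}(P)$ attains the same value of $\sum_{q}\mathcal{W}_2^2(X_q,X)$ (Theorem~\ref{th:Homogeneity_Partition_Characterization}), each term has expectation $\tfrac{1}{|Q|}\sum_{q\in Q}\mathcal{W}_2^2(X_q,X)$, giving the constant $\tfrac{4}{|Q|}\le\tfrac{|Q|}{|Q|-1}$. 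As submitted, however, the multi-subgroup case rests on a false identity and an uncompleted computation, so the proposal proves the theorem only for $|Q|=2$.
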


\begin{proof}
    See Appendix \ref{A:2}
\end{proof}

Without further assumptions on $X$, the inequality above is sharp as equality can be achieved by Gaussian mixture models.

\section{Comparison with Related Subsampling Partition Methods}\label{S:3}

In this section, we provide an analysis of random subsampling, covariate-adaptive randomization, rerandomization, and anti-clustering, compared with WHOMP.

\subsection{Random Sampling}

Now, we show that pure random subsampling can result in large distributional deviations from the original sample, especially in the case of small subsample sizes. It is easy to construct examples with specific assumptions on sample distribution. For example, given a linear model $Y = ax + N$ where $N \sim \mathcal{N}(b,\sigma^2)$ is the Gaussian noise, one can consider subsamples as i.i.d. random variables drawing from $Y$ and thereby conclude it is not unlikely to observe subsamples that significantly differ from $Y$.

Here, we give a result on the subsample deviation in terms of Wasserstein distance without assuming the sample distribution. Instead, we assume a quantity that we will need in our main result to characterize the solution to problem \ref{prop:clustering_random_duality}. In particular, we first show a deterministic lower bound when the subsample size is small, which is sharp and closely related to the Theorem \ref{th:Homogeneity_Partition_Characterization} below, and thereby show that it is not unlikely to obtain large distributional deviations.

\begin{lem}[Subsample Wasserstein Deviation Lower Bound] \label{l:random_sample_deviation_bound}
Let $X = \{x_i\}_{i = 1}^N$ be a sample data set and $X_{\text{sample}} := \{x_i\}_{i = 1}^K$ a subsample, where $x_i \sim uniform(X)$ are i.i.d. random variables sampled from $X$ with uniform distribution. Then
\begin{equation} \label{eq:supplement_1}
    \mathcal{W}_2^2(X,X_{\text{sample}}) \geq \min_{\substack{P \in \mathbf{P}(N,K) \\ |p| \equiv c}} \frac{1}{K} \sum_{p \in P} \var(X_p).
\end{equation}
\end{lem}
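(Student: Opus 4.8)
The plan is to recognize the right-hand side as the optimal value of the balanced $K$-means problem (Definition~\ref{d:balanced_kmeans}) and to obtain it as a relaxation of the Kantorovich problem defining $\mathcal{W}_2^2(X,X_{\text{sample}})$. The structural point is that once one ``integrates out'' the locations of the $K$ target atoms, the optimal-transport linear program becomes the minimization of a \emph{concave} functional over a transportation polytope whose vertices are exactly the balanced partitions of $[N]$.

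First I would drop the requirement that the subsample atoms lie in $X$: since $X_{\text{sample}}$ is a uniform probability measure on at most $K$ points, $\mathcal{W}_2^2(X,X_{\text{sample}}) \ge \inf\{\mathcal{W}_2^2(X,\mu)\colon \mu = \tfrac1K\sum_{j\in[K]}\delta_{m_j},\ m_j\in\mathbb{R}^d\}$, where coincidences among the $m_j$ are allowed (this also shows, incidentally, that the bound holds for \emph{every} realization of the i.i.d.\ draw, not only in expectation). Expanding the Kantorovich problem, $\mathcal{W}_2^2(X,\mu) = \min_{\pi\in\Pi}\sum_{i,j}\pi_{ij}\|x_i-m_j\|^2$, where $\Pi$ is the transportation polytope of nonnegative matrices with all row sums $1/N$ and all column sums $1/K$. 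Interchanging the minimization over the $m_j$'s with the minimization over $\pi$ gives $\inf_\mu \mathcal{W}_2^2(X,\mu) = \min_{\pi\in\Pi} F(\pi)$, where $F(\pi):=\min_{m_1,\dots,m_K}\sum_{i,j}\pi_{ij}\|x_i-m_j\|^2$. Since the optimal $m_j$ is the $\pi$-weighted centroid $K\sum_i\pi_{ij}x_i$, one obtains the closed form $F(\pi)=\sum_{i,j}\pi_{ij}\|x_i\|^2 - K\sum_j\big\|\sum_i\pi_{ij}x_i\big\|^2$, whose first term is constant on $\Pi$; in particular $F$ is a pointwise minimum of functions linear in $\pi$, hence concave.

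Next I would invoke the standard fact that a concave function on a polytope attains its minimum at a vertex. Rescaling by $N$, the vertices of $\Pi$ are the vertices of the transportation polytope with supplies $(1,\dots,1)$ and demands $(c,\dots,c)$; by total unimodularity of the bipartite incidence matrix these are integral, and an integral nonnegative matrix with all row sums $1$ and all column sums $c$ is precisely the indicator matrix of a partition $P\in\mathbf{P}(N,K)$ with $|p|\equiv c$. Evaluating $F$ at such a vertex $\pi_P$ (entries $1/N$ on $\{(i,j):i\in p_j\}$), the optimal $m_j$ is the cluster centroid $\bar x_{p_j}$, so $F(\pi_P)=\tfrac1N\sum_{j\in[K]}\sum_{i\in p_j}\|x_i-\bar x_{p_j}\|^2 = \tfrac1K\sum_{p\in P}\var(X_p)$. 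Chaining the inequalities gives $\mathcal{W}_2^2(X,X_{\text{sample}}) \ge \min_{\pi\in\Pi}F(\pi) = \min_{\,P\in\mathbf{P}(N,K),\,|p|\equiv c}\tfrac1K\sum_{p\in P}\var(X_p)$, which is the claim.

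The main obstacle is conceptual rather than computational: the ``obvious'' bound — sending each unit of mass of $X$ to its nearest target atom — only yields the \emph{unconstrained} $K$-means value, which is too small and points the wrong way. One must instead exploit the mass-balance constraint of the Kantorovich problem (each of the $K$ target atoms absorbs exactly a fraction $1/K=c/N$ of the source), and it is exactly this constraint that, via the concavity/vertex argument, upgrades the unconstrained $K$-means bound to the \emph{balanced} one. The only technical care needed is the vertex characterization of the transportation polytope (and noting the degenerate case of a target atom receiving zero mass cannot occur, since column sums equal $1/K>0$).
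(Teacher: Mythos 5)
Your proof is correct, and at bottom it rests on the same two pillars as the paper's argument: the extreme-point (integrality) structure of the uniform-marginal transportation polytope, which forces balanced assignments, and the fact that the group centroid minimizes the within-group sum of squared distances. The packaging, however, is genuinely different. The paper fixes the sampled atoms, argues by contradiction, and invokes Choquet's theorem plus Birkhoff's theorem to conclude that an optimal plan from $X$ to $X_{\text{sample}}$ is induced by a map whose pre-images form a balanced partition $P'$; it then asserts $\frac{1}{K}\sum_{p' \in P'}\var(X_{p'}) \le \mathcal{W}_2^2(X,X_{\text{sample}})$, leaving the centroid-optimality step implicit. You instead eliminate the atom locations first, note that the reduced objective $F(\pi)$ is a pointwise minimum of functionals linear in $\pi$ and hence concave on the transportation polytope, and locate its minimum at a vertex via total unimodularity --- the classical ``balanced K-means as concave minimization over a transportation polytope'' viewpoint. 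Your route buys a direct (non-contradiction) argument, makes the centroid step explicit, and yields the slightly stronger deterministic statement that the bound holds for every uniform measure on $K$ atoms of $\mathbb{R}^d$ (in particular for every realization of the subsample, with coinciding atoms handled automatically); the paper's route buys brevity, since Birkhoff applied to the fixed-sample transport problem produces the balanced partition in a single step without introducing the functional $F$.
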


\begin{proof}
    See Appendix \ref{A:3}.
\end{proof}

Without further assumptions on the distribution of $X$, the lower bound is sharp due to the optimal partition $P$ definition. But now we show that this lower bound is very unlikely to be obtained via random partition in practice and one should expect a much higher Wasserstein deviation with high probability:

\begin{coro}[Distribution of Subsample Wasserstein Deviation Lower Bounds]
With probability $1-\frac{K!}{K^K}$, we have
\begin{equation} \label{eq:supplement_2}
    \mathcal{W}_2^2(X,X_{\text{sample}}) \geq \min_{\substack{P \in \mathbf{P}(N,K) \\ |p| \equiv c}} \frac{1}{K} \sum_{p \in P} \var(X_p) + \min_{p \neq p'} ||\bar{X}_p - \bar{X}_{p'}||_2^2.
\end{equation}
\end{coro}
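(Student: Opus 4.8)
The plan is to track the additional transport cost produced when the random subsample fails to reach one of the clusters of an optimal balanced $K$-means partition. \emph{Probabilistic step:} fix an optimal balanced $K$-means partition $P^{\ast}=\{p^{\ast}_1,\dots,p^{\ast}_K\}$ attaining $V^{\ast}:=\min_{P}\frac1K\sum_{p\in P}\var(X_p)$. Since $|p^{\ast}_k|=c$ for all $k$ and $N=Kc$, a single uniform draw from $X$ lands in $p^{\ast}_k$ with probability $c/N=1/K$, independently over the $K$ draws; hence the map sending the $j$-th sample point to the index of the $P^{\ast}$-part containing it is distributed like placing $K$ balls into $K$ bins, and it is a bijection exactly when every part receives a sample point, an event of probability $K!/K^K$. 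On the complementary event (probability $1-K!/K^K$), pigeonhole yields a part $p^{\ast}_m$ containing no sample point.

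\emph{Decomposition step:} following the proof of Lemma~\ref{l:random_sample_deviation_bound}, since $N=Kc$ an optimal transport plan between $X$ and $X_{\text{sample}}$ can be taken to be a balanced assignment $A^{\ast}=\{A^{\ast}_1,\dots,A^{\ast}_K\}$ with $|A^{\ast}_j|=c$, and the parallel-axis identity then yields
\begin{equation*}
\mathcal W_2^2(X,X_{\text{sample}})=\frac1K\sum_{j\in[K]}\var(X_{A^{\ast}_j})+\frac1K\sum_{j\in[K]}\|\bar X_{A^{\ast}_j}-z_j\|^2,
\end{equation*}
where $z_1,\dots,z_K$ are the sample points. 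The first sum is $\geq V^{\ast}$ by optimality of $P^{\ast}$, so it remains to lower bound the residual $\frac1K\sum_j\|\bar X_{A^{\ast}_j}-z_j\|^2$ on the good event. A cleaner alternative is the $\mathcal W_2$ triangle inequality: with $\hat\mu:=\frac1K\sum_k\delta_{\bar X_{p^{\ast}_k}}$ one checks $\mathcal W_2^2(X,\hat\mu)=V^{\ast}$ (the assignment $p^{\ast}_k\mapsto\bar X_{p^{\ast}_k}$ is optimal), so $\mathcal W_2(X,X_{\text{sample}})\geq\mathcal W_2(\hat\mu,X_{\text{sample}})-\sqrt{V^{\ast}}$, which reduces matters to the $K$-point matching cost $\mathcal W_2^2(\hat\mu,X_{\text{sample}})=\frac1K\min_{\sigma}\sum_k\|\bar X_{p^{\ast}_k}-z_{\sigma(k)}\|^2$ over permutations $\sigma$ of $[K]$.

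\emph{Geometric step (the main obstacle):} on the good event the barycenter $\bar X_{p^{\ast}_m}$ of the missed cluster must be matched to (equivalently, have its mass transported near) a sample point $z$ which is an actual data point lying in a \emph{different} optimal cluster; the triangle inequality then forces the corresponding squared distance to be at least the minimal inter-barycenter gap $\min_{p\neq p'}\|\bar X_p-\bar X_{p'}\|^2$ appearing in the statement. Making this rigorous is where the real work lies: one must handle the case in which $p^{\ast}_m$ is split across several $A^{\ast}_j$, relate $\bar X_{A^{\ast}_j}$ back to $\bar X_{p^{\ast}_m}$, and control how far a data point can sit from the barycenter of the foreign cluster it is transported toward. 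I expect this bookkeeping, rather than the probability count, to be the crux, with the resulting inequality sharp in the same degenerate or well-separated regimes behind the sharpness remark following Lemma~\ref{l:random_sample_deviation_bound}; the upshot is that the crude bound of that lemma is, with overwhelming probability, not attained by a random subsample.
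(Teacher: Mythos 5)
Your probabilistic step and your parallel-axis decomposition are sound and coincide with the paper's own account (which, for this corollary, consists only of the informal sentence following it about the i.i.d.\ labels $I_k$ and a matching cost over missed and duplicated clusters; there is no appendix proof to reproduce). However, the geometric step you flag as the main obstacle is not merely unfinished bookkeeping: the inequality it would need is false, and the corollary read literally fails on simple examples. Two separate things break. First, the missed cluster carries only mass $1/K$ of $X$, so its displacement contributes on the order of $\frac{1}{K}\|\bar{X}_{p_m}-z\|^2$ to $\mathcal{W}_2^2(X,X_{\text{sample}})$, whereas the stated bound adds the full squared gap: for $K=2$ with $c$ points at the origin and $c$ points at $(10,0)$, on the positive-probability event that both draws come from the first cluster one gets $\mathcal{W}_2^2(X,X_{\text{sample}})=50$, while the right-hand side is $0+100$. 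Second, even after inserting a $1/K$, the missed cluster's mass is transported to a sample point $z$ that is a \emph{data point} of a foreign cluster, not that cluster's barycenter, and $\|\bar{X}_{p_m}-z\|$ can be well below the minimal barycenter gap: with $c$ points at $0$, $c/2$ at $6$ and $c/2$ at $14$ (so the optimal balanced partition is the obvious one, $V^{*}=8$, barycenters $0$ and $10$), the event that both draws equal $6$ has positive probability and gives $\mathcal{W}_2^2(X,X_{\text{sample}})=34 < 8+\tfrac{1}{2}\cdot 100 < 8+100$. Hence no triangle-inequality argument can force the relevant squared distance up to $\min_{p\neq p'}\|\bar{X}_p-\bar{X}_{p'}\|_2^2$; your alternative route via $\mathcal{W}_2(X,X_{\text{sample}})\geq \mathcal{W}_2(\hat\mu,X_{\text{sample}})-\sqrt{V^{*}}$ also cannot yield the claimed additive form, since squaring subtracts rather than adds the $V^{*}$ term.

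The upshot is that the crux you isolated is real but cannot be closed without weakening or reformulating the statement, e.g.\ weighting the extra term by the displaced mass $1/K$ and replacing the barycenter-to-barycenter gap by the distance from the missed cluster's barycenter to the nearest data point of the other clusters (or imposing a separation assumption under which the two notions agree up to constants). The paper's own informal expression after the corollary, the $\min_\sigma$ sum of squared barycenter gaps over missed and duplicated indices, suffers from the same normalization issue, so your write-up is best viewed as a correct reduction (matching the paper's probability count) together with an accurate diagnosis of exactly where the claim, as stated, breaks down.
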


The distribution of the lower bounds is equal to drawing $\{I_k\}_{k = 1}^K$ i.i.d. from $[K]$ uniformly with replacement and then sum $\min_{\substack{P \in \mathbf{P}(N,K), \\ |p| \equiv c}} \frac{1}{K} \sum_{p \in P} \var(X_p)$ and \begin{equation}
    \min_{\sigma} \sum_{ \substack{i: I_k \neq i, \forall k \\ j: |\{k: I_k = j\}| > 1} } ||\bar{X}_{p_i} - \bar{X}_{p_{\sigma(j)}} ||_2^2.
\end{equation}

Last but not least, it is important to note that the strength of randomized subsampling lies in its ability to enhance robustness and generalizability, which is why it is widely used in machine learning training and testing. However, this advantage can actually weaken the result of controlled experiments.

\begin{rema}[An Objective Perspective on Controlled Experiment Partitioning]

The primary objective of a comparative experiment is to apply different controlled factors to distinct subgroups and assess their effectiveness by comparing the conditional outcome distributions. Therefore, the subsampling process should aim to produce subgroups that closely resemble the original sample’s distribution, minimizing the risk of type I and type II errors from imbalanced splits.

In contrast, the advantage of randomization in statistical tests arises from two key aspects: the simplicity due to the law of large numbers (via subsample size, repeated trials, or both) and the improvement of model robustness and generalizability. Randomized subgroups effectively capture potential differences between the sample data and the true population.

However, such distributional discrepancies weaken the outcomes of controlled experiments. For instance, in hypothesis testing where the null hypothesis posits that the controlled factor has no effect, and the alternative suggests it is effective, any distributional shifts between subgroups must be incorporated into the null distribution. This adjustment narrows the rejection region, reducing the test’s power against the alternative hypothesis.

Thus, in controlled experiments where the law of large numbers is less applicable and robustness is not the primary concern, subsampling should prioritize forming subgroups that closely replicate the original sample’s distribution. This strategy enhances the statistical power of the experimental results.
\end{rema}

\subsection{Covariate Adaptive Randomization}

A general framework of covariate adaptive randomization aims to design a group assignment strategy that minimizes an imbalance score, which involves the selected covariates and random treatment membership assignment. But there are at least the following disadvantages: (1) manual discretization of continuous covariates, (2) lack of optimality criteria related to comparative test performance guarantee, and (3) a lack of theoretical guarantee of the similarity among the resulting sensitive groups concerning either the selected covariates or other feature variables in the data set.

For example, both stratified randomization and the basic version of minimization aim to balance the cardinality of the members in subgroups with respective to some covariates. Here, the imbalance score is defined as the difference in the number of members sharing the same (discretized) covariate value across different subgroups.

A more general framework of covariate adaptive randomization replaces the cardinality imbalance score with the following $$\text{Imbalance} := ||\sum_{i = 1}^n (2T_i - 1) f(X_i)||^2,$$ where $T_i$ is the random subgroup assignment, $X_i$ denotes the selected covariates, and $f$ is a function to take care of higher order statistics of the covariates. But such a framework still focuses on the difference between the average: such an imbalance score is based on the {\em difference between the averages} of the subgroups. Therefore, it largely ignores the distributional differences across the subgroups.

In comparison, WHOMP also minimizes an imbalance score. But the score is now an {\em average of the differences} between the optimally matched or coupled members. By switching the order of difference and averaging by leveraging the coupling (w.r.t. all the feature variables in the data set) technique from optimal transport, the WHOMP objective can capture the distributional differences (w.r.t. all the feature variables in the data set) across the subgroups.

Therefore, WHOMP could also be considered a covariate adaptive randomization technique, albeit one that comes with an optimality criterion that provides provable guarantees for comparative test performance and improved practical outcomes.

\subsection{Rerandomization}

We will demonstrate that WHOMP can be viewed as a rerandomization method while at the same time addressing two key shortcomings of traditional rerandomization methods. Specifically, traditional rerandomization methods rely on mean discrepancies to quantify covariate imbalances. However, as discussed earlier, mean discrepancies capture only limited aspects of distributional differences, often missing higher-order statistical discrepancies in the data. Additionally, conventional rerandomization methods typically require a manually selected threshold for determining whether a partition is acceptable, which introduces subjectivity.

The following discussion shows that WHOMP resolves the two issues by leveraging optimal transport and unsupervised learning techniques:

\begin{itemize}
    \item \textit{Comprehensive Imbalance Quantification}: WHOMP utilizes Wasserstein distance to design a covariate imbalance metric that not only accounts for mean differences but also captures a broader range of distributional discrepancies between subgroups. By switching the order of taking differences and averaging, the WHOMP distributional discrepancy metric ensures a more thorough and nuanced assessment of subgroup similarity.
    \item \textit{Self-learned Threshold}: WHOMP replaces the manual threshold selection with an automated threshold learned by balanced K-means directly from the data based on the requested subgroup number. This self-learned threshold reduces subjectivity and makes the method more objective and consistent across different datasets.
\end{itemize}

\begin{lem}[Equivalence between WHOMP and Rerandomization] \label{l: rerandomization_equivalence}
    Assume that the balanced K-means problem (Definition \ref{d:balanced_kmeans}) has a unique solution, denoted by $P$. Define the subgroup splitting accept and reject rule $\Phi(X,Q)$ by
    \begin{equation}
        \Phi(X,Q) := \mathds{1}_{ \{\Var(\bar{X}_Q) = \Var(\mathbb{E}(X_P))\} }(Q).
    \end{equation}
    Then  WHOMP Random (Algorithm \ref{algo: WHOMP_random}) is equivalent to rerandomization with $\Phi(X,Q)$.
\end{lem}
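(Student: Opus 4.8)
The plan is to unwind both procedures and show they produce exactly the same distribution over accepted partitions $Q$. First I would recall the definition of WHOMP Random (Algorithm~\ref{algo: WHOMP_random}) and of $\mathcal{Q}(P)$ (Definition~\ref{d:Q(P)}): WHOMP Random first computes the balanced $K$-means partition $P$ of $X$, then produces a random partition $Q$ by some symmetrization/pairing construction built from $P$, so that the support of its output is precisely the set of partitions $Q \in \mathbf{P}(N,c)$ with $|q| \equiv K$ that are ``compatible with $P$'' in the appropriate sense. The key is to identify this support set. On the other side, rerandomization with acceptance rule $\Phi(X,Q) = \mathds{1}_{\{\Var(\bar{X}_Q) = \Var(\mathbb{E}(X_P))\}}$ draws $Q$ uniformly at random (from $\mathbf{P}(N,c)$ with the cardinality constraint) and keeps it iff $\Var(\bar{X}_Q)$ equals the minimal possible value $\Var(\mathbb{E}(X_P))$. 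So the two distributions agree as soon as (i) the accepted support sets coincide, and (ii) both procedures put the uniform distribution on that common support.

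The heart of the argument is (i), and it rests on the duality in Lemma~\ref{centroid variable characterization}: minimizing $\sum_{p} \var(X_p)$ over balanced partitions is equivalent to \emph{maximizing} the spread of the centroids, equivalently to \emph{minimizing} $\Var(\bar{X}_Q)$ — more precisely, there is an affine identity $\frac{1}{|Q|}\sum_q \var(X_q) + \text{(something)}\cdot\Var(\bar X_Q) = \Var(X)$, so $\Var(\bar X_Q)$ is minimized exactly when $\sum_q \var(X_q)$ is minimized. Hence $\Phi(X,Q) = 1$ iff $Q$ is itself an optimal balanced $K$-means partition (with the roles of block-size and block-number suitably matched between the $K$ and $c$ conventions of Definitions~\ref{d:balanced_kmeans} and~\ref{d:homogeneity_parition}). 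Since the balanced $K$-means solution $P$ is assumed unique, ``being an optimal balanced partition'' pins $Q$ down as the same set partition $P$, up to relabeling of the $c$ blocks. Thus the rerandomization-accepted set is exactly $\{$ relabelings of $P \}$. I then need to check that WHOMP Random's support is this same set of relabelings — this follows from Theorem~\ref{th:Homogeneity_Partition_Characterization} (WHOMP optimizers are exactly the balanced $K$-means optimizers) together with the construction of $\mathcal{Q}(P)$, which by design ranges over the symmetric-group orbit of the blocks of $P$.

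For (ii), I would argue that rerandomization conditioned on acceptance is, by definition, uniform on the accepted set; and WHOMP Random is built to be uniform over the relabelings of $P$ (this is exactly the ``$+$ randomness'' symmetrization step referenced after Definition~\ref{d:balanced_kmeans} and used for the unbiasedness Lemma~\ref{l:unbias_estimator}). Matching these gives the claimed equivalence.

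The main obstacle I anticipate is bookkeeping rather than deep mathematics: carefully reconciling the two cardinality conventions ($N = Kc$ with blocks of size $K$ in one definition and size $c$ in the other) and making the statement ``$Q$ equals $P$ up to relabeling'' precise, since $Q$ and $P$ are partitions (unordered) while the randomized assignment $\mathcal{Q}$ carries an ordering of blocks. I would handle this by phrasing everything at the level of set partitions for the support comparison, and only invoking the labeling symmetry when I pass from set partitions to the actual assignment variable. A secondary point to be careful about: I must confirm that the minimal value of $\Var(\bar X_Q)$ over \emph{all} balanced partitions is attained, and attained only at $P$ — this is exactly where the uniqueness hypothesis on balanced $K$-means is used, via the affine duality identity, so the hypothesis is not merely convenient but essential for $\Phi$ to single out $P$.
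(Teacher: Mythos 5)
Your high-level skeleton (identify the acceptance region of $\Phi$, identify the support of WHOMP Random, match the two) is the same as the paper's, but the way you identify the acceptance region is based on a misreading of the notation and leads to a conclusion that contradicts the very statement being proved. In this paper $\bar{X}_Q$ is not the vector of subgroup means of $Q$; it is the \emph{Wasserstein barycenter} of $\{X_q\}_{q \in Q}$, while $\mathbb{E}(X_P)$ is the vector of centroids of the balanced K-means clusters. Reading $\Var(\bar{X}_Q)$ as a centroid variance, you invoke the anti-clustering duality (Lemma \ref{centroid variable characterization}) and the total-variance identity to conclude that $\Phi$ accepts exactly the optimal balanced K-means partitions, i.e.\ relabelings of $P$. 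That cannot be right: WHOMP Random outputs partitions $Q \in \mathbf{P}(N,c)$ with $|q| \equiv K$ obtained by drawing one point from each cluster $p_k$ --- transversals of $P$, never $P$ itself --- so if $\Phi$ accepted only relabelings of $P$, the lemma would be false. Your appeal to Theorem \ref{th:Homogeneity_Partition_Characterization} has the same slip: it says the WHOMP optimizers are $\mathcal{Q}(P)$, the transversal partitions selected from $P$, not the balanced K-means optimizers themselves. There is also a directional error: $\Var(\mathbb{E}(X_P))$ is the \emph{maximal} (not minimal) centroid variance among balanced partitions of $P$'s shape, since K-means minimizes within-cluster variance; it is anti-clustering that minimizes centroid variance. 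And the shape mismatch ($c$ blocks of size $K$ for $Q$ versus $K$ blocks of size $c$ for $P$) is not bookkeeping that can be ``suitably matched'' --- it is exactly why the accepted set is a set of transversals rather than a copy of $P$.

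The step your tools cannot deliver is the equivalence $Q \in \mathcal{Q}(P) \iff \Var(\bar{X}_Q) = \Var(\mathbb{E}(X_P))$ with $\bar{X}_Q$ the barycenter. The paper proves ($\Rightarrow$) by Lemma \ref{l:random_barycenter}: for every $Q \in \mathcal{Q}(P)$ the barycenter of $Q$'s blocks coincides in distribution with $\mathbb{E}(X_P)$, so the variances agree. For ($\Leftarrow$) it argues by contradiction: given $Q \notin \mathcal{Q}(P)$ with $\Var(\bar{X}_Q) = \Var(\mathbb{E}(X_P))$, push $\bar{X}_Q$ forward through the optimal transport maps onto the blocks $X_q$ to build a balanced partition $P'$ into $K$ groups of size $c$ with $\Var(\mathbb{E}(X_{P'})) = \Var(\bar{X}_Q) = \Var(\mathbb{E}(X_P))$ (the equality case in Theorem \ref{th:max_var_characterization}); since $Q \notin \mathcal{Q}(P)$ forces $P' \neq P$, this contradicts the assumed uniqueness of the balanced K-means solution (uniqueness bites exactly here, via the equivalence of minimal within-variance and maximal centroid variance). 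So you placed the uniqueness hypothesis correctly, but the optimal-transport/barycenter construction --- absent from your proposal --- is the load-bearing part of the proof.
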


\begin{proof}
    See Appendix \ref{A:3}
\end{proof}

\subsection{Anti-clustering}

We first review anti-clustering, then provide a provable and efficient estimation method for anti-clustering, and finally show the problem of scale difference across subgroups in anti-clustering and how WHOMP solves the problem.

Anti-clustering was first introduced in \cite{spath1986anticlustering}. The name comes from the fact that the objective is the exact opposite of the classic K-means clustering objective. In particular, given a data set $\{x_i\}_{i = 1}^N$, K-means has the following objective
\begin{equation*}
\min_{P \in \mathbf{P}(N,K) } \sum_{p \in P} \sum_{x \in X_p} \frac{1}{|p|} ||x - \mathbb{E}(X_p)||_2^2 \equiv \min_{P \in \mathbf{P}(N,K) } \sum_{p \in P} \var(X_p)
\end{equation*}
In contrast, anti-clustering is an optimization problem that has the opposite objective of K-means:
\begin{defi}[Anti-clustering \cite{spath1986anticlustering}] For a fixed $c \in [N]$, the following optimization problem is call anti-clustering with partition cardinality $c$:
    \begin{equation}
        \max_{Q \in \mathbf{P}(N,c)} \sum_{q \in Q} \sum_{x \in X_q} ||x - \mathbb{E}(X_q)||_2^2
    \end{equation}
\end{defi}

The following result shows that, when restricting the partition to have uniform cardinality across subgroups, the expected value of the anti-clustering objective via a random selection across partition elements coincides with the objective of the (balanced) K-means objective:

\begin{lem}[Random selection duality for anti-clustering]\label{l:anticlustering_random_duality}
\begin{equation} \label{eq:anticlustering_random_duality}
\min_{\substack{P \in \mathbf{P}(N,K) \\ |p| \equiv c}} \sum_{p \in P} \Var(X_p) \iff \max_{\substack{P \in \mathbf{P}(N,K) \\ |p| \equiv c}} \mathbb{E}_{\mathcal{Q} \sim uniform(\mathcal{Q}(P)) }[\sum_{q \in \mathcal{Q}} \Var(X_q)]
\end{equation}
\end{lem}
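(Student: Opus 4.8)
The plan is to show that, for \emph{every} feasible partition $P$ (i.e. $P\in\mathbf{P}(N,K)$ with $|p|\equiv c$), the right-hand objective in \eqref{eq:anticlustering_random_duality} equals a fixed affine, strictly decreasing function of the left-hand objective $\sum_{p\in P}\Var(X_p)$, with slope and intercept independent of $P$. Since both optimization problems range over the same feasible set, this immediately gives that their optimizer sets coincide, which is exactly the asserted equivalence.

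First I would unpack $\mathcal{Q}(P)$. Writing $P=\{p_1,\dots,p_K\}$ with $p_i=\{x_{i,1},\dots,x_{i,c}\}$, recall (Definition~\ref{d:Q(P)}) that a draw $Q\sim\mathcal{Q}(P)$ is, up to relabelling of its blocks, the transversal partition $Q=\{q_1,\dots,q_c\}$ with $q_j=\{x_{i,\sigma_i(j)}\}_{i\in[K]}$, where $\sigma_1,\dots,\sigma_K$ are independent, uniformly random permutations of $[c]$; in particular $|q_j|\equiv K$. Applying the law of total variance -- equivalently the centroid characterization of Lemma~\ref{centroid variable characterization} -- to the partition $Q$ of the uniform measure $X$ on $N=Kc$ points gives, for every realization of $Q$,
\begin{equation*}
\sum_{q\in Q}\Var(X_q)\;=\;c\,\Var(X)\;-\;c\,\Var_j\!\big(\bar X_{q_j}\big),\qquad\text{where}\quad\Var_j\!\big(\bar X_{q_j}\big):=\tfrac1c\sum_{j=1}^c\big\|\bar X_{q_j}-\bar X\big\|_2^2 .
\end{equation*}
Since $\Var(X)$ and $c$ depend on neither $P$ nor the realization, everything reduces to computing $\mathbb{E}_{\mathcal{Q}}\big[\Var_j(\bar X_{q_j})\big]$.

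For this I would center \emph{within each $p_i$}: set $z_{i,j}:=x_{i,\sigma_i(j)}-\bar X_{p_i}$, so that $\bar X_{q_j}-\bar X=\tfrac1K\sum_{i=1}^K z_{i,j}$ (using $\tfrac1K\sum_i\bar X_{p_i}=\bar X$). Expanding the square and using linearity of expectation,
\begin{equation*}
\mathbb{E}_{\mathcal{Q}}\big[\|\bar X_{q_j}-\bar X\|_2^2\big]\;=\;\frac1{K^2}\sum_{i,i'=1}^{K}\mathbb{E}\big[\langle z_{i,j},z_{i',j}\rangle\big].
\end{equation*}
For $i\neq i'$ the permutations $\sigma_i,\sigma_{i'}$ are independent and $\mathbb{E}_{\sigma_i}[z_{i,j}]=\tfrac1c\sum_{\ell=1}^c x_{i,\ell}-\bar X_{p_i}=0$, so every off-diagonal term vanishes in expectation; for $i=i'$, since $\sigma_i$ is a permutation, $\sum_{j=1}^c\|z_{i,j}\|_2^2=\sum_{\ell=1}^c\|x_{i,\ell}-\bar X_{p_i}\|_2^2=c\,\Var(X_{p_i})$, independently of $\sigma_i$. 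Summing over $j$ and dividing by $c$ yields
\begin{equation*}
\mathbb{E}_{\mathcal{Q}}\big[\Var_j(\bar X_{q_j})\big]\;=\;\frac1{cK^2}\sum_{i=1}^{K}c\,\Var(X_{p_i})\;=\;\frac1{K^2}\sum_{p\in P}\Var(X_p),
\end{equation*}
hence $\mathbb{E}_{\mathcal{Q}\sim\mathrm{uniform}(\mathcal{Q}(P))}\big[\sum_{q\in Q}\Var(X_q)\big]=c\,\Var(X)-\tfrac{c}{K^2}\sum_{p\in P}\Var(X_p)$. As $-c/K^2<0$ and $c\,\Var(X)$ is constant in $P$, maximizing the left side over feasible $P$ is identical to minimizing $\sum_{p\in P}\Var(X_p)$ over the same set.

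The step I expect to be the main obstacle is this expectation computation: it works only with the right centering -- within $p_i$ rather than within $q_j$ -- so that the across-$i$ independence built into $\mathcal{Q}(P)$ forces every cross-term $\mathbb{E}[\langle z_{i,j},z_{i',j}\rangle]$ to vanish, and one must appeal to the precise definition of $\mathcal{Q}(P)$ (independent uniform transversal permutations) to guarantee that independence. The remaining ingredients -- the ANOVA-type variance split and tracking the constants $c,K,\Var(X)$ -- are routine, and the normalization convention chosen for $\Var$ merely rescales the two objectives by fixed factors and so does not affect the optimizers.
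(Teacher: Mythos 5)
Your proof is correct, and it takes a genuinely different route from the paper's. The paper expands both objectives through the Gram representation $\Var(X_q)=\frac1K\sum_{i\in q}\|x_i\|^2-\frac1{K^2}\sum_{i,j\in q}\langle x_i,x_j\rangle$ and then computes the pairwise co-membership probability $\mathbb{P}_{\mathcal{Q}}(\mathcal{Q}(i)=\mathcal{Q}(j))$, which is $0$ when $P(i)=P(j)$ (for $i\neq j$) and $1/c$ otherwise; this shows that both the balanced K-means objective and the expected anti-clustering objective are affine, with opposite orientations, in the single quantity $\sum_{i,j}\langle x_i,x_j\rangle\mathds{1}_{\{P(i)=P(j)\}}$, from which the equivalence of optimizers follows. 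You instead do a per-realization ANOVA split $\sum_{q}\Var(X_q)=c\Var(X)-\sum_j\|\bar X_{q_j}-\bar X\|^2$ and compute the expected between-block term by centering within each $p_i$, so that the independence of the transversal assignments kills all cross terms and the permutation structure makes $\sum_j\|z_{i,j}\|^2$ deterministic; this yields the explicit identity $\mathbb{E}_{\mathcal{Q}}\bigl[\sum_{q}\Var(X_q)\bigr]=c\,\Var(X)-\frac{c}{K^2}\sum_{p\in P}\Var(X_p)$, which I verified is consistent with the paper's Gram computation (indeed it handles the diagonal $i=j$ terms more carefully than the paper does, though that slip does not affect the paper's conclusion since those terms are constant in $P$). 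Your version buys a closed-form affine link between the two objectives (slope $-c/K^2$, intercept $c\Var(X)$), which is sharper than what the paper records and dovetails with the total-variance trade-off used later (Lemma \ref{l:subsample_BV_tradeoff}); the paper's version is more parallel to its companion result, Proposition \ref{prop:clustering_random_duality}, needing only co-membership probabilities. One small point worth making explicit in your write-up: uniform sampling from $\mathcal{Q}(P)$ (a set of unordered partitions) coincides with your model of independent uniform permutations $\sigma_1,\dots,\sigma_K$ because every transversal partition has exactly $c!$ tuple representatives (a common relabelling of the blocks), so the pushforward of the tuple-uniform law is the uniform law on $\mathcal{Q}(P)$; the paper implicitly relies on the same fact when computing its probabilities, so this is a refinement rather than a gap.
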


\begin{proof}
    See Appendix \ref{A:3}.
\end{proof}

In short, if we create a partition $\mathcal{Q} \sim uniform(\mathcal{Q}(P))$ via the random selection method across subgroups in a given $P$, then the expected anti-clustering objective for the randomly selected $\mathcal{Q}$ (RHS in \eqref{eq:anticlustering_random_duality}) is equivalent to the K-means objective for $P$ (LHS in \eqref{eq:anticlustering_random_duality}).

For theoretical interest, we also show the following result which is the counterpart of Lemma $\ref{l:anticlustering_random_duality}$ above.

\begin{prop}[Random selection duality for clustering]\label{prop:clustering_random_duality}
\begin{equation}
    \max_{\substack{Q \in \mathbf{P}(N,c) \\ |q| \equiv K }} \sum_{q \in Q} \Var(X_q)  \iff \max_{\substack{Q \in \mathbf{P}(N,c) \\ |q| \equiv K }} \mathbb{E}_{\mathcal{P} \sim uniform(\mathcal{P}(Q)) }[\sum_{p \in \mathcal{P}} \Var(X_p)]
\end{equation}
\end{prop}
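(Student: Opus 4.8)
This is Proposition (random selection duality for clustering), claiming that maximizing $\sum_{q \in Q} \var(X_q)$ over balanced partitions $Q \in \mathbf{P}(N,c)$ with $|q| \equiv K$ is equivalent to maximizing $\mathbb{E}_{\mathcal{P} \sim uniform(\mathcal{P}(Q))}[\sum_{p \in \mathcal{P}} \var(X_p)]$ over the same family. Here $\mathcal{P}(Q)$ should denote the family of partitions $\mathcal{P} \in \mathbf{P}(N,K)$ obtained by selecting exactly one representative from each of the $c$ blocks of $Q$ (a "cross-section" partition), so each $p \in \mathcal{P}$ has cardinality $c$; this is the exact mirror of the construction $\mathcal{Q}(P)$ used in Lemma (random selection duality for anti-clustering). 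The plan is to show that the two objectives differ only by an additive constant independent of $Q$ — in fact, I expect the right-hand expected objective to equal (a positive multiple of) the total sample variance minus the left-hand objective, so that maximizing one is the same as maximizing the other.

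**Main steps.** First I would fix a balanced partition $Q = \{q_1,\dots,q_c\}$ with $|q_j| = K$ and unwind the right-hand side. A uniformly random $\mathcal{P} \sim uniform(\mathcal{P}(Q))$ is generated by independently choosing, for each block $q_j$, a uniformly random element, and grouping the chosen indices across a matching; equivalently $\mathcal{P}$ has $K$ blocks each of size $c$, each block being a uniformly random transversal. By linearity of expectation, $\mathbb{E}_{\mathcal{P}}[\sum_{p \in \mathcal{P}} \var(X_p)] = K \cdot \mathbb{E}[\var(X_{p})]$ where $p$ is a single uniformly random transversal of $Q$, i.e. $p = \{x_{I_1},\dots,x_{I_c}\}$ with $I_j \sim uniform(q_j)$ independent. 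Then I would expand $\var(X_p) = \frac{1}{c}\sum_{j} \|x_{I_j}\|^2 - \|\frac{1}{c}\sum_j x_{I_j}\|^2$ and take expectations term by term. The first term contributes $\frac{1}{c}\sum_j \mathbb{E}\|x_{I_j}\|^2 = \frac{1}{c}\sum_j \frac{1}{K}\sum_{i \in q_j}\|x_i\|^2 = \frac{1}{N}\sum_{i=1}^N \|x_i\|^2$, which is independent of $Q$. The second term, by independence of the $I_j$, expands into a diagonal part $\frac{1}{c^2}\sum_j \mathbb{E}\|x_{I_j}\|^2$ and an off-diagonal part $\frac{1}{c^2}\sum_{j\neq j'} \langle \bar X_{q_j}, \bar X_{q_{j'}}\rangle$ where $\bar X_{q_j} = \mathbb{E}(X_{q_j})$ is the block centroid.

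**Assembling the identity.** Writing $m := \bar X = \frac{1}{c}\sum_j \bar X_{q_j}$ (the global centroid, using that all blocks have equal size $K$), the off-diagonal sum combines with the diagonal sum so that $\mathbb{E}\|\frac{1}{c}\sum_j x_{I_j}\|^2 = \frac{1}{c^2}\sum_j \mathbb{E}\|x_{I_j}\|^2 + \|m\|^2 - \frac{1}{c^2}\sum_j \|\bar X_{q_j}\|^2$, after adding and subtracting $\frac{1}{c^2}\sum_j \|\bar X_{q_j}\|^2$ to complete $\sum_{j,j'}\langle \bar X_{q_j},\bar X_{q_{j'}}\rangle = c^2\|m\|^2$. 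Now I would recognize $\frac{1}{c}\sum_j \|\bar X_{q_j}\|^2 - \|m\|^2 = \var(\bar X_Q)$, the variance of the centroid vector across blocks, and invoke Lemma (centroid variable characterization) referenced earlier (the Späth identity), which says $\var(\bar X_Q) = \var(X) - \frac{1}{c}\sum_{q\in Q}\var(X_q)$ when the blocks are balanced. Substituting everything back, the $Q$-dependence of $\mathbb{E}_{\mathcal{P}}[\sum_p \var(X_p)]$ reduces to a strictly decreasing affine function of $\sum_{q\in Q}\var(X_q)$ plus constants depending only on $N$, $K$, $c$ and $\sum_i \|x_i\|^2$. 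Hence the two optimization problems have the same argmax set, which is the claimed equivalence.

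**Anticipated obstacle.** The routine calculation is bookkeeping, but the one genuinely delicate point is the direction of monotonicity: I must check that the coefficient of $\sum_{q}\var(X_q)$ in $\mathbb{E}_{\mathcal{P}}[\sum_p \var(X_p)]$ is \emph{positive}, so that maximizing the expected transversal objective really does maximize (rather than minimize) $\sum_q \var(X_q)$ — contrast this with Lemma (random selection duality for anti-clustering), where the analogous coefficient flips sign because there $P$ plays the role of the clustering and $\mathcal{Q}(P)$ the transversals, whereas here $Q$ is itself the "spread-out" partition. Concretely, the diagonal terms $\frac{1}{c}(1 - \frac{1}{c})\sum_j \mathbb{E}\|x_{I_j}\|^2$ and the $-\frac{1}{c}\var(\bar X_Q)$ combine, and after inserting the Späth identity $\var(\bar X_Q) = \var(X) - \frac{1}{c}\sum_q \var(X_q)$ I expect a net coefficient of order $+\frac{1}{c^2}$ on $\sum_q \var(X_q)$; I would verify the sign carefully with a trivial two-point example ($N=2$, $K=c=\ldots$) before trusting the general formula. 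Assuming the sign comes out positive as expected, the equivalence follows immediately.
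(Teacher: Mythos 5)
Your reduction is set up correctly and, in substance, runs parallel to the paper's own appendix argument: you read $\mathcal{P}(Q)$ correctly as the transversal partitions, reduce the right-hand objective by linearity to $K$ times the expected variance of a single uniformly random transversal (whose coordinates are indeed independent and uniform within their respective blocks of $Q$), and aim at an affine relation between the two objectives — the paper gets the same relation instead by computing the co-membership probabilities $\mathbb{P}(\mathcal{P}(i)=\mathcal{P}(j))$, which equal $0$ if $Q(i)=Q(j)$ and $1/K$ otherwise. Carried to completion, your own bookkeeping gives, since $\mathbb{E}\bigl\|\tfrac1c\sum_j x_{I_j}-\bar X\bigr\|^2=\tfrac1{c^2}\sum_{q\in Q}\Var(X_q)$ by independence and the law of total variance,
\begin{equation*}
\mathbb{E}_{\mathcal{P}\sim uniform(\mathcal{P}(Q))}\Bigl[\sum_{p\in\mathcal{P}}\Var(X_p)\Bigr]
\;=\; K\,\Var(X)\;-\;\frac{K}{c^{2}}\sum_{q\in Q}\Var(X_q).
\end{equation*}

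The genuine gap is precisely the step you deferred: the coefficient of $\sum_{q\in Q}\Var(X_q)$ is $-K/c^{2}<0$, not positive. The relation is therefore strictly decreasing, and your concluding inference fails: maximizing the expected transversal objective over balanced $Q$ coincides with \emph{minimizing} $\sum_{q\in Q}\Var(X_q)$, so the two maximizations do not share an argmax set in general. Your own suggested sanity check exposes this: with four points $\{a,a,-a,-a\}$, $K=c=2$, the choice $Q=\{\{a,-a\},\{a,-a\}\}$ maximizes $\sum_q\Var(X_q)$ but yields expected transversal objective $\|a\|^{2}$, whereas $Q=\{\{a,a\},\{-a,-a\}\}$ yields $2\|a\|^{2}$. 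You should also note that the paper's printed computation exhibits exactly the same opposite signs — the $Q$-dependent term $\sum_{i,j}\langle x_i,x_j\rangle\mathds{1}_{\{Q(i)\neq Q(j)\}}$ enters the left-hand side with coefficient $+1/K^{2}$ and the right-hand side with coefficient $-K/N^{2}$ — so what the algebra (yours or the paper's) actually supports is the mixed pairing $\max_Q\sum_{q}\Var(X_q)\iff\min_Q\mathbb{E}_{\mathcal{P}}[\sum_{p}\Var(X_p)]$, mirroring the min/max pairing of Lemma \ref{l:anticlustering_random_duality}; the max/max form cannot be rescued by fixing the bookkeeping, and your derivation, completed honestly, proves the mixed-pairing version instead.
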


\begin{proof}
    See Appendix \ref{A:3}
\end{proof}

Here, the setting is similar to the setting above, except that we fix $Q$ first, then define $\mathcal{P}(Q)$, and generate random partition $\mathcal{P} \sim uniform(\mathcal{P}(Q))$.

Interestingly, as we will show later in Section \ref{S:4}, this random selection from the K-means approach coincides with the optimal solutions to WHOMP. That is, the random estimation of anti-clustering outperforms the exact solution to anti-clustering in terms of the WHOMP objective. Now, we illustrate the disadvantages of the anti-clustering objective and its exact solution compared to WHOMP solutions.

In practice, diversity within each subgroup resulting from partitioning is desirable, as a more diverse subgroup better captures the overall structure of the sample data. This rationale underlies the anti-clustering objective: maximizing the sum of variance within subgroups to promote diversity.

While diversity is advantageous, scale differences across subgroups are undesirable, particularly in mean squared error (MSE) or  $L^2$  loss scenarios. For instance, comparing MSE loss in cross-validation or hold-out sets is problematic when training and test datasets differ by a scale factor, even if their data structures are otherwise similar.

Now, we show that anti-clustering tends to produce subgroups at different scales. To start, we need the following characterization of the anti-clustering objective.

\begin{lem}[Centroid Variable Characterization, \cite{spath1986anticlustering}]\label{centroid variable characterization}
\begin{equation}
\max_{P \in \mathbf{P}(N,K)} \sum_{p \in P} \sum_{x \in X_p} ||x - \mathbb{E}(X_p)||_2^2 \iff \min_{P \in \mathbf{P}(N,K)} \sum_{p \in P} |p| ||\mathbb{E}(X_p) - \mathbb{E}(X)||_2^2
\end{equation}
\end{lem}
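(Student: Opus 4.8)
The plan is to prove the identity by a standard bias--variance decomposition of the total sum of squares around the global mean $\mathbb{E}(X)$, applied blockwise across the partition. Concretely, write the total sum of squared deviations from the global mean as
\begin{equation*}
\sum_{i=1}^N \|x_i - \mathbb{E}(X)\|_2^2 = \sum_{p \in P} \sum_{x \in X_p} \|x - \mathbb{E}(X)\|_2^2,
\end{equation*}
which is a constant that does not depend on $P$. The first step is to insert $\pm \mathbb{E}(X_p)$ inside each summand and expand:
\begin{equation*}
\sum_{x \in X_p} \|x - \mathbb{E}(X)\|_2^2 = \sum_{x \in X_p} \|x - \mathbb{E}(X_p)\|_2^2 + 2 \sum_{x \in X_p} \langle x - \mathbb{E}(X_p),\, \mathbb{E}(X_p) - \mathbb{E}(X)\rangle + |p|\,\|\mathbb{E}(X_p) - \mathbb{E}(X)\|_2^2.
\end{equation*}
The cross term vanishes because $\sum_{x \in X_p}(x - \mathbb{E}(X_p)) = 0$ by definition of the subgroup mean $\mathbb{E}(X_p) = \frac{1}{|p|}\sum_{x \in X_p} x$.

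Summing the surviving two terms over $p \in P$ gives
\begin{equation*}
\underbrace{\sum_{p \in P}\sum_{x \in X_p}\|x - \mathbb{E}(X)\|_2^2}_{\text{const in } P} = \sum_{p \in P}\sum_{x \in X_p}\|x - \mathbb{E}(X_p)\|_2^2 + \sum_{p \in P} |p|\,\|\mathbb{E}(X_p) - \mathbb{E}(X)\|_2^2.
\end{equation*}
Since the left-hand side is independent of the choice of partition $P$, maximizing the first summand on the right over $P \in \mathbf{P}(N,K)$ is equivalent to minimizing the second summand, which is exactly the claimed equivalence. I would also remark that this is precisely the within/between sum-of-squares decomposition familiar from ANOVA and the classical K-means objective, specialized here to the "anti" direction.

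There is no real obstacle here; the only point requiring a word of care is that the identity holds termwise for \emph{every} partition (not merely at the optimum), so the equivalence of the two optimization problems is immediate rather than requiring an exchange-of-optima argument — and one should note that the constant being subtracted, $\sum_i \|x_i - \mathbb{E}(X)\|_2^2$, is finite, which is automatic since $X$ is a finite data set. If the paper's convention treats $\var(X_p)$ as the \emph{average} squared deviation $\frac{1}{|p|}\sum_{x\in X_p}\|x - \mathbb{E}(X_p)\|_2^2$ rather than the sum, one extra line reconciling the $|p|$ weights (constant under the uniform-cardinality constraint used elsewhere, or absorbed into the weights on the right side) closes the gap; as stated the lemma uses the unnormalized sum on the left, so the decomposition above matches verbatim.
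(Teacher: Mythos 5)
Your proof is correct: the within/between sum-of-squares decomposition with the vanishing cross term is exactly the standard argument behind this classical identity, and your remark that the identity holds for every partition (so no exchange-of-optima argument is needed) is the right point of care. The paper itself does not reprove this lemma but cites \cite{spath1986anticlustering}, and your argument is precisely the proof underlying that reference, so there is nothing to reconcile beyond the normalization comment you already made.
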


Intuitively, an anti-clustering partition generates diversity within the resulting subgroups by enforcing low variance among the subgroup centroids. However, the enforcement of low variance among the subgroup centroids leads to scale differences across the resulting subgroups for the following reason: To have similar centroids or means for subgroups, data points sharing the same scale tend to be group together to balance each other so that centroids of subgroups can stay as close as possible.

Now, we use an example to show that anti-clustering leads to diversity in terms of variance or structure across the elements due to the enforced homogeneity of the centroids.

\begin{exam}[Variance scale differences across anti-clustering subgroups]\label{exam:variance_homogeneity}

Given the data points $\{x_i\}_{i \in [9]}$ as shown in Figure \ref{anti vs desire}, the left plot uses the dash lines to connect data points that belong to the same element of an anti-clustering partition. In this case, we have the anti-clustering partition is the following: $$\mathcal{P}_{anti-clustering} = \{\{x_1,x_5,x_9\},\{x_2,x_6,x_7\}, \{x_3,x_4,x_8\}\}.$$ such a partition is guaranteed to be the anti-clustering because the right-hand side of the centroid variable characterization is zero and therefore minimized.

\begin{figure}[H]
\centering
\includegraphics[width=\textwidth]{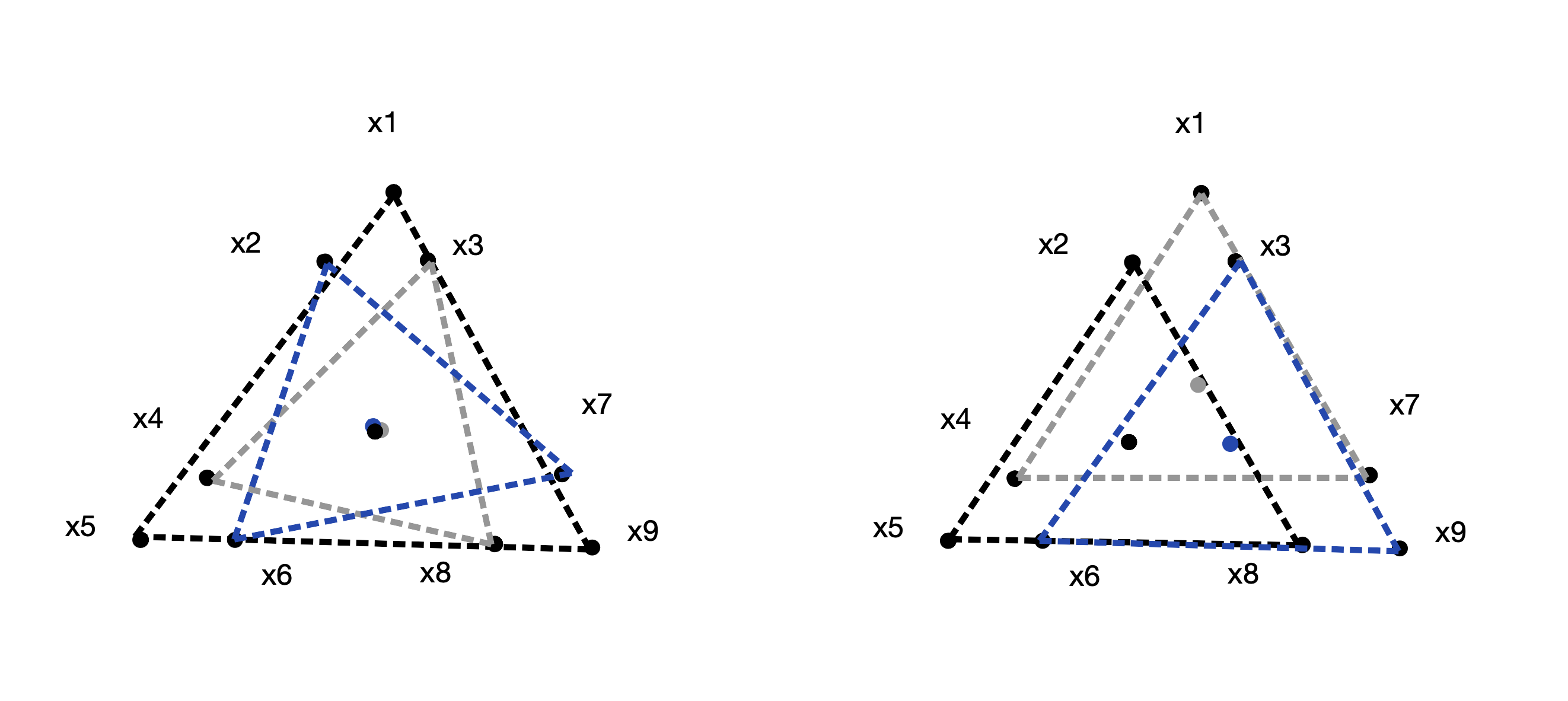}
\caption{This example illustrates that anti-clustering (left) tends to produce subgroups at different scales: compare the size of the larger triangle formed by $x_1, x_5, x_9$ with the size of the smaller triangles formed by $x_2, x_6, x_7$ and $x_3, x_4, x_8$, respectively. In comparison, WHOMP Matching (right) leads to the desired subgroup partition at the same scale.}
\label{anti vs desire}
\end{figure}

On the other hand, if we hope to not only obtain diversity within each of the partition elements but also a similarity in structure, scale, or variance across the elements. Then the desired partition would be the following: $$\mathcal{P}_{desired} = \{\{x_1,x_4,x_7\},\{x_2,x_5,x_8\}, \{x_3,x_6,x_9\}\}.$$ As shown in the figure, the right plot uses dash lines to connect data points that belong to the same desirable partition element in this example.
\end{exam}

\section{Overcoming the NP-hardness of Wasserstein Homogeneity Partition}\label{S:4}

In this section, we first characterize the set of all optimal solutions to the Wasserstein Homogeneity Partition Problem (WHOMP) by utilizing solutions to the balanced K-means problem (Definition \ref{d:balanced_kmeans}). Using this characterization, combined with the estimated balanced K-means solution, we propose an efficient approach for computing approximate WHOMP solutions. We also highlight a trade-off between the homogeneity of the mean and variance among the optimal solutions, illustrating how ``anti-clustering'' and the Wasserstein barycenter represent the two extremes of this trade-off.

Since the balanced K-means is a special case of the 2-norm clustering problem with cardinality constraints, it is known to be NP-hard \cite{bertoni2012size}. Therefore, the provable equivalence between the balanced K-means and WHOMP implies that the proposed WHOMP problem is also NP-hard. To mitigate this complexity, we employ multiple random initialization, the fact that K-means is equivalent to the Wasserstein barycenter problem, and a constrained K-means clustering algorithm inspired by \cite{bradley2000constrained} to estimate the balanced K-means solution. It is worth noting that, while our approach uses the constrained K-means algorithm for implementation, any alternative balanced K-means estimation method could be employed instead.

\subsection{Characterization of WHOMP Solutions}

We first derive a characterization of the solution to~\eqref{eq:W_homogeneity_partition} and then apply this characterization to design an algorithm to construct the WHOMP partition. To simplify notation in the rest of this section, given a partition $P \in \mathbf{P}(N,c)$, we denote the {\em Wasserstein barycenter} of $\{X_p\}_{p \in P}$ by $\Bar{X}_P := \bary(\{X_p\}_{p \in P})$. Also, we denote $\mathbb{E}(X_P) = (\mathbb{E}(X_p))_{p \in P}$ to be the vector of the centroids of $X_P$. Finally, for a partition $P \in \mathbf{P}(N,c)$ we define the partitions resulting from selection from $P$ as follows:

\begin{defi}[Partition Selected from $P$: $\mathcal{Q}(P)$]\label{d:Q(P)}
Given bijective maps $\{T_p\}_p$ where each $T_p$ map $\mathcal{L}(\Bar{X}_P)$ to $\mathcal{L}(X_p)$, we construct a partition 
\begin{equation}
    Q(\{T_p\}_p) := \{q(\Bar{x})\}_{\Bar{x} \in \Bar{X}_P}
\end{equation}
with $q(\Bar{x}) := \{i: x_i \in \cup_{p \in P} \{T_p(\Bar{x})\}$. We define $\mathcal{Q}(P)$ to be the set of all the partitions of the above form:
\begin{equation}
    \mathcal{Q}(P) := \bigcup_{\{T_p\}_p: T_p \text{ is bijective}} \{Q(\{T_p\}_p): {T_p}_{\sharp} \mathcal{L}(\Bar{X}_P) = \mathcal{L}(X_p)\}
\end{equation}
\end{defi}

Now, we are ready to state the main result of this section:

\begin{thm}[Wasserstein Homogeneity Partition Characterization]\label{th:Homogeneity_Partition_Characterization}
    Given $P$ a solution to the K-means partition under the uniform cardinality constraint:
    \begin{equation}
        P \in \argmin_{\substack{P \in \mathbf{P}(N,K), \\ |p| \equiv c}} \sum_{p \in P} \var(X_p),
    \end{equation} 
    then $\mathcal{Q}(P)$ are the set of solutions to \eqref{eq:W_homogeneity_partition}:
    \begin{equation}
        \mathcal{Q}(P) = \argmin_{\substack{Q \in \mathbf{P}(N,c), \\ |q| \equiv K}} \sum_{q \in Q} \mathcal{W}_2^2(X_q, X).
    \end{equation}
\end{thm}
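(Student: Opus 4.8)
The plan is to connect the WHOMP objective $\sum_{q \in Q} \mathcal{W}_2^2(X_q, X)$ to the balanced K-means objective $\sum_{p \in P} \var(X_p)$ through two intermediate identities, and then show that the partitions in $\mathcal{Q}(P)$ simultaneously (i) have a fixed, computable value of the WHOMP objective, and (ii) achieve the global minimum of that objective. The starting observation is that for an empirical measure $X = \frac{1}{N}\sum \delta_{x_i}$ and a subgroup $X_q$ of uniform size $K$, the quantity $\mathcal{W}_2^2(X_q, X)$ can be rewritten by decomposing against the Wasserstein barycenter $\bar{X}_P$ of $\{X_p\}_{p \in P}$. More precisely, I would first record the elementary fact that for any coupling one has a bias-variance split, so that $\mathcal{W}_2^2(X_q, X)$ relates to $\|\mathbb{E}(X_q) - \mathbb{E}(X)\|^2$ plus a ``shape'' term; since all subgroups here are obtained by selection across the cells of $P$, every $X_q \in \mathcal{Q}(P)$ has the same empirical mean $\mathbb{E}(X) = \bar{X}$ (a selection picks exactly one point from each cell, and averaging over $K$ cells reproduces the global mean only after summing over all $q$ — I will need to be careful here and instead argue at the level of the full sum over $q \in Q$).

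**Key steps, in order.** First, I would prove the master identity
\[
\sum_{q \in Q} \mathcal{W}_2^2(X_q, X) \;=\; (\text{term depending only on } X) \;-\; c\sum_{p\in\mathcal P(Q)} \var\big(X_p\big) \quad\text{or similar},
\]
using that the optimal transport between two uniform empirical measures of sizes $K$ and $N=cK$ is realized by a map sending each point of $X_q$ to $c$ points of $X$, and then invoking the random-selection dualities already established (Lemma~\ref{l:anticlustering_random_duality}, Proposition~\ref{prop:clustering_random_duality}) to trade the $\var(X_q)$ / $\var(X_p)$ sums for one another. Second, I would use Lemma~\ref{centroid variable characterization} and the Wasserstein-barycenter characterization of K-means to identify the constant and show that the WHOMP objective is minimized exactly when the centroid-variance term $\sum_p |p|\,\|\mathbb{E}(X_p) - \mathbb{E}(X)\|^2$ is \emph{maximized} over balanced partitions $P$ — equivalently when $\sum_p \var(X_p)$ is \emph{minimized}, i.e. when $P$ is a balanced K-means solution. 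Third, I would check the reverse inclusion: any minimizer $Q$ of the WHOMP objective must arise as $\mathcal{Q}(P)$ for some balanced K-means solution $P$, by running the identity backwards — given optimal $Q$, the induced partition $\mathcal{P}(Q)$ (reading off which original cell each transported mass block came from) must itself solve balanced K-means, and $Q \in \mathcal{Q}(\mathcal P(Q))$.

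**The main obstacle** will be pinning down the exact form of the optimal coupling between $X_q$ (size $K$) and $X$ (size $N = cK$) and verifying that, \emph{when $Q$ ranges over $\mathcal{Q}(P)$}, these couplings are precisely the ones that ``split each barycenter atom into its preimages across the cells of $P$.'' Establishing that this candidate coupling is genuinely optimal — not merely feasible — is where the Wasserstein barycenter structure of $\bar X_P$ does the real work: one needs that the maps $T_p$ are optimal transport maps from $\mathcal L(\bar X_P)$ to $\mathcal L(X_p)$, which is exactly the defining property of the barycenter, and then assemble these into a global optimal plan from $X_q$ to $X$ via a gluing/cyclical-monotonicity argument. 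A secondary subtlety is the bookkeeping for non-uniqueness: $\mathcal{Q}(P)$ is defined as a union over all admissible families $\{T_p\}$, so I must confirm every such family yields the same objective value (so the whole set $\mathcal{Q}(P)$ is optimal, not just one representative), which again follows because the barycenter transport cost is the same for every optimal $T_p$. Once the identity and the optimality of the coupling are in hand, the equivalence with balanced K-means and hence the claimed set equality fall out of Lemma~\ref{l:random_sample_deviation_bound} and the already-proven dualities.
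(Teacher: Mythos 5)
Your proposal hinges on a ``master identity'' of the form $\sum_{q \in Q} \mathcal{W}_2^2(X_q, X) = \mathrm{const} - c\sum_{p \in \mathcal{P}(Q)}\var(X_p)$, but no such deterministic identity holds for an arbitrary balanced $Q$, and you never pin it down (``or similar''). The optimal coupling between $X_q$ ($K$ atoms) and $X$ ($N=cK$ atoms) induces a size-$c$ preimage partition that depends on $q$, so there is no single induced partition $\mathcal{P}(Q)$ whose in-cell variances reproduce the objective; at best one gets one-sided bounds (this is essentially what Lemma~\ref{l:random_sample_deviation_bound} gives, and it is only an inequality). The random-selection dualities (Lemma~\ref{l:anticlustering_random_duality}, Proposition~\ref{prop:clustering_random_duality}) concern \emph{expected} variance objectives under uniform random selection and cannot be traded for a pointwise identity either. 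The paper's route is different and is what makes the argument close: it first shows (Lemma~\ref{l:barycenter_variance_characterization}) that minimizing $\sum_q \mathcal{W}_2^2(X_q,X)$ is equivalent to \emph{maximizing} $\var(\bar X_Q)$, the variance of the Wasserstein barycenter of the subgroups --- a quantity your outline never identifies --- and then argues by contradiction: from any hypothetically better $Q'$ one builds a balanced partition $P'$ out of the barycenter preimages $\{T_{q'}(\bar x)\}_{q'}$, notes $\mathbb{E}(X_{p'})=\bar x$, and contradicts the K-means optimality of $P$.

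The second gap is your treatment of why the \emph{whole} set $\mathcal{Q}(P)$ is optimal. You dismiss this as bookkeeping, claiming all families yield the same value ``because the barycenter transport cost is the same for every optimal $T_p$.'' But Definition~\ref{d:Q(P)} takes the union over \emph{all} bijections pushing $\mathcal{L}(\bar X_P)$ to $\mathcal{L}(X_p)$, not just optimal transport maps; different members of $\mathcal{Q}(P)$ genuinely have different in-subgroup variances and different mean variances (this is exactly the trade-off of Lemma~\ref{l:subsample_BV_tradeoff} and Theorem~\ref{th:max_var_characterization}). That they nonetheless all attain the same, optimal WHOMP value is the content of Lemma~\ref{l:random_barycenter}: for every $Q\in\mathcal{Q}(P)$ the barycenter $\bar X_Q$ coincides in law with the centroid set $\mathbb{E}(X_P)$, and the proof of that fact itself uses the K-means optimality of $P$ (via the fixed-point characterization of barycenters and another exchange argument). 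Without this ingredient your argument at best covers the single matching partition built from optimal $T_p$'s, not $\mathcal{Q}(P)$, and your side remark that every selection subgroup has mean $\mathbb{E}(X)$ is in fact false for the same reason. So the proposal as written has a missing central identity and a missing lemma where the optimality of $P$ must enter; it would not go through without essentially reconstructing the paper's two lemmas.
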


To prove the above result, we need the following two lemmas, which are also of independent interest.

\begin{lem}[Maximum Variance Barycenter Characterization of \eqref{eq:W_homogeneity_partition}]\label{l:barycenter_variance_characterization}
    \begin{equation}
        \min_{\substack{Q \in \mathbf{P}(N,c), \\ |q| \equiv K}} \sum_{q \in Q} \mathcal{W}_2^2(X_q,X) \equiv \max_{\substack{Q \in \mathbf{P}(N,c), \\ |q| \equiv K}} \var(\Bar{X}_Q).
    \end{equation}
\end{lem}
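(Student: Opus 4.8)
The plan is to route both optimization problems through the Wasserstein barycenter $\bar X_Q$ of the sub-sample measures $\{X_q\}_{q\in Q}$, show each reduces to the balanced $K$-means value, and conclude they have the same optimal solution set. Write $c=|Q|$; for a balanced partition $P$ of $[N]$ into $K$ cells of size $c$ set $V_P:=\tfrac1K\sum_{p\in P}\var(X_p)$, $V^\ast:=\min_P V_P$, and let $\bar P:=\tfrac1K\sum_{p\in P}\delta_{\mathbb{E}(X_p)}$ be its centroid measure.

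The first ingredient is the exact identity $\sum_{q\in Q}\mathcal{W}_2^2(X_q,\bar X_Q)=c\bigl(\var(X)-\var(\bar X_Q)\bigr)$. It comes from the structure of barycenters of equally weighted, equally supported discrete measures: the barycenter is built from an optimal ``multi-matching'' $M$ of the $cK$ points into $K$ cells of size $c$, each meeting every $q\in Q$ exactly once, whose cell-centroids are precisely the atoms of $\bar X_Q$; so the left side equals $\tfrac cK\sum_k\var(X_{M_k})$, and since $\mathbb{E}(\bar X_Q)=\mathbb{E}(X)$ the parallel-axis theorem applied within and across the cells rewrites this as $c(\var(X)-\var(\bar X_Q))$. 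Since, conversely, every balanced $K$-cell partition of $[N]$ is transversal to some $Q$, this already yields $\max_Q\var(\bar X_Q)=\var(X)-V^\ast$ and $\min_Q\sum_{q\in Q}\mathcal{W}_2^2(X_q,\bar X_Q)=cV^\ast$.

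The second and crucial ingredient is a ``parallel-axis'' identity with $\bar X_Q$ in the role of the mean, $\sum_{q\in Q}\mathcal{W}_2^2(X_q,X)=\sum_{q\in Q}\mathcal{W}_2^2(X_q,\bar X_Q)+c\,\mathcal{W}_2^2(\bar X_Q,X)$; only the ``$\ge$'' direction is needed below. In dimension one it is immediate, because $(\mathcal{P}_2(\mathbb{R}),\mathcal{W}_2)$ embeds isometrically in $L^2([0,1])$ via quantile functions, $\bar X_Q$'s quantile function is the average of those of the $X_q$, and the cross term vanishes. In $\mathbb{R}^d$ one argues combinatorially: as $X=\tfrac1c\sum_{q}X_q$ refines each $X_q$, integrality of the underlying transportation polytope forces an optimal plan between $X_q$ and $X$ to send each atom of $X_q$ onto a size-$c$ cell of $X$, giving the exact formula $\mathcal{W}_2^2(X_q,X)=\min_P\bigl[V_P+\mathcal{W}_2^2(X_q,\bar P)\bigr]$ over balanced $K$-cell partitions $P$; summing this and invoking the barycenter first-order condition $\tfrac1c\sum_q T_q=\mathrm{id}$ for the optimal maps $T_q:\bar X_Q\to X_q$ gives the identity. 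The same formula, discarding the nonnegative Wasserstein term, also yields $\mathcal{W}_2^2(\bar X_Q,X)\ge V^\ast$, with equality when $\bar X_Q$ is the centroid measure of a $K$-means–optimal balanced partition.

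Assembling: for every balanced $Q$, the first identity turns the ``$\ge$'' half of the second into
$$\sum_{q\in Q}\mathcal{W}_2^2(X_q,X)\ \ge\ c\bigl(\var(X)-\var(\bar X_Q)\bigr)+c\,\mathcal{W}_2^2(\bar X_Q,X)\ \ge\ 2cV^\ast,$$
using $\var(\bar X_Q)\le\var(X)-V^\ast$ and $\mathcal{W}_2^2(\bar X_Q,X)\ge V^\ast$. If $Q$ maximizes $\var(\bar X_Q)$, its barycenter matching is $K$-means optimal, so the combinatorial formula (used with a single common cell choice) also gives $\sum_{q\in Q}\mathcal{W}_2^2(X_q,X)\le 2cV^\ast$, hence such $Q$ likewise minimizes $\sum_{q\in Q}\mathcal{W}_2^2(X_q,X)$. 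Conversely, if $\sum_{q\in Q}\mathcal{W}_2^2(X_q,X)=2cV^\ast$ then both displayed inequalities must be equalities, forcing $\var(\bar X_Q)=\var(X)-V^\ast$, i.e.\ $Q$ maximizes $\var(\bar X_Q)$. Therefore $\argmin_Q\sum_{q\in Q}\mathcal{W}_2^2(X_q,X)=\argmax_Q\var(\bar X_Q)$, the asserted equivalence. I expect the real obstacle to be the $\mathbb{R}^d$ parallel-axis identity (even just its ``$\ge$'' half): Wasserstein space is not flat, so unlike $d=1$ this is not a formal computation and must be extracted from the combinatorial structure of the assignment problems defining $\mathcal{W}_2^2(X_q,X)$ together with the barycenter's first-order optimality condition; the rest is bookkeeping with the parallel-axis theorem and transportation-polytope integrality.
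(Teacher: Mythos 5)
Your architecture is sound in several places: the identity $\sum_{q\in Q}\mathcal{W}_2^2(X_q,\bar X_Q)=c\bigl(\var(X)-\var(\bar X_Q)\bigr)$ via the multi-matching structure of the barycenter, the exact combinatorial formula $\mathcal{W}_2^2(X_q,X)=\min_P\bigl[\tfrac1K\sum_{p\in P}\var(X_p)+\mathcal{W}_2^2(X_q,\bar P)\bigr]$ over balanced partitions (a correct consequence of transportation-polytope integrality), and the resulting upper bound $\sum_q\mathcal{W}_2^2(X_q,X)\le 2cV^\ast$ at any variance-maximizing $Q$ are all correct. This is also a genuinely different route from the paper, which never introduces the cross term $\mathcal{W}_2^2(\bar X_Q,X)$: the paper instead asserts a decomposition of $\mathcal{W}_2^2(X_q,X)$ into the pairwise sums $\sum_{q'}\mathcal{W}_2^2(X_q,X_{q'})$, converts those to $2\sum_q\mathcal{W}_2^2(X_q,\bar X_Q)$, and then applies the same variance-reduction formula you use as your ``first ingredient.''

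The gap is the step you yourself flag as the obstacle: the ``$\ge$'' half of the parallel-axis relation $\sum_q\mathcal{W}_2^2(X_q,X)\ \ge\ \sum_q\mathcal{W}_2^2(X_q,\bar X_Q)+c\,\mathcal{W}_2^2(\bar X_Q,X)$ is asserted but not proved, and your sketched derivation does not go through. Summing the combinatorial formula over $q$ produces a different minimizing partition $P_q$ (and a different centroid measure $\bar P_q$) for each $q$, so the sum yields $\sum_q V_{P_q}+\sum_q\mathcal{W}_2^2(X_q,\bar P_q)$ and no single cross term involving $\bar X_Q$ or $X$ emerges; barycenter optimality only controls $\sum_q\mathcal{W}_2^2(X_q,\mu)$ for one \emph{fixed} $\mu$, so it does not bound $\sum_q\mathcal{W}_2^2(X_q,\bar P_q)$ from below by $cV^\ast$. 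The first-order condition $\tfrac1c\sum_q T_q=\mathrm{id}$ concerns the optimal maps from $\bar X_Q$ to the $X_q$ and has no evident relation to the assignment cells realizing $\mathcal{W}_2^2(X_q,X)$, so it cannot supply the missing cross term; and since $(\mathcal{P}_2(\mathbb{R}^d),\mathcal{W}_2)$ is positively curved for $d\ge2$, a Pythagorean identity (or even the inequality) about the barycenter cannot be invoked as a general fact---it would need a proof exploiting the specific nested discrete structure here. Because both inclusions of your claimed equality $\argmin_Q\sum_q\mathcal{W}_2^2(X_q,X)=\argmax_Q\var(\bar X_Q)$ run through the lower bound $\sum_q\mathcal{W}_2^2(X_q,X)\ge 2cV^\ast$ (and through equality in that chain forcing $\var(\bar X_Q)$ maximal), the argument as written establishes neither direction; you need an actual proof of that uniform lower bound, e.g.\ a direct argument that $\sum_q\mathcal{W}_2^2(X_q,\bar P_q)\ge cV^\ast$ even when the partitions $P_q$ vary with $q$, which is precisely the point your proposal leaves open.
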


See proof in Appendix \ref{A:4}. That is to say, for any fixed partition cardinality $c \in \mathbb{N}$, the subgroups resulting from a Wasserstein homogeneity partition of cardinality $c$ satisfy that their Wasserstein barycenter has the largest variance among all possible partitions of cardinality $c$ with element-wise uniform cardinality constraint. On the other hand, if one obtains a partition satisfying the Wasserstein barycenter of the resulting subgroups which has a larger variance than any other subgroup barycenters resulting from partitions with cardinality $c$, then the partition is a solution to the Wasserstein homogeneity partition.

Therefore, to prove Theorem \ref{th:Homogeneity_Partition_Characterization}, it suffices to show that any $Q \in \mathcal{Q}(P)$ results in a $\Bar{X}_Q$ with larger or equal variance than any other partition with cardinality $c$. To that end, we need the following result that relates $\Bar{X}_Q$ to $\mathbb{E}(X_P)$:

\begin{lem}[Barycenter of $X_Q$ Equals Centroids of $X_P$ for All $Q \in \mathcal{Q}(P)$]\label{l:random_barycenter}
    For all $Q \in \mathcal{Q}(P)$, we have 
    \begin{equation}
        \mathcal{W}_2(\Bar{X}_Q, \mathbb{E}(X_P)) = 0.
    \end{equation}
\end{lem}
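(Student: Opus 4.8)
The plan is to unwind the definition of $\mathcal{Q}(P)$ and compute the barycenter of $X_Q$ directly, showing it coincides (as a measure) with the empirical distribution of the centroids $\mathbb{E}(X_P) = (\mathbb{E}(X_p))_{p\in P}$. Fix $Q = Q(\{T_p\}_p) \in \mathcal{Q}(P)$, built from bijections $T_p$ with $(T_p)_\sharp \mathcal{L}(\bar X_P) = \mathcal{L}(X_p)$. By construction, each element $q(\bar x)$ of $Q$ collects exactly one representative $T_p(\bar x)$ from each $p \in P$, so $X_{q(\bar x)} = \{T_p(\bar x)\}_{p \in P}$, a set of $|P| = K$ points — one per cluster. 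The key observation is that the maps $\{T_p\}_p$ provide a simultaneous coupling of all the $X_p$'s through the common source $\mathcal{L}(\bar X_P)$, so the empirical distributions $\mathcal{L}(X_{q(\bar x)})$ are exactly the conditional slices of this joint coupling.

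The main step is to identify the Wasserstein barycenter of the family $\{X_q\}_{q\in Q}$ with uniform weights. First I would recall the known fact (from \cite{agueh2011barycenters}, and used implicitly in the K-means/barycenter equivalence invoked earlier) that for a finitely supported family the barycenter is obtained by averaging points along an optimal multi-coupling; concretely, if $\{\gamma_q\}$ are measures coupled through a common variable, the barycenter is the pushforward of that common variable under the map that averages the $K$ coordinates. Here the common variable is $\bar X_P \sim \mathcal{L}(\bar X_P)$, and under $Q$ the $q$-indexed coordinates are $\{T_p(\bar x)\}_{p\in P}$; hence a natural candidate barycenter is the law of $\bar x \mapsto \frac{1}{|P|}\sum_{p\in P} T_p(\bar x)$. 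I would then argue this candidate is optimal: the map $\bar x \mapsto \frac 1K \sum_p T_p(\bar x)$ is the barycentric projection of the multi-marginal plan, and because the $T_p$'s are (by hypothesis) the transport maps realizing $\mathcal{W}_2(\bar X_P, X_p) = 0$'s worth of structure relative to the barycenter $\bar X_P$ itself — i.e.\ $\bar X_P$ is by definition the barycenter of $\{X_p\}_{p\in P}$ — the averaging map pushes $\mathcal{L}(\bar X_P)$ forward to the barycenter of $\{X_q\}_{q\in Q}$. Finally, summing over $\bar x$: $\frac{1}{|P|}\sum_{p} T_p(\bar x)$ ranges, as $\bar x$ varies over $\bar X_P$, over a collection of points whose empirical distribution I claim equals $\mathcal{L}(\mathbb{E}(X_P))$; this follows because for each fixed $p$, $\{T_p(\bar x)\}_{\bar x}$ is just $X_p$ itself (as $(T_p)_\sharp\mathcal{L}(\bar X_P) = \mathcal{L}(X_p)$), and averaging the $K$-tuple $(T_{p_1}(\bar x),\dots,T_{p_K}(\bar x))$ over $\bar x$ in each coordinate recovers $\mathbb{E}(X_{p_j})$ — so the support of the averaged-map pushforward is exactly $\{\mathbb{E}(X_p)\}_{p\in P}$ with the right multiplicities. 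Hence $\mathcal{W}_2(\bar X_Q, \mathbb{E}(X_P)) = 0$.

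I expect the main obstacle to be the rigorous identification of the barycenter with the coordinate-averaging pushforward, since in general the Wasserstein barycenter of arbitrary finitely supported measures is not simply an average of transport maps unless the maps are jointly compatible (lie on a common geodesic structure, or share a cyclically monotone alignment). Here compatibility should come for free: $\bar X_P$ is \emph{defined} as the barycenter of $\{X_p\}_{p\in P}$, so the optimal plans from $\bar X_P$ to each $X_p$ automatically satisfy the first-order condition $\sum_p (\text{id} - T_p^{-1}) = 0$ in the appropriate sense, which is exactly what makes the averaging map optimal for $\{X_q\}$. I would make this precise either by invoking the characterization of barycenters via the vanishing of the sum of barycentric-projection displacements, or — more elementarily, exploiting finiteness — by writing everything as a balanced assignment problem and checking the optimality conditions combinatorially. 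A secondary, purely bookkeeping obstacle is keeping the cardinalities straight ($|q| \equiv K$, $|p| \equiv c$, $N = Kc$) so that "empirical distribution of $\{T_p(\bar x)\}$ over $\bar x \in \bar X_P$" genuinely has $c$ atoms matching $\mathcal{L}(X_p)$; this is immediate from $(T_p)_\sharp \mathcal{L}(\bar X_P) = \mathcal{L}(X_p)$ and the bijectivity of $T_p$, but should be stated explicitly.
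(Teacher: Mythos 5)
There is a genuine gap, and it begins with which family you are averaging. Your candidate barycenter is the pushforward of $\mathcal{L}(\bar X_P)$ under $\bar x \mapsto \frac{1}{K}\sum_{p\in P}T_p(\bar x)$. But $\{T_p(\bar x)\}_{p\in P}$ is precisely the whole subgroup $X_{q(\bar x)}$, so this map computes the subgroup centroid $\mathbb{E}(X_{q(\bar x)})$; its pushforward is the $c$-atom measure $\mathcal{L}(\mathbb{E}(X_Q))$, which is not even a candidate for the $K$-atom barycenter $\bar X_Q$, and it does not equal $\mathcal{L}(\mathbb{E}(X_P))$ — your closing verification conflates averaging over $p$ for fixed $\bar x$ (what your map does, yielding subgroup centroids) with averaging over $\bar x$ coordinate-wise (which would yield the cluster centroids $\mathbb{E}(X_p)$). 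What you have sketched is in fact closer to Theorem \ref{th:max_var_characterization}, which relates $\mathbb{E}(X_Q)$ to $\bar X_P$, and even that only works for the particular $Q$ built from the \emph{optimal} maps; in Definition \ref{d:Q(P)} the $T_p$ are arbitrary bijections pushing $\mathcal{L}(\bar X_P)$ to $\mathcal{L}(X_p)$, so the first-order condition $\frac{1}{K}\sum_p T_p=\mathrm{id}$ that you hope gives "compatibility for free" is not available.

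The deeper missing ingredient is the balanced K-means optimality of $P$, which the barycenter property of $\bar X_P$ cannot replace: the lemma is false for non-optimal $P$. For example, with points $0,1,10,11$ on the line and $P'=\{\{x_1,x_3\},\{x_2,x_4\}\}$, the selection $Q=\{\{x_1,x_4\},\{x_2,x_3\}\}\in\mathcal{Q}(P')$ has $\bar X_Q$ supported on $\{0.5,\,10.5\}$ while $\mathbb{E}(X_{P'})=\{5,6\}$, even though $\bar X_{P'}$ is still the barycenter of the clusters of $P'$. The paper's proof uses optimality at exactly this crux: it takes the optimal transport maps $T_q$ from the candidate $\mathcal{L}(\mathbb{E}(X_P))$ to each $X_q$, proves the alignment $T_q(\mathbb{E}(X_p))=X_{p\cap q}$ by contradiction (a misalignment would produce a balanced partition $P'$ with strictly smaller K-means objective than $P$), deduces $\frac{1}{c}\sum_{q\in Q}T_q(\mathbb{E}(X_p))=\mathbb{E}(X_p)$, and then concludes via the fixed-point characterization of Wasserstein barycenters that $\mathcal{W}_2(\bar X_Q,\mathbb{E}(X_P))=0$. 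Any repair of your argument needs both the correct multi-coupling (rows indexed by the clusters $p$, averaging over $q$) and an optimality argument of this kind; as written, the proposal proves neither the stated identity nor a correct surrogate for it.
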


See proof in Appendix \ref{A:4}. Finally, we are ready to prove Theorem~\ref{th:Homogeneity_Partition_Characterization} by leveraging the two lemmas above.\\

\begin{proof}[Proof of Theorem \ref{th:Homogeneity_Partition_Characterization}]
    Assume for contradiction that there is another $Q' \in \mathbf{P}(N,c) \setminus \mathcal{Q}(P)$ such that $q' \equiv K$ and $\var( \Bar{X}_{Q'}) > \var(\Bar{X}_{Q})$. Let $T_{q'}$ be the optimal transport maps from $\Bar{X}_{Q'}$ to $X_{q'}$ for all $q' \in Q'$. Now, define $p'(\Bar{x}) := \{T_{q'}(\Bar{x})\}_{q' \in Q'}$ for each $\Bar{x} \in \Bar{X}_{Q'}$ and $P' := \{p'(\Bar{x})\}_{\Bar{x} \in \Bar{X}_{Q'}}$. It then follows from $|q'| \equiv K$ that $|\Bar{X}_{Q'}| = K$. Therefore, we have $P' \in \mathbf{P}(N,K)$ and $|p'(\Bar{x})| \equiv \frac{N}{K} = c$ by construction. Also, for each $\Bar{x} \in \Bar{X}_{Q'}$, we have
    \begin{equation}
        \mathbb{E}(X_{p'}) = \frac{1}{|Q'|} \sum_{q' \in Q'} T_{q'}(\Bar{x}) = \Id(\Bar{x}) = \Bar{x}.
    \end{equation}
    It follows that
    \begin{equation}
        \var(\mathbb{E}(X_{P'})) = \var(\Bar{X}_{Q'}) > \var(\Bar{X}_Q) = \var(\mathbb{E}(X_{P})).
    \end{equation}
    Here, the last equality follows from Lemma \ref{l:random_barycenter}. But this contradicts the optimality of $P$. Now, for any $Q, Q' \in \mathcal{Q}(P)$, we have
    \begin{equation}
        \var(\Bar{X}_{Q'}) = \var(\mathbb{E}(X_{P})) = \var(\Bar{X}_Q).
    \end{equation}Hence, we have proved by contradiction that each $Q \in \mathcal{Q}(P)$ satisfies $\var(\bar{X}_Q) \geq \var(\bar{X}_{Q'})$ for all $Q'$ that satisfy $Q' \in \mathbf{P}(N,c)$ and $q' \equiv K$. Finally, it follows from Lemma \ref{l:barycenter_variance_characterization} that $Q$ is a solution to \eqref{eq:W_homogeneity_partition}. The proof is complete.
\end{proof}

\subsection{Mean and Variance Trade-off among Optimal Solutions}

Since the optimal solution to WHOMP is not unique, we are naturally led to the question of how the various optimal solutions differ from each other. To that end, we show a trade-off in variance between the first two moments among the optimal solutions to WHOMP. Furthermore, we show that, among the trade-offs, the extremal solution that minimizes the variance of the subgroup's first moments has the most ``anti-clustering'' characteristic, while the extremal solution that minimizes the variance of the subgroup's second moments (Algorithm \ref{algo:WHOMP_matching}) is closely connected to the Wasserstein barycenter; whereas the randomized WHOMP solutions  (see details of WHOMP Random design in Algorithm \ref{algo: WHOMP_random}) tend to achieve a balance between the two extremes.

Due to the disadvantages of anti-clustering pointed out in Section \ref{S:3} above, although one can adopt anti-clustering methods to approximate that extremal solution, we focus on the other extreme of the trade-off and balanced solutions in between. Now, we first derive the trade-off.

\begin{lem}[Averages variance and variances average trade-off in $\mathcal{Q}(P)$]\label{l:subsample_BV_tradeoff}
Given a partition $X_P$ on the data $X$ with $P \in \mathbf{P}(N,K)$ and $Q \in \mathcal{Q}(P)$, it follows that
\begin{equation}
    \var(X) =  \underbrace{\var(\mathbb{E}(X_Q))}_{\text{variance of subgroup expectations}} + \underbrace{ \frac{1}{|Q|} \sum_{q \in Q} \var(X_q)}_{\text{expected in-subgroup variance}}.
\end{equation}
\end{lem}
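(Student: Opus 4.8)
The plan is to establish this identity as an instance of the classical within/between decomposition of variance (the ANOVA identity), taking care that the uniform cardinality constraint $|q|\equiv K$ satisfied by every $Q\in\mathcal{Q}(P)$ is precisely what turns the cardinality-weighted between-group term into the unweighted average appearing in the statement.

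First I would fix notation: write $\bar{X}:=\mathbb{E}(X)=\tfrac1N\sum_{i\in[N]}x_i$ and, for each $q\in Q$, $\bar{X}_q:=\mathbb{E}(X_q)=\tfrac1K\sum_{i\in q}x_i$, recalling that $N=cK$ and $|Q|=c$. Then I would record two elementary facts. (a) The centroid of the centroids equals the global centroid: $\tfrac1c\sum_{q\in Q}\bar{X}_q=\tfrac1{cK}\sum_{q\in Q}\sum_{i\in q}x_i=\tfrac1N\sum_{i\in[N]}x_i=\bar{X}$, where uniform cardinality is used so that each block contributes with the same weight $\tfrac1c$; consequently $\var(\mathbb{E}(X_Q))=\tfrac1c\sum_{q\in Q}\|\bar{X}_q-\bar{X}\|_2^2$. (b) For each fixed $q$ one has $\sum_{i\in q}(x_i-\bar{X}_q)=0$ by definition of $\bar{X}_q$.

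Next I would carry out the pointwise splitting $x_i-\bar{X}=(x_i-\bar{X}_q)+(\bar{X}_q-\bar{X})$ for $i\in q$, expand the squared norm, and sum over $i\in q$; by fact (b) the cross term $2\langle\bar{X}_q-\bar{X},\,\sum_{i\in q}(x_i-\bar{X}_q)\rangle$ vanishes, leaving
\[
\sum_{i\in q}\|x_i-\bar{X}\|_2^2=\sum_{i\in q}\|x_i-\bar{X}_q\|_2^2+K\,\|\bar{X}_q-\bar{X}\|_2^2 .
\]
Summing this over $q\in Q$, dividing by $N=cK$, and using $\var(X_q)=\tfrac1K\sum_{i\in q}\|x_i-\bar{X}_q\|_2^2$ together with fact (a) yields
\[
\var(X)=\frac{1}{|Q|}\sum_{q\in Q}\var(X_q)+\var(\mathbb{E}(X_Q)),
\]
which is the claim.

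I do not anticipate a genuine obstacle: the argument is the standard variance decomposition, and the only point demanding attention is the bookkeeping of normalizations — specifically, that both the collapse of the between-group weights to $\tfrac1{|Q|}$ and the identity $\tfrac1c\sum_{q\in Q}\bar{X}_q=\bar{X}$ rely on $|q|\equiv K$, i.e. on $Q$ having uniform block size (which every $Q\in\mathcal{Q}(P)$ does by construction). It is also worth noting in passing that the identity in fact holds for any $Q\in\mathbf{P}(N,c)$ with $|q|\equiv K$, not merely for $Q\in\mathcal{Q}(P)$.
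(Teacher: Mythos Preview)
Your proposal is correct and is essentially the same approach as the paper's: the paper simply invokes the law of total variance, noting that $\mathbb{E}(X_Q)=\mathbb{E}(X\mid Q)$ and $\tfrac{1}{|Q|}\sum_{q\in Q}\var(X_q)=\mathbb{E}(\var(X\mid Q))$, whereas you carry out the same within/between decomposition explicitly. Your added observation that only the uniform cardinality $|q|\equiv K$ is needed (not the full $Q\in\mathcal{Q}(P)$ structure) is accurate and worth noting.
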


\begin{proof}
It follows directly from law of total variance with $\mathbb{E}(X_Q) = \mathbb{E}(X|Q)$ and $$\frac{1}{|Q|} \sum_{q \in Q} \var(X_q) = \mathbb{E}(\var(X|Q)).$$
\end{proof}

The above result shows that, when choosing among the optimal solutions to WHOMP, one can choose either the ones resulting in low variance among the subgroup means or the ones with low average subgroup variance.

\begin{itemize}
    \item \textit{``Anti-clustering''}: On the one hand, it is clear that the solutions that choose to maximize $\frac{1}{|Q|} \sum_{q \in Q} \var(X_q)$ share the same spirit as anti-clustering. But such maximization of variance often results in scale differences among the subgroups. To see the potential scale problem, Lemma \ref{l:subsample_BV_tradeoff} shows that maximization of $\frac{1}{|Q|} \sum_{q \in Q} \var(X_q)$ necessarily leads to minimization of $\var(\mathbb{E}(X_Q))$. Therefore, in order to enforce low variance in the subgroup averages, the partition tends to group data points sharing the same scale together to achieve balance within the subgroups and have subgroups' averages close to the sample average.
    \item \textit{Barycenter matching}: On the other hand, if one chooses to minimize the average variance and enforce the scale similarity among the resulting subgroups, Lemma \ref{l:subsample_BV_tradeoff} shows that it is necessary to increase the variance of the subgroup averages. The following result shows that the multi-marginal matching in constructing the barycenter of $X_P$ provides a solution that coincides with the extremal solution of the trade-off that minimizes the average variance.
\end{itemize}

\begin{thm}[Barycenter of $X_P$ equals $\mathbb{E}(X_Q)$ with the largest variance in $\mathcal{Q}(P)$]\label{th:max_var_characterization}
    For all $Q \in \mathcal{Q}(P)$, we have
    \begin{equation}
        \var(\mathbb{E}(X_Q)) \leq \var(\bar{X}_P).
    \end{equation}
    Furthermore, let $T_p$ denote the optimal transport map between $\bar{X}_P$ and $X_p$ for all $p \in P$, then the equality holds as
    \begin{equation}
        \mathcal{W}_2(\mathbb{E}(X_{Q(\{T_p\}_p)}), \bar{X}_P) = 0.
    \end{equation}
\end{thm}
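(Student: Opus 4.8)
The plan is to prove the two assertions separately, using the law of total variance from Lemma~\ref{l:subsample_BV_tradeoff} as the bridge that turns the claimed comparison of $\var(\mathbb{E}(X_Q))$ into a comparison of the average in-subgroup variances $\frac{1}{c}\sum_{q\in Q}\var(X_q)$. Throughout, write $c=|p|$ for $p\in P$ and $K=|q|$ for $q\in Q\in\mathcal{Q}(P)$, and use the following structural facts about the barycenter $\bar{X}_P=\bary(\{X_p\}_{p\in P})$ (equal weights): it is a uniform measure on $c$ atoms, its optimal transport maps $T_p\colon\mathcal{L}(\bar{X}_P)\to\mathcal{L}(X_p)$ are bijections between the $c$ atoms of $\bar{X}_P$ and the $c$ points of $X_p$, and, by the first-order (fixed-point) optimality characterization of Wasserstein barycenters, each atom $\bar{x}$ of $\bar{X}_P$ satisfies $\bar{x}=\frac{1}{K}\sum_{p\in P}T_p(\bar{x})$.

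For the equality statement I would set $Q^{\ast}:=Q(\{T_p\}_p)$. Since each $T_p$ is a bijection of the $c$ atoms, $Q^{\ast}\in\mathcal{Q}(P)$ by Definition~\ref{d:Q(P)}, and the subgroup indexed by $\bar{x}$ is $q^{\ast}(\bar{x})=\{i:x_i=T_p(\bar{x})\text{ for some }p\in P\}$, so that $\mathbb{E}(X_{q^{\ast}(\bar{x})})=\frac{1}{K}\sum_{p}T_p(\bar{x})=\bar{x}$. Hence the uniform measure on $\{\mathbb{E}(X_{q^{\ast}(\bar{x})})\}_{\bar{x}}$ coincides with $\bar{X}_P$, which gives simultaneously $\mathcal{W}_2(\mathbb{E}(X_{Q(\{T_p\}_p)}),\bar{X}_P)=0$ and $\var(\mathbb{E}(X_{Q^{\ast}}))=\var(\bar{X}_P)$.

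For the inequality I would show that $Q^{\ast}$ minimizes the total in-subgroup variance over $\mathcal{Q}(P)$. First, using $\bar{x}=\mathbb{E}(X_{q^{\ast}(\bar{x})})$ and optimality of $T_p$, one gets $\sum_{q\in Q^{\ast}}\var(X_q)=\sum_{\bar{x}}\frac{1}{K}\sum_{p}\|T_p(\bar{x})-\bar{x}\|^2=\frac{c}{K}\sum_{p}\mathcal{W}_2^2(\bar{X}_P,X_p)$. Second, for an arbitrary $Q\in\mathcal{Q}(P)$, the transversal structure of $Q$ (each $q$ meets each $p$ in exactly one point $x_q^{(p)}$, and $q\mapsto x_q^{(p)}$ is a bijection onto $X_p$) lets one couple $\mathcal{L}(\mathbb{E}(X_Q))$ with $\mathcal{L}(X_p)$ by matching $\mathbb{E}(X_q)$ to $x_q^{(p)}$, which yields $\sum_{p}\mathcal{W}_2^2(\mathbb{E}(X_Q),X_p)\le\frac{1}{c}\sum_{q}\sum_{p}\|\mathbb{E}(X_q)-x_q^{(p)}\|^2=\frac{K}{c}\sum_{q\in Q}\var(X_q)$. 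Third, $\bar{X}_P$ minimizes $\nu\mapsto\sum_{p}\mathcal{W}_2^2(\nu,X_p)$ by definition of the barycenter. Combining the three estimates gives $\frac{K}{c}\sum_{q\in Q^{\ast}}\var(X_q)=\sum_{p}\mathcal{W}_2^2(\bar{X}_P,X_p)\le\sum_{p}\mathcal{W}_2^2(\mathbb{E}(X_Q),X_p)\le\frac{K}{c}\sum_{q\in Q}\var(X_q)$, hence $\sum_{q\in Q^{\ast}}\var(X_q)\le\sum_{q\in Q}\var(X_q)$. Finally, applying Lemma~\ref{l:subsample_BV_tradeoff} to $Q$ and to $Q^{\ast}$ (both in $\mathcal{Q}(P)$), $\var(\mathbb{E}(X_Q))=\var(X)-\frac{1}{c}\sum_{q\in Q}\var(X_q)\le\var(X)-\frac{1}{c}\sum_{q\in Q^{\ast}}\var(X_q)=\var(\mathbb{E}(X_{Q^{\ast}}))=\var(\bar{X}_P)$, which is the claim.

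The hard part will be the one structural input used above: that the Wasserstein barycenter of the $K$ uniform measures $X_p$, each supported on $c$ points, is itself a uniform measure on $c$ atoms whose optimal transport maps $T_p$ are bijections obeying the barycentric identity $\frac{1}{K}\sum_p T_p=\text{id}$ on $\mathrm{supp}(\bar{X}_P)$. I would justify this via the standard theory of discrete Wasserstein barycenters / multi-marginal optimal transport for equal-weight families with equal support cardinality, reading $\bar{X}_P$ in degenerate cases as a uniform measure on a multiset of size $c$ (so $\mathbb{E}(X_Q)$ may likewise be a uniform measure on a multiset of $c$ points, which does not affect the variance bookkeeping or the couplings); one should also note that the argument is insensitive to the possibly non-unique choice of the optimal maps $T_p$, since every optimal family satisfies the barycentric identity. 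Everything else is elementary manipulation of the law of total variance and explicit couplings.
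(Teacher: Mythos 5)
Your argument is correct, and its skeleton matches the paper's: both proofs reduce the variance comparison to a comparison of average in-subgroup variances via Lemma~\ref{l:subsample_BV_tradeoff}, both exploit the transversal structure of $Q\in\mathcal{Q}(P)$ to couple $\mathcal{L}(\mathbb{E}(X_Q))$ with each $\mathcal{L}(X_p)$ and bound $\sum_p\mathcal{W}_2^2(\mathbb{E}(X_Q),X_p)$ by the in-subgroup variances, both then invoke the barycenter's minimizing property, and both prove the equality case from the barycentric identity $\bar{x}=\frac{1}{K}\sum_p T_p(\bar{x})$ exactly as the paper does. The one genuine difference is how the chain is closed: the paper converts $\var(X)-\frac{1}{K}\sum_p\mathcal{W}_2^2(X_p,\bar{X}_P)$ into $\var(\bar{X}_P)$ by citing an external variance-reduction identity for the Wasserstein barycenter, whereas you instead compare an arbitrary $Q$ directly against $Q^{\ast}=Q(\{T_p\}_p)$, show $Q^{\ast}$ minimizes $\sum_q\var(X_q)$ over $\mathcal{Q}(P)$, and apply the law of total variance a second time together with $\var(\mathbb{E}(X_{Q^{\ast}}))=\var(\bar{X}_P)$ from the equality case. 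This buys you a self-contained proof that does not need the cited variance-reduction lemma and that delivers Corollary~\ref{coro: WHOMP_Matching} as an immediate byproduct; the paper's route is slightly shorter once that lemma is granted. Your reliance on the discrete-barycenter structure (uniform measure on $c$ atoms, bijective optimal maps, barycentric fixed-point identity) is no heavier than the paper's own, which invokes the same facts via the fixed-point characterization of the barycenter, so flagging it as an assumption to be justified by the standard discrete theory is appropriate rather than a gap.
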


See proof in Appendix \ref{A:4}. It is then straightforward to combine the above two results to show that $X_{Q(\{T_p\}_p)}$ is the partition in $\mathcal{Q}(P)$ that minimizes the average variance:

\begin{coro}\label{coro: WHOMP_Matching}
Let $Q':= Q(\{T_p\}_p)$ as constructed in Theorem \ref{th:max_var_characterization}. For all $Q \in \mathcal{Q}(P)$, we have
\begin{equation}
    \frac{1}{|Q|} \sum_{q \in Q'} \var(X_q) \leq \frac{1}{|Q|} \sum_{q \in Q} \var(X_q)
\end{equation}
\end{coro}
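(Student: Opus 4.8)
The plan is to treat this as a direct bookkeeping consequence of the mean--variance decomposition, combining Lemma \ref{l:subsample_BV_tradeoff} with Theorem \ref{th:max_var_characterization}. First I would observe that every $Q \in \mathcal{Q}(P)$ has the same cardinality, namely $|Q| = |\bar{X}_P| = K$ by Definition \ref{d:Q(P)}; in particular $|Q'| = |Q|$, so the normalizing factor $\frac{1}{|Q|}$ is identical on both sides of the claimed inequality. Hence it suffices to compare the (normalized) expected in-subgroup variances directly.

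Next I would invoke Lemma \ref{l:subsample_BV_tradeoff}, which for any $Q \in \mathcal{Q}(P)$ gives $\var(X) = \var(\mathbb{E}(X_Q)) + \frac{1}{|Q|}\sum_{q \in Q}\var(X_q)$. Since $\var(X)$ depends only on the data set and not on the partition, rearranging yields $\frac{1}{|Q|}\sum_{q \in Q}\var(X_q) = \var(X) - \var(\mathbb{E}(X_Q))$ for every $Q \in \mathcal{Q}(P)$. Thus minimizing the expected in-subgroup variance over $\mathcal{Q}(P)$ is equivalent to maximizing $\var(\mathbb{E}(X_Q))$ over $\mathcal{Q}(P)$.

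Then I would apply Theorem \ref{th:max_var_characterization}: for every $Q \in \mathcal{Q}(P)$ we have $\var(\mathbb{E}(X_Q)) \le \var(\bar{X}_P)$, and for $Q' = Q(\{T_p\}_p)$ built from the optimal transport maps $T_p$ between $\bar{X}_P$ and $X_p$, we have $\mathcal{W}_2(\mathbb{E}(X_{Q'}), \bar{X}_P) = 0$, hence $\mathbb{E}(X_{Q'}) =_d \bar{X}_P$ and in particular $\var(\mathbb{E}(X_{Q'})) = \var(\bar{X}_P) \ge \var(\mathbb{E}(X_Q))$ for all $Q \in \mathcal{Q}(P)$. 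Substituting this into the rearranged identity gives $\frac{1}{|Q|}\sum_{q \in Q'}\var(X_q) = \var(X) - \var(\mathbb{E}(X_{Q'})) \le \var(X) - \var(\mathbb{E}(X_Q)) = \frac{1}{|Q|}\sum_{q \in Q}\var(X_q)$, which is the claim.

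I do not expect any genuine obstacle: the only points needing a line of care are confirming that $|Q|$ is constant across $\mathcal{Q}(P)$ so the averaging constants agree, and upgrading ``$\mathcal{W}_2$-distance zero'' to ``equal variance'' via equality in distribution, both of which are immediate. The mathematical content is entirely carried by Lemma \ref{l:subsample_BV_tradeoff} and Theorem \ref{th:max_var_characterization}, and this proof is simply the assembly step that reads off the consequence for the average variance.
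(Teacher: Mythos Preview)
Your proposal is correct and matches the paper's approach exactly: the paper states just before the corollary that it is ``straightforward to combine the above two results'' (Lemma \ref{l:subsample_BV_tradeoff} and Theorem \ref{th:max_var_characterization}), and your argument is precisely that combination. One minor bookkeeping slip: $|Q| = |\bar{X}_P| = c$, not $K$ (since $P$ has $K$ clusters of size $c$, the barycenter has $c$ points), but this does not affect the argument because all that matters is that $|Q|$ is constant across $\mathcal{Q}(P)$.
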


We finish this section by showing that WHOMP matching, which is the extremal WHOMP solution minimizing expected in-subgroup variance (Algorithm \ref{algo:WHOMP_matching}), gives the desired solution in Example~\ref{exam:variance_homogeneity}.

\begin{rema}[Homogeneity partition gives desirable subgroups in Example \ref{exam:variance_homogeneity}]
Continuing with Example \ref{exam:variance_homogeneity}, the marginal subgroups can be obtained by performing 3-means on $\{x_i\}_{i \in [9]}$ to obtain $$P = \{p_1 = \{x_1,x_2,x_3\}, p_2 = \{x_4,x_5,x_6\}, p_3 = \{x_7,x_8,x_9\}\}.$$
Now, to find the extremal WHOMP solution minimizing expected in-subgroup variance, we first find the Wasserstein barycenter of $\{p_i\}_{i \in [3]}$. Then, for each point on the barycenter, we find the pre-images to form $$Q = \{q_1 = \{x_1,x_4,x_7\}, q_2 = \{x_2,x_5,x_8\}, q_1 = \{x_3,x_6,x_9\} \},$$ which is exactly the desirable partition. See also Figure~\ref{anti vs desire}.
\end{rema}

\section{Algorithm Design}\label{S:5}

In this section, we present two algorithms for WHOMP (Definition \ref{d:homogeneity_parition}) solutions based on our theoretical results in Section \ref{S:4} and one algorithm that we used to estimate the Balanced K-means (Definition \ref{d:balanced_kmeans}) solution:
\begin{itemize}
    \item \textit{WHOMP Random}: Applying randomness to balance the averages' variance (variance of subgroup expectations) and variances' average (expected in-subgroup variance).
    \item \textit{WHOMP Matching}: Applying Wasserstein matching to minimize the expected in-subgroup variances.
    \item \textit{Balanced K-means}: Applying optimal transport with Lloyd's algorithm to find K-means solution with uniform cardinality constraint on clusters.
\end{itemize}

\subsection{WHOMP Random:}

\begin{algorithm}
\SetAlgoLined
\caption{WHOMP Random}
\label{algo: WHOMP_random}
\begin{flushleft}
{\bf Input:} sample data set $\{X_i\}_{i = 1}^N$;\\

{\bf Step 1:} Balanced K-means clustering:\;
Obtain balanced K-means clustering (K-means with uniform cardinality constraint) on $\{X_i\}_{i = 1}^N$: $$P := \{p_k\}_{k = 1}^K.$$

{\bf Step 2:} Random selection without replacement\;
 \While{$j \in [\frac{N}{K}]$}{
  Draw $x'_k \in p_k$ without replacement, for each $k \in [K]$\;
  Form $q_i := \{x'_k\}_{k \in K}$\;
 }
{\bf Output:} $Q := \{q_i\}_{i = 1}^{\frac{N}{K}}.$
\end{flushleft}
\end{algorithm}

WHOMP Random is defined as Algorithm \ref{algo: WHOMP_random}. Here, balanced K-means clustering refers to standard K-means clustering with the additional constraint that all resulting clusters must have equal cardinality (i.e., each cluster contains the same number of points). In the implementation used for the numerical experiments in Section \ref{S:6}, we employ Algorithm \ref{algo: Balanced_Kmeans} to estimate the balanced K-means clustering solution. Our algorithm design is inspired by the size-constrained distance clustering \cite{bradley2000constrained}, and our implementation is inspired by the minimum flow approach to estimate size-constrained K-means.

\begin{algorithm}
\SetAlgoLined
\caption{Balanced K-means}
\label{algo: Balanced_Kmeans}
\begin{flushleft}
{\bf Input:} sample data set $\{X_i\}_{i = 1}^N$, request number of clusters $K$, max iteration, threshold;

{\bf Step 1:} Random initialization: 
Obtain $K$ centers: $\{ \bar{x}_i \}_{i = 1}^k$, iteration number: iter = 0, approximation difference: $\epsilon = \infty$ \;

{\bf Step 2:} Iterative Optimal Transport:\;
 \While{ $\text{iter} \leq \text{max iteration}$ and $\epsilon > \text{threshold}$ }{
  Find the optimal transport from $\frac{1}{N} \sum_{i = 1}^N \delta_{X_i}$ to $\frac{1}{N} \sum_{i = 1}^K c \delta_{\bar{x}_i}$, denoted by $T$\;
  Find the pre-images of the optimal transport map for each $\bar{x}_i$: $T^{-1}(\bar{x}_i)$ \;
  Compute the centroid of the preimages for each $\bar{x}_i$: $\bar{x}'_i := \frac{1}{c} \sum_{x \in T^{-1}(\bar{x}_i)} x$ \;
  Update $\epsilon = \mathcal{W}_2^2( \frac{1}{K} \sum_{i = 1}^K \delta_{\bar{x}_i}, \frac{1}{K} \sum_{i = 1}^K \delta_{\bar{x}'_i})$ \;
  Update $\text{iter} = \text{iter} + 1 $ \;
  Update $\bar{x}_i = \bar{x}'_i, \forall i \in [K]$
 }
{\bf Output:} $P = \{p_i\}_{i \in [K]}$ where $p_i := T^{-1}(\bar{x}_i), \forall i \in [K]$.
\end{flushleft}
\end{algorithm}

In each of the iteration steps in the Balanced K-means (Algorithm \ref{algo: Balanced_Kmeans}), the uniform cardinality constraint is enforced by the uniform weight assigned to each of the centers: If we consider mapping $K*c$ points each with weight $\frac{1}{K*c}$ to $K$ target points each with weight $\frac{1}{K}$ equivalent to mapping $K*c$ points each with weight $\frac{1}{K*c}$ to $c$ copies of the $K$ target points each with weight $\frac{1}{K*c}$, then Choquet's minimization theorem and Birkhoff's theorem together implies that the optimal transport plan is a permutation matrix. Therefore, the optimal transport map assigns $c$ points to each target point.

\subsection{WHOMP Matching}

We now introduce the WHOMP Matching algorithm (Algorithm \ref{algo:WHOMP_matching}) to obtain the extremal WHOMP solution that minimizes the expected within-subgroup variance based on Corollary~\ref{coro: WHOMP_Matching}.

\begin{algorithm}
\SetAlgoLined
\caption{WHOMP Matching}
\label{algo:WHOMP_matching}
\begin{flushleft}
{\bf Input:} sample data set $\{X_i\}_{i = 1}^N$: $P := \{p_k\}_{k = 1}^K$;

{\bf Step 1:} Balanced K-means clustering:\;
Obtain balanced K-means clustering (K-means with uniform cardinality constraint) on $\{X_i\}_{i = 1}^N$: $$P := \{p_k\}_{k = 1}^K.$$

{\bf Step 2:} Barycenter of K-means clusters:\;
Find the Wasserstein barycenter of the clusters in $P$ obtained in Step 1, denoted by $$\bar{X} := \{\bar{x}_i\}_{i = 1}^{\frac{N}{K}},$$ and the corresponding optimal transport map $T_k$ that maps $\bar{X}$ to $X_{p_k}$ for each $k \in [K]$.

{\bf Step 3:} Group the pre-images of barycenter\;
 \While{$i \in [\frac{N}{K}]$}{
  Form $q_i := \{T_k(\bar{x}_i)\}_{k \in [K]}$\;
 }
{\bf Output:} $Q := \{q_i\}_{i = 1}^{\frac{N}{K}}.$
\end{flushleft}

\end{algorithm}

\section{Numerical Experiments} \label{S:6}

The code for the WHOMP (random and matching) implementations, along with the numerical experiments, is available at \url{https://github.com/xushizhou/WHOMP}.

In this section, we compare the proposed subsampling/partition method, WHOMP, with two baselines: random partitioning and covariate-adaptive randomization (Pocock and Simon’s method), using the following datasets:

\begin{itemize}
\item Tabular data: synthetic data generated from a Gaussian mixture model
\item Tabular data: NPI dataset\footnote{Raw data from online personality tests: Narcissistic Personality Inventory. Available at the Open-Source Psychometrics Project website: \url{https://openpsychometrics.org/_rawdata/}}
\item Image data: MNIST \cite{lecun2010mnist}
\item Graph data: synthetic data generated by a stochastic block model
\end{itemize}

Before presenting our experimental results, it is important to note that while WHOMP works efficiently for data in moderate or low-dimensional Euclidean spaces, it can also be applied to various data formats by embedding the data into Euclidean space. In this section, we use eigenvectors of the graph Laplacian to embed graph data into Euclidean form. Additionally, we apply t-SNE to embed high-dimensional image data into a lower-dimensional Euclidean space.

\subsection{Tabular Data: Gaussian Mixture Model}

In this experiment, we test four partition methods: random partitioning, covariate-adaptive randomization (Pocock and Simon’s method), WHOMP random, and WHOMP matching, using synthetic data generated from a Gaussian mixture model. To compare the subgroup homogeneity produced by these methods, we perform the following downstream tasks using the subgroups generated by each partitioning method. For all tests, the sample size is fixed at 60 to prevent the law of large numbers. In cases where the law of large numbers applies, one would not expect significant differences between subgroups generated by different partition methods. Additionally, the number of subgroups is fixed at ${2, 4, 6}$ for all experiments. However, it should be noted that the sample size, number of subgroups, and Gaussian mixture model parameters are all arbitrarily chosen, and we encourage readers to explore WHOMP with different sample sizes and subgroup numbers on other datasets.

\subsubsection{Wasserstein-2 Distance Experiment}
The goal of this experiment is to validate the theoretical results by comparing the average Wasserstein-2 distance between the subgroups generated by each partition method and the original sample, across 100 repeated tests. A lower average distance and lower variance indicate a better partition method.

Specifically, for each repetition, we begin by randomly drawing 60 data points as the sample dataset, with each 20 points randomly sampled from $\mathcal{N}((0,10),3\Id)$, $\mathcal{N}((-10,-5),3\Id)$, and $\mathcal{N}((10,-5),3\Id)$. We then apply each partition method to this sample to generate the required number of subgroups. Finally, we compute the average Wasserstein-2 distance between the resulting subgroups and the original sample for each partition method.

Table~\ref{table:1} summarizes the mean and standard deviation of the average Wasserstein-2 distances (between the subgroups and the original sample) across the 100 repetitions.

\begin{table}[htbp]
    \centering
\textbf{Average (std) W-2 distance between the sample and subgroups}
\begin{tabularx}{\textwidth} { bsss }
 \hline
 \textbf{Partition method} & \textbf{2 subgroups} & \textbf{4 subgroups} & \textbf{6 subgroups} \\
 \hline
 \textbf{Random} & 3.481 (0.890)  & 5.600 (0.849) & 6.665 (0.692)\\
 \hline
 \textbf{Covariate-adaptive} & 3.589 (0.939) & 5.469 (0.886) & 6.566 (0.771) \\
 \hline
 \textbf{WHOMP random} & 1.642 (0.146) & 2.575 (0.200) & 4.029 (0.211)\\
 \hline
 \textbf{WHOMP matching} & 1.651 (0.145) & 2.634 (0.199) & 4.170 (0.225)\\
 \hline
\end{tabularx}
\caption{In the table above, we present the mean and standard deviation of the average Wasserstein-2 distances (between the resulting subgroups and the original sample) across the 100 repetitions. The results clearly show that the WHOMP solutions yield both lower average Wasserstein-2 distances and lower standard deviations compared to the random partitioning and covariate-adaptive randomization (Pocock and Simon’s method).}
\label{table:1}
\end{table}

Figure~\ref{f:GaussianW2plots} illustrates the exact distribution of the average Wasserstein-2 distance between the original sample and the resulting subgroups across the 100 repetitions.

\begin{figure}[H]
\centering
\includegraphics[width=0.49\textwidth]{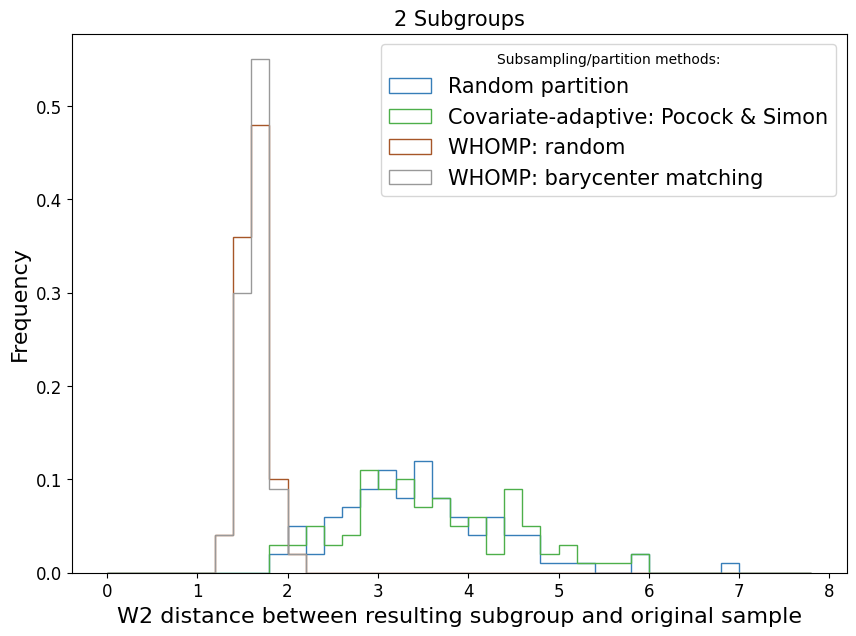}
\includegraphics[width=0.49\textwidth]{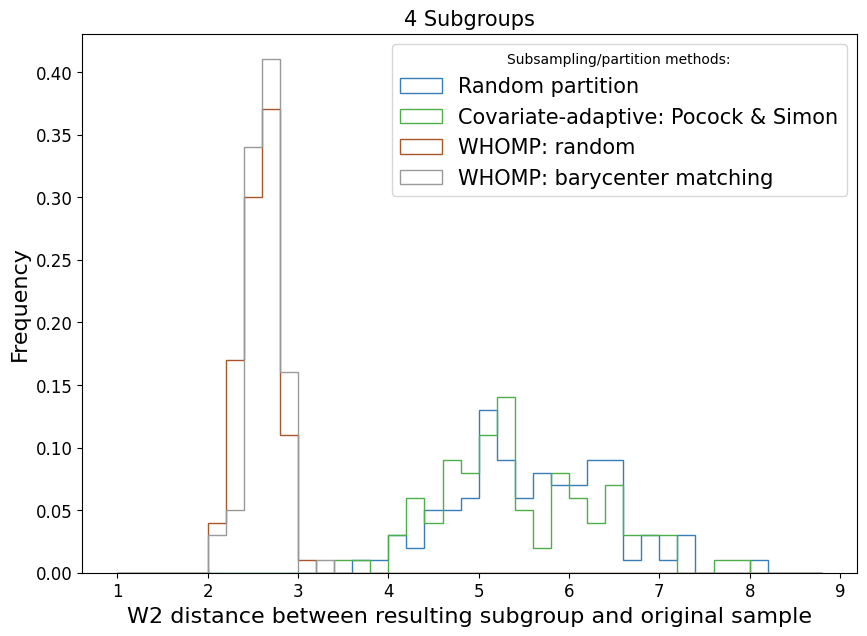}\\
\includegraphics[width=0.49\textwidth]{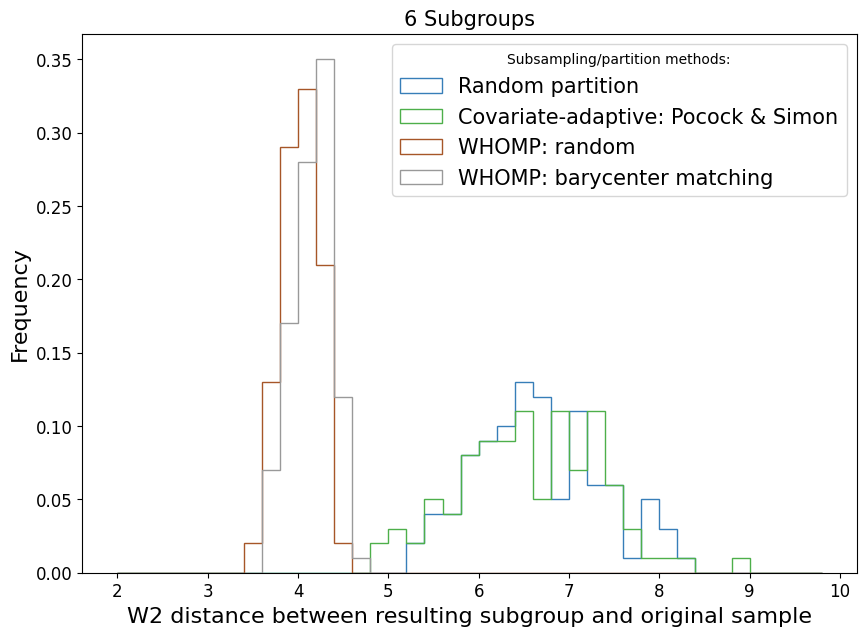}\\
\caption{It is evident from the frequency plot above that the worst-case Wasserstein distances resulting from WHOMP solutions are almost the best-case Wasserstein distance resulting from the random partition or Pocock $\&$ Simon's covariate-adaptive randomization.}
\label{f:GaussianW2plots}
\end{figure}

Furthermore,  Table~\ref{table:2} presents the variance of the first two moments of the subgroups resulting from the four partition methods. This illustrates the theoretical trade-off, Theorem \ref{th:max_var_characterization}, between the variances of the first two moments in the WHOMP solutions.

\begin{table}[htbp]
    \centering
\textbf{Variance of subgroups' 1st moments (2nd moments)}
\begin{tabularx}{\textwidth} { bsss }
 \hline
 \textbf{Partition method} & \textbf{2 subgroups} & \textbf{4 subgroups} &  \textbf{6 subgroups} \\
 \hline
 \textbf{Random} & 0.651 (33.376)  & 2.147 (80.661) & 2.990 (134.742)\\
 \hline
 \textbf{Covariate-adaptive} & 0.629 (40.929) & 1.787 (80.221) & 2.873 (129.885) \\
 \hline
 \textbf{WHOMP random} & 0.093 (23.422) & 0.130 (35.381) & 0.302 (57.638)\\
 \hline
 \textbf{WHOMP matching} & 0.199 (21.725) & 0.555 (27.972) & 1.358 (36.988)\\
 \hline
\end{tabularx}
\caption{In the table above, we present the variance of the first and second moments of the subgroups resulting from the different partition methods, averaged over repeated tests where each repetition draws the original sample randomly from Gaussian mixture models. The results reveal a trade-off between the variances of the first and second moments in the WHOMP solutions: WHOMP random exhibits lower variance in the first moment but higher variance in the second moment, while WHOMP matching shows higher variance in the first moment but lower variance in the second moment. Additionally, Pocock and Simon’s covariate-adaptive randomization achieves low variance in subgroup averages, aligning with its algorithmic objective. However, it results in a similarly high variance in the second moments as the purely randomized partition method, indicating significant distributional discrepancies.}
\label{table:2}
\end{table}

\subsubsection{Classification Experiment: Logistic Regression and SVM}
The goal is to assess the distributional homogeneity of the subgroups by using logistic regression or SVM trained on one randomly chosen subgroup to predict the true labels (from the Gaussian mixture model) on another randomly chosen subgroup. For each partition method, we repeat the prediction test 100 times. Higher average prediction accuracy and lower prediction accuracy variance indicate a better partition method.

For each partition method and repetition, we generate the sample set by randomly selecting 20 data points from $\mathcal{N}((0,10),4\Id)$, $\mathcal{N}((-10,-5),4\Id)$, and $\mathcal{N}((10,-5),4\Id)$. We then apply the partition methods to this sample set to form subgroups, randomly select (without replacement) two of the resulting subgroups as the training and test sets, train a logistic regression model on the training set, and record the test accuracy on the test set.

For the SVM experiment, the goals and test design are identical to those of the logistic regression test, except that logistic regression is replaced with SVM, and the Gaussian mixture model is replaced with $\mathcal{N}((0,10),4\Id)$, $\mathcal{N}((-10,-5),2\Id)$, and $\mathcal{N}((10,-5),2\Id)$.

Table~\ref{table:3}  summarizes the outcomes of the logistic regression and SVM experiments.

\begin{table}[htbp]
    \centering
\textbf{Logistic Regression Test Accuracy (Standard Deviation)}
\begin{tabularx}{\textwidth} { bsss }
 \hline
 \textbf{Partition method} & \textbf{2 subgroups} & \textbf{4 subgroups} &  \textbf{6 subgroups} \\
 \hline
 \textbf{Random} & 0.981 (0.023)  & 0.983 (0.032) & 0.944 (0.117)\\
 \hline
 \textbf{Covariate-adaptive} & 0.979 (0.017) & 0.971 (0.045) & 0.950 (0.073) \\
 \hline
 \textbf{WHOMP random} & 0.985 (0.017) & 0.982 (0.030) & 0.980 (0.040)\\
 \hline
 \textbf{WHOMP matching} & 0.983 (0.017) & 0.977 (0.032) & 0.984 (0.037)\\
 \hline
\end{tabularx}

\vspace{0.5cm}

\textbf{Support Vector Machine Test Accuracy (Standard Deviation)}
\begin{tabularx}{\textwidth} { bsss }
 \hline
 \textbf{Partition method} & \textbf{2 subgroups} & \textbf{4 subgroups} &  \textbf{6 subgroups} \\
 \hline
 \textbf{Random} & 1.000 (0.000)  & 0.993 (0.054) & 0.987 (0.054)\\
 \hline
 \textbf{Covariate-adaptive} & 1.000 (0.000) & 0.998 (0.020) & 0.982 (0.061) \\
 \hline
 \textbf{WHOMP random} & 1.000 (0.000) & 1.000 (0.000) & 0.999 (0.010)\\
 \hline
 \textbf{WHOMP matching} & 1.000 (0.000) & 1.000 (0.000) & 1.000 (0.000)\\
 \hline
\end{tabularx}
\caption{The table above clearly shows that the WHOMP solutions result in higher average test accuracy with lower standard deviation compared to the other methods. Additionally, the difference becomes more significant as the number of subgroups increases (or equivalently, as the subgroup sizes decrease). Therefore, the advantages of WHOMP are more significant when multiple controlled factors need to be tested.}
\label{table:3}
\end{table}

\subsubsection{Regression Experiment: Linear Regression}
The goal is to evaluate the distributional homogeneity of the subgroups by using a linear regression model trained on one randomly chosen subgroup to predict feature variables on another randomly chosen subgroup. A lower average mean squared error (MSE) and lower error variance indicate a better partition method. More specifically, the test design is the same as the classification experiment, with the distinction that one feature variable in the sample data is chosen as the dependent variable to predict, while the remaining variables serve as independent variables. The results of the experiment are summarized in Table~\ref{table:4}.

\begin{table}
\centering
\textbf{Linear Regression Test MSE error (Standard Deviation)}
\begin{tabularx}{\textwidth} { bsss }
 \hline
 \textbf{Partition method} & \textbf{2 subgroups} & \textbf{4 subgroups} &  \textbf{6 subgroups} \\
 \hline
 \textbf{Random} & 1.355 (0.112)  & 1.943 (0.298) & 2.548 (0.563)\\
 \hline
 \textbf{Covariate-adaptive} & 1.351 (0.105) & 1.988 (0.343) & 2.383 (0.370) \\
 \hline
 \textbf{WHOMP random} & 1.291 (0.034) & 1.875 (0.142) & 2.282 (0.213)\\
 \hline
 \textbf{WHOMP matching} & 1.309 (0.042) & 1.861 (0.171) & 2.320 (0.230)\\
 \hline
\end{tabularx}
\caption{In the table above, the test mean squared error (MSE) is obtained by training a linear regression model on one randomly chosen subgroup (resulting from the corresponding partition method) and then testing it on another randomly chosen subgroup. The test accuracy average and standard deviation are calculated from 100 repeated tests, with each repetition involving a random draw of the original sample from Gaussian mixture models. It is evident that the WHOMP solutions result in lower MSE and lower standard deviation compared to the random partition and Pocock and Simon’s covariate-adaptive randomization methods.}
\label{table:4}
\end{table}

\subsection{Tabular Data: NPI Data Set}

We test WHOMP on the NPI dataset, a real-world dataset used in \cite{papenberg2024k} to demonstrate distributional similarity among subgroups. Specifically, for each partition method, we first randomly select 60 data points from the NPI dataset as the sample. We then apply the partition method to generate subgroups and compute the Wasserstein-2 distance between the resulting subgroups and the sample. The sample size is kept small to highlight differences before the Law of Large Numbers affects the results in randomly subsampled data. Each test is repeated 500 times for each partition method. The results of the experiment are summarized in Table~\ref{table:5}.

\begin{table}[htbp]
    \centering
\textbf{Average (std) W-2 distance between the sample and subgroups}
\begin{tabularx}{\textwidth} { bsss }
 \hline
 \textbf{Partition method} & \textbf{2 subgroups} & \textbf{4 subgroups} & \textbf{6 subgroups} \\
 \hline
 \textbf{Random} & 2.222 (0.065)  & 2.808 (0.064) & 3.037 (0.063)\\
 \hline
 \textbf{Covariate-adaptive} & 2.227 (0.064) & 2.806 (0.064) & 3.036 (0.060) \\
 \hline
 \textbf{WHOMP random} & 2.148 (0.055) & 2.734 (0.062) & 2.965 (0.063)\\
 \hline
 \textbf{WHOMP matching} & 2.179 (0.055) & 2.751 (0.062) & 2.982 (0.062)\\
 \hline
\end{tabularx}
\caption{In the table above, we present the average and standard deviation of the Wasserstein-2 distance between the resulting subgroups and the original sample, computed over 500 repeated tests for each partition method. Each repetition involves drawing a sample randomly from the NPI dataset. It is evident that the WHOMP solutions achieve both a lower average Wasserstein-2 distance and a lower standard deviation compared to the random partition and Pocock and Simon’s covariate-adaptive randomization methods.}
\label{table:5}
\end{table}

Figure~\ref{f:GaussianW2plots500} shows the exact distribution of the Wasserstein-2 distance between the sample and the resulting subgroups across the 500 tests for the four partition methods.

\begin{figure}[H]
\centering
\includegraphics[width=0.49\textwidth]{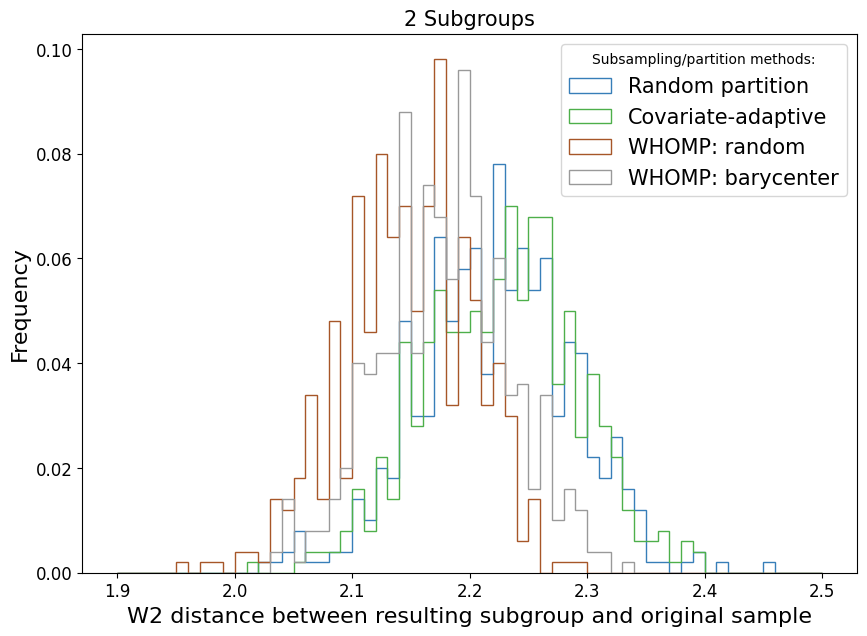}
\includegraphics[width=0.49\textwidth]{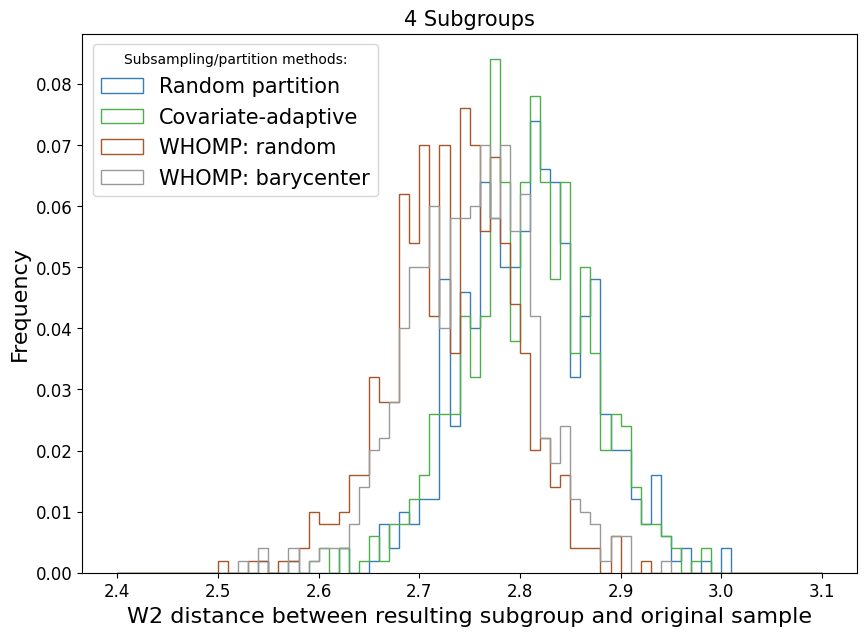}\\
\includegraphics[width=0.49\textwidth]{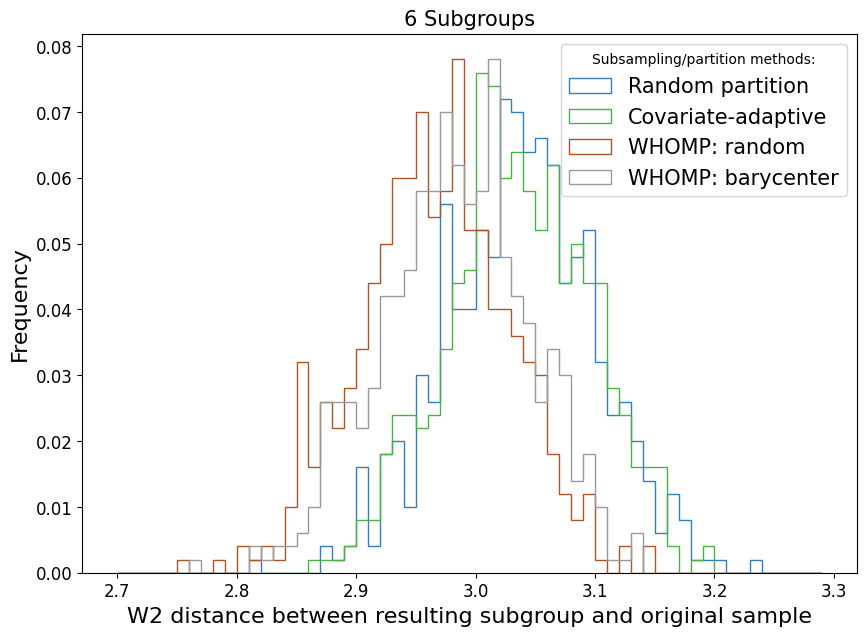}
\caption{It is evident from the frequency plot above that the worse-case Wasserstein distances resulting from WHOMP solutions are significantly better than the worst-case Wasserstein distance resulting from the random partition or Pocock $\&$ Simon's covariate-adaptive randomization. In the case of 2 subgroups, the 90th percentile worst-case distance in WHOMP solutions is equivalent to the 50th percentile worst-case distance in the other two methods.}
\label{f:GaussianW2plots500}
\end{figure}

\subsection{Image Data}

The goal of this experiment is to demonstrate that WHOMP effectively generates subgroups that are both diverse and homogeneous when applied to image data sets, after composing with embedding methods that embed high-dimensional image data to moderate or low-dimensional Euclidean space. Here, Euclidean closeness should imply closeness in the original image space.

In the experiment, we use t-SNE to embed the MNIST dataset into a 2-dimensional Euclidean space and then apply partition methods to generate partitions on the original image dataset. Due to memory constraints, we reduce the MNIST dataset to 10,000 images in this experiment.

Table~\ref{table:6} presents the mean and standard deviation (over 50 repeated tests) for the normalized entropy of the subgroup frequency vectors. For each subgroup, the frequency vector is defined such that each of the ten entries represents the frequency of images corresponding to a particular digit within the subgroup. Higher normalized entropy values indicate a more balanced (uniformly distributed in terms of digit representation) or more diversified partitioning of the subgroups.

\begin{table}[htbp]
    \centering
\textbf{Average (std) of the Subgroup Normalized Entropy in MNIST}

\textbf{Baseline Sample Normalized Entropy = 0.9780}
\begin{tabularx}{\textwidth} { bsss }
 \hline
 \textbf{Partition method} & \textbf{2 subgroups} & \textbf{4 subgroups} & \textbf{6 subgroups} \\
 \hline
 \textbf{Random} & 0.963 (0.015)  & 0.945 (0.024) & 0.940 (0.026)\\
 \hline
 \textbf{Covariate-adaptive} & 0.963 (0.017) & 0.949 (0.017) & 0.941 (0.026) \\
 \hline
 \textbf{WHOMP random} & 0.972 (0.007) & 0.960 (0.012) & 0.959 (0.016)\\
 \hline
 \textbf{WHOMP matching} & 0.973 (0.008) & 0.961 (0.014) & 0.958 (0.017)\\
 \hline
\end{tabularx}
\caption{The table above shows the average and standard deviation of the normalized entropy of the resulting subgroups, computed over 50 repeated tests for each partition method. It is evident that the WHOMP solutions yield both higher average normalized entropy and lower standard deviation compared to the random partition and Pocock and Simon’s covariate-adaptive randomization methods.}
\label{table:6}
\end{table}

\subsection{Graph Data}

The goal is to demonstrate that WHOMP can be effectively applied to graph data when combined with embedding methods, such as spectral embedding. For each partition method, we perform the following steps and repeat the test 100 times:
\begin{enumerate}
    \item Generate random graphs from the stochastic block model with three blocks: each has a respective block size of 10, 20, and 30 with a respective edge probability of $[0.6, 0.2, 0.2]$, $[0.2, 0.6, 0.2]$, and $[0.2, 0.2, 0.6]$.
    \vspace{-0.2cm}
    \item Apply spectral embedding to map the graph data into a 2-dimensional Euclidean space.
    \vspace{-0.2cm}
    \item Use the fixed partition method to divide the graph into subgraphs.
    \vspace{-0.2cm}
    \item Compute the Wasserstein-2 distance between the spectrum of the graph Laplacian and the spectrum of the subgraph Laplacian.
    \vspace{-0.2cm}
    \item Compute the average and standard deviation of the Wasserstein-2 distances over all resulting subgraphs.
\end{enumerate}

Tables~\ref{table:7}~summarizes the mean and standard deviation (over 100 repeated trials) of the single-trial 1st moments (for each trial we compute the average Wasserstein-2 distance between the graph Laplacian spectrum and the resulting subgraph Laplacian spectra.) The goal is to show the expected closeness between the graph and subgraphs in a random single trial and how deviated the closeness is over repeated trials.

Tables~\ref{table:8}~summarizes the mean and standard deviation (over 100 repeated trials) of the single-trial (square root of) 2nd moments (for each trial we compute the standard deviation of the Wasserstein-2 distance between the graph Laplacian spectrum and the resulting subgraph Laplacian spectra.) The goal is to show the expected stability or uniformity of closeness between the graph and subgraphs in a random single trial and how deviated the uniformity is over repeated trials.

\begin{table}[htbp]
    \centering
\textbf{Mean (std) of the expected Wasserstein-2 distances between Graph and Subgraph Laplacian Spectra over }
\begin{tabularx}{\textwidth} { bsss }
 \hline
 \textbf{Partition method} & \textbf{2 subgroups} & \textbf{4 subgroups} & \textbf{6 subgroups} \\
 \hline
 \textbf{Random} & 11.072 (0.307)  & 16.704 (0.331) & 18.525 (0.253)\\
 \hline
 \textbf{Covariate-adaptive} & 11.108 (0.353) & 16.689 (0.289) & 18.591 (0.293) \\
 \hline
 \textbf{WHOMP random} & 11.275 (0.305) & 17.042 (0.262) & 18.942 (0.221)\\
 \hline
 \textbf{WHOMP matching} & 11.286 (0.200) & 16.563 (0.206) & 19.057 (0.249)\\
 \hline
\end{tabularx}
\caption{While achieving nearly the same average expected Wasserstein distance between subgraph and graph Laplacian spectra, WHOMP matching results in lower standard deviation over the 100 trials with randomly sampled graph. That implies WHOMP matching has better stability in partitioning different graphs.}
\label{table:7}
\vspace{0.5cm}

\textbf{Mean (std) of the Wasserstein-2 distance standard deviation between Graph and Subgraph Laplacian Spectra}
\begin{tabularx}{\textwidth} { bsss }
 \hline
 \textbf{Partition method} & \textbf{2 subgroups} & \textbf{4 subgroups} & \textbf{6 subgroups} \\
 \hline
 \textbf{Random} & 0.600 (0.429)  & 0.572 (0.251) & 0.583 (0.168)\\
 \hline
 \textbf{Covariate-adaptive} & 0.506 (0.330) & 0.645 (0.253) & 0.623 (0.219) \\
 \hline
 \textbf{WHOMP random} & 0.398 (0.303) & 0.572 (0.241) & 0.524 (0.167)\\
 \hline
 \textbf{WHOMP matching} & 0.410 (0.213) & 0.455 (0.182) & 0.438 (0.136)\\
 \hline
\end{tabularx}
\caption{By achieving lower average (over 100 trials) standard deviation (over subgraphs in each trial) of Wasserstein distance between subgraph and graph Laplacian spectra, WHOMP (especially matching) has better stability in both resulting subgraphs for each trial and partitioning different random graphs over the 50 trials.}
\label{table:8}
\end{table}

\section*{Future Directions}

We conclude this paper by briefly sketching some compelling extensions of the WHOMP framework.

\begin{itemize}
    \item \textbf{Sequentially Incoming Data}: Develop algorithms or subgroup assignment mechanisms that optimize the WHOMP objective for sequentially incoming data. This is particularly relevant in scenarios such as clinical trials where participants are enrolled sequentially over time.
    \item \textbf{Cross-Validation}: In essence, the WHOMP objective function quantifies the distributional deviation of the resulting subgroups from the original sample. Thus, by maintaining this function within a specified range (rather than strictly minimizing it, as done in this work), the resulting subgroups can be effectively used for training/testing splits in cross-validation or holdout set generation. This range can be chosen to reflect the realistic distributional variation between the sample and the population distribution.
\end{itemize}

\section*{Acknowledgement}

T.S.\ wants to acknowledge illuminating discussions with Philipp Beineke on the topic of randomized clinical trials.
The authors
acknowledge support from NSF DMS-2208356,  NIH R01HL16351, and 
P41EB032840.


\bibliography{References_diverse_subgroups}


\appendix

\section{Appendix: Proofs of Results in Section \ref{S:2}} \label{A:2}

\subsection{Proof of Theorem \ref{th:no_type_error}}

\begin{proof}
    Assume $Y(X,c_0) =_d Y(X,c_1)$ for all $c_0, c_1 \in \mathcal{C}$, then we have $\mathcal{W}_2(Y(X,c_0), Y(X,c_1)) = 0$. Now, by the assumption on the WHOMP objective that $\sum_i \mathcal{W}_2^2(X, X_{q_i}) = 0$, we have $\mathcal{W}_2(X, X_{q_i}) = 0, \forall i \in \{0,1\}$, which further implies $$\mathcal{W}_2(Y(X_{q_0},c_0), Y(X,c_1)) = 0 = \mathcal{W}_2(Y(X,c_0), Y(X_{q_1},c_1)).$$ But it follows from the triangle inequality that $$\mathcal{W}_2(Y(X_{q_0},c_0), Y(X_{q_1},c_1)) = 0,$$ which is equivalent to $Y(X_{q_0},c_0) =_d Y(X_{q_1},c_1)$. That completes the proof for the first statement.
    
    Now, we prove the second statement by contraposition. Let $c_0, c_1 \in \mathcal{C}$ be arbitrary and assume $Y(X_{q_0},c_0) =_d Y(X_{q_1},c_1)$. Then we have $\mathcal{W}_2(Y(X_{q_0},c_0), Y(X_{q_1},c_1)) = 0$. But we also have $$\mathcal{W}_2(Y(X_{q_0},c_0), Y(X,c_1)) = 0 = \mathcal{W}_2(Y(X,c_0), Y(X_{q_1},c_1))$$ from the assumption on the WHOMP objective $\sum_i \mathcal{W}_2^2(X, X_{q_i}) = 0$. It then follows from the triangle inequality that $$\mathcal{W}_2(Y(X,c_0), Y(X,c_1)) = 0,$$ which further implies that $Y(X,c_0) =_d Y(X,c_1)$. Finally, since our choice of $c_0, c_1 \in \mathcal{C}$ is arbitrary, that completes the proof for the second statement.
\end{proof}

\subsection{Proof of Lemma \ref{l:unbias_estimator}}

\begin{proof}
    The statement is a direct corollary of the equivalence between WHOMP Random and the rerandomization Lemma $\ref{l: rerandomization_equivalence}$ and \cite[Theorem 2.1]{10.1214/12-AOS1008}.
\end{proof}

\subsection{Proof of Corollary \ref{co:lipschitz_error_bound}}

\begin{proof}
    Assume $\frac{1}{|Q|} \sum_{q \in Q} \mathcal{W}_2^2(X_q,X) \leq d$, it follows
    \begin{align*}
        &(\frac{1}{|Q|} \sum_{q \in Q} \mathcal{W}_2(X_q,X))^2 \leq \frac{1}{|Q|^2} \sum_{q \in Q} \mathcal{W}_2^2(X_q,X) \leq d\\
        \implies & \frac{1}{|Q|} \sum_{q \in Q} \mathcal{W}_2(X_q,X) \leq \sqrt{d}\\
        \implies & \frac{1}{|Q|} \sum_{q \in Q} \mathcal{W}_1(X_q,X) \leq \sqrt{d}\\
        \implies & \mathbbm{P}(\{\mathcal{W}_1(X_q,X) > \epsilon\}) \leq \frac{\sqrt{d}}{\epsilon}\\
        \implies & \mathbbm{P}(\{\sup_{||h||_{L} \leq 1} |\mathbbm{E}(h(X_q)) - \mathbbm{E}(h(X))| > \epsilon\}) \leq \frac{\sqrt{d}}{\epsilon}.\\
        \implies & \mathbbm{P}(\{\sup_{||h||_{L} \leq L} |\mathbbm{E}(\frac{1}{L}h(X_q)) - \mathbbm{E}(\frac{1}{L}h(X))| > \epsilon\}) \leq \frac{\sqrt{d}}{\epsilon}\\
        \implies & \mathbbm{P}(\{\sup_{||h||_{L} \leq L} |\mathbbm{E}(h(X_q)) - \mathbbm{E}(h(X))| > L\epsilon\}) \leq \frac{\sqrt{d}}{\epsilon}\\
        \implies & \mathbbm{P}(\{\sup_{||h||_{L} \leq L} |\mathbbm{E}(h(X_q)) - \mathbbm{E}(h(X))| > \epsilon\}) \leq \frac{L\sqrt{d}}{\epsilon}.\\
    \end{align*}
    Here, the first line follows from Jensen's inequality, the third follows from $\mathcal{W}_1 \leq \mathcal{W}_2$, the forth from Markov inequality, and the fifth from the Kantorovich-Rubinstein duality.
\end{proof}

\subsection{Proof of Theorem \ref{th:error_bound_repeat_trial}}

\begin{proof}
    \begin{align*}
        \mathbbm{E}(||\hat{\tau}(Y,\mathcal{Q}) - \tau(Y)||_{l^2}^2) & = \mathbbm{E}(||\hat{\tau}(Y,\mathcal{Q})||_{l^2}^2) -  ||\tau(Y)||_{l^2}^2\\
        & = \mathbbm{E}(||\frac{|\mathcal{Q}|}{n} \sum_{i = 1}^n (Y_i(q)\mathcal{Q}_i(q) - Y_i(q')\mathcal{Q}_i(q')) ||_{l^2}^2) - ||\tau(Y)||_{l^2}^2.
    \end{align*}
    Here, the first equation follows from the Lemma \ref{l:unbias_estimator}, $\tau(Y) = \mathbbm{E}(\hat{\tau}(Y,\mathcal{Q}))$. Now, let $T_{Q}$ be the bijective map from $q$ to $q'$ satisfying: $$\mathcal{W}_2^2(X_{\mathcal{Q}(q)},X_{\mathcal{Q}(q')}) = \frac{|Q|}{n} \sum_{i \in q} ||X_i - X_{T_Q(i)}||_{l^2}^2.$$
    Then it follows that
    \begin{align*}
        & \mathbbm{E}(||\frac{|\mathcal{Q}|}{n} \sum_{i = 1}^n (Y_i(q)\mathcal{Q}_i(q) - Y_i(q')\mathcal{Q}_i(q')) ||_{l^2}^2)\\
        = & \mathbbm{E}(||\frac{|\mathcal{Q}|}{n} \sum_{i = 1}^n \mathcal{Q}_i(q)(Y_i(q) - Y_{T_{\mathcal{Q}}(i)}(q')) ||_{l^2}^2)\\
        \leq & \mathbbm{E} (\frac{|\mathcal{Q}|}{n} \sum_{i = 1}^n ||\mathcal{Q}_i(q) (Y_i(q) - Y_{T_{\mathcal{Q}}(i)}(q'))||_{l^2}^2)\\
        = & \mathbbm{E} ( \frac{|\mathcal{Q}|}{n} \sum_{i = 1}^n Q_i(q)||Y(x_i,q) - Y(x_{T_{\mathcal{Q}}(i)},q')||_{l^2}^2)\\
        \leq & \mathbbm{E} ( \frac{|\mathcal{Q}|}{n} \sum_{i = 1}^n L^2 Q_i(q) ||x_i - x_{T_{\mathcal{Q}}(i)}||_{l^2}^2)\\
        = & L^2 \mathbbm{E}( \mathcal{W}_2^2(X_{\mathcal{Q}(q)}, X_{\mathcal{Q}(q')}) ).
    \end{align*}
    Here, the first equation follows from the definition of $T_{\mathcal{Q}}$, the second from Jensen's inequality, the third from $\mathcal{Q}_i(q)$ being an indicator function, the fourth from the assumption of the uniform Lipschitz property of $Y$, the fifth from the definition of $T_{\mathcal{Q}}$. Furthermore, we have
    \begin{align*}
        L^2 \mathbbm{E}( \mathcal{W}_2^2(X_{\mathcal{Q}(q)}, X_{\mathcal{Q}(q')}) )
        \leq & L^2 \mathbbm{E}\Big( \frac{|\mathcal{Q}|}{n} \sum_{p \in P} \big(\sum_{i = 1}^n X_iP_i(p)\mathcal{Q}_i(q) - \sum_{i = 1}^n X_iP_i(p)\mathcal{Q}_i(q')\big)\Big)\\
        = & L^2 \frac{|\mathcal{Q}|}{n} \sum_{p \in P} \mathbbm{E}\Big( \sum_{i = 1}^n X_iP_i(p)\mathcal{Q}_i(q) - \sum_{i = 1}^n X_iP_i(p)\mathcal{Q}_i(q')\Big)\\
        \text{claim} \rightarrow = & L^2 \frac{|\mathcal{Q}|}{n} \sum_{p \in P} (\frac{2|Q|}{|Q| - 1} \Var(X_p))\\
        = & L^2 \frac{2|Q|}{|Q| - 1} (\frac{|\mathcal{Q}|}{n} \sum_{p \in P} \Var(X_p))\\
        = & L^2 \frac{2|Q|}{|Q| - 1} [\Var(X) - \Var(\mathbbm{E}(X_P))].
    \end{align*}
    Here, the first line follows from the construction of $\mathcal{Q}(P)$, the third from the claim that we will prove below, and the final from the law of total variance. Now, for any $Q \in \mathcal{Q}(P)$, we have
    \begin{align*}
        L^2 \frac{2|Q|}{|Q| - 1}  [\Var(X) - \Var(\mathbbm{E}(X_P))]
        = & L^2 \frac{2|Q|}{|Q| - 1} [\Var(X) - \Var(\mathbbm{E}(\bar{X}_Q))]\\
        = & L^2 \frac{2|Q|}{|Q| - 1} \frac{1}{2} \sum_{q \in Q} \mathcal{W}_2^2(X_q ,X)\\
        = & L^2 \frac{|Q|}{|Q| - 1} \sum_{q \in Q} \mathcal{W}_2^2(X_q ,X),
    \end{align*}
    where the first line follows from Lemma \ref{l:random_barycenter}, the second from the proof of Lemma \ref{l:barycenter_variance_characterization}.
    It remains to prove the claim. Indeed,
    \begin{align*}
        & \mathbbm{E}( \sum_{i = 1}^n X_iP_i(p)\mathcal{Q}_i(q) - \sum_{i = 1}^n X_iP_i(p)\mathcal{Q}_i(q'))\\
        = & \frac{1}{|Q|} \sum_{x \in X_p} \frac{1}{|Q| - 1} \sum_{x' \in X_p \setminus \{x\} } ||x - x'||^2_{l^2}\\
        = & \frac{1}{|Q|(|Q| - 1)} \sum_{x \in X_p} \sum_{x' \in X_p} ||x - x'||^2_{l^2}\\
        = & \frac{1}{|Q| - 1} \sum_{x \in X_p} \frac{1}{|Q|} \sum_{x' \in X_p} ||x - x'||^2_{l^2}\\
        = & \frac{1}{|Q| - 1} \sum_{x \in X_p} (||x - \mathbbm{E}(X_p)||_{l^2}^2 + \var(X_p) )\\
        = & \frac{1}{|Q| - 1} \sum_{x \in X_p} ||x - \mathbbm{E}(X_p)||_{l^2}^2 + \frac{1}{|Q| - 1} \sum_{x \in X_p} \var(X_p) \\
        = & \frac{|Q|}{|Q| - 1} \frac{1}{|Q} \sum_{x \in X_p} ||x - \mathbbm{E}(X_p)||_{l^2}^2 + \frac{|Q|}{|Q| - 1} \var(X_p) \\
        = & \frac{2|Q|}{|Q| - 1} \var(X_p)
    \end{align*}
    Here, the first equation follows from the construction of $\mathcal{Q}(P)$ and the forth from the fact that $\sum_{x \in X} ||y - x||^2_{l^2} = ||y - \mathbbm{E}(X)||_{l^2}^2 + \var(X)$. This completes the proof.
\end{proof}

\section{Appendix: Proofs of Results in Section \ref{S:3}} \label{A:3}

\subsection{Proof of Lemma \ref{l:random_sample_deviation_bound}}

\begin{proof}
    Assume for contradiction that there exists a $ \{x_{s,i}\}_{i \in [K]} =: X_{sample} \in \mathbbm{R}^{d \times K}$ such that $$\mathcal{W}_2^2(X,X_{sample}) < \min_{\substack{P \in \mathbf{P}(N,K) \\ |p| \equiv c}} \frac{1}{K} \sum_{p \in P} \var(X_p).$$ By Choquet's Minimization Theorem and Birkhoff's Theorem \cite{villani2021topics}, there exist optimal transport maps $\{T_{i}\}_{i \in [K]}$ such that each $T_i$ maps $c$ points in $X$ to $x_{s,i}$ for each $i \in [K]$. Therefore, the pre-images $T_i^{-1}(x_{s,i})$ satisfy $\bigcup_{i \in [K]} T_i^{-1}(x_{s,i}) = X$, $T_i^{-1}(x_{s,i}) \cap T_j^{-1}(x_{s,j}), \forall i \neq j$ and $|T_i^{-1}(x_{s,i})| = c$ for all $i \in [K]$. Therefore, $X_{P'} := \{X_{p'}\}_{i \in [K]} := \{T_i^{-1}(x_{s,i})\}_{i \in [K]}$ defines a partition on $X$ that satisfies $|p'| = c, \forall p' \in P'$. But, it follows that
    \begin{equation*}
        \frac{1}{K} \sum_{p' \in P'} \var(X_{p'}) \leq \mathcal{W}_2^2(X,X_{\text{sample}}) < \min_{\substack{P \in \mathbf{P}(N,K) \\ |p| \equiv c}} \frac{1}{K} \sum_{p \in P} \var(X_p).
    \end{equation*}
    This contradicts the definition of the right hand side. That completes the proof.
\end{proof}

\subsection{Proof of Lemma \ref{l: rerandomization_equivalence}}

\begin{proof}
    Since WHOMP Random generates random partitions from $\mathcal{Q}(P)$, it is equivalent to the accept and reject rule $\mathbbm{1}_{\mathcal{Q}(P)}(Q)$. Therefore, it suffices to show that $\mathbbm{1}_{\mathcal{Q}(P)}(Q) = \Phi(X,Q)$, or equivalently $$Q \in \mathcal{Q}(P) \iff \Var(\bar{X}_Q) = \Var(\mathbbm{E}(X_P)).$$
    $(\Rightarrow)$ First, assume $Q \in \mathcal{Q}(P)$. It then follows from Lemma \ref{l:random_barycenter} that $\Var(\bar{X}_Q) = \Var(\mathbbm{E}(X_P)$.
    $(\Leftarrow)$ Now, assume for contradiction that $Q \notin \mathcal{Q}(P)$ and $\Var(\bar{X}_Q) = \Var(\mathbbm{E}(X_P))$. It follows from Theorem \ref{th:max_var_characterization} that $\Var(\mathbbm{E}(X_{P'})) = \Var(\bar{X}_Q)$, where $X_{P'}:= \{X_{p'}\}_{p' \in P'}$, $X_{p'}:= \{T_q(\bar{x}_Q)\}_{q \in Q}$, and $T_q$ is the optimal transport map that maps $\bar{X}_Q$ to $X_q$ for each $q \in Q$. It follows from $Q \notin \mathcal{Q}(P)$ that $P' \neq P$. But that implies $\Var(\mathbbm{E}(X_{P'})) = \Var(\mathbbm{E}(X_{P}))$, which contradicts the uniqueness of $P$. This completes the proof.
\end{proof}

\subsection{Proof of Lemma \ref{l:anticlustering_random_duality}}

\begin{proof}

Pick an arbitrary $P \in \mathbf{P}(N,K)$ that satisfies $|p| \equiv c$. For the left hand side, we have:

\begin{align}
\sum_{p \in P} \Var(X_p) & = \sum_{p \in P} \Big(\frac{1}{c}\sum_{i \in p} ||x_i - \frac{1}{c}\sum_{j \in p}x_j||_{l^2}^2\Big) \nonumber\\
& = \sum_{p \in P} \Big(\frac{1}{c}\sum_{i \in [N]}||x_i||_{l^2}^2 - \frac{1}{c^2} \sum_{i,j \in p} \langle x_i,x_j \rangle_{l^2}\Big) \nonumber\\
& = \frac{1}{c}\sum_{i \in [N]} ||x_i||_{l^2}^2 - \frac{1}{c^2} \sum_{p \in P} \sum_{i,j \in p} \langle x_i,x_j \rangle_{l^2} \nonumber\\
& = \frac{1}{c}\sum_{i \in [N]} ||x_i||_{l^2}^2 - \frac{1}{c^2} \sum_{i,j \in [N]} \langle x_i,x_j \rangle_{l^2} \mathds{1}_{\{P(i) = P(j)\}}. \label{eqn:1}
\end{align}

Now, for the right hand side, we have:

\begin{align*}
\mathbb{E}_{\mathcal{Q}}\Big[\sum_{q \in Q} \Var(X_q)\Big] & = \sum_{Q \in \mathcal{Q}(P)} \mathbb{P}_{\mathcal{Q}}(Q)\Big[\frac{1}{K}\sum_{i \in [N]} ||x_i||_{l^2}^2 - \frac{1}{K^2} \sum_{q \in Q} \sum_{i,j \in q} \langle x_i,x_j \rangle_{l^2}\Big]\\
& = \frac{1}{K}\sum_{i \in [N]} ||x_i||_{l^2}^2 - \frac{1}{K^2} \sum_{Q \in \mathcal{Q}(P)} \mathbb{P}_{\mathcal{Q}}(Q)\Big[\sum_{q \in Q} \sum_{i,j \in q} \langle x_i,x_j \rangle_{l^2}\Big]\\
& = \frac{1}{K}\sum_{i \in [N]} ||x_i||_{l^2}^2 - \frac{1}{K^2} \sum_{Q \in \mathcal{Q}(P)} \mathbb{P}_{\mathcal{Q}}(Q)\Big[\sum_{i,j \in [N]} \langle x_i,x_j \rangle_{l^2} \mathds{1}_{\{Q(i) = Q(j)\}}\Big]\\
& =  \frac{1}{K}\sum_{i \in [N]} ||x_i||_{l^2}^2 - \frac{1}{K^2} \sum_{i,j \in [N]} \langle x_i,x_j \rangle_{l^2} \Big[\sum_{Q \in \mathcal{Q}(P)} \mathbb{P}_{\mathcal{Q}}(Q) \mathds{1}_{\{Q(i) = Q(j)\}}\Big].
\end{align*}

But

\begin{align*}
\sum_{Q \in \mathcal{Q}(P)} \mathbb{P}_{\mathcal{Q}}(Q) \mathds{1}_{\{Q(i) = Q(j)\}} & = \mathbb{P}_{\mathcal{Q}}(\{\mathcal{Q}(i) = \mathcal{Q}(j)\} | \mathcal{Q} \in \mathcal{Q}(P))\\
& =  \begin{cases} 0 &\mbox{if } P(i) = P(j) \\
\frac{1}{c} & \mbox{if } P(i) \neq P(j) \end{cases}.
\end{align*}

Therefore,

\begin{align*}
& \mathbb{E}_{\mathcal{Q}}[\sum_{q \in Q} \Var(X_q)]\\
= & \frac{1}{K}\sum_{i \in [N]} ||x_i||_{l^2}^2 - \frac{1}{KN} \sum_{i,j \in [N]} \langle x_i,x_j \rangle_{l^2} \mathds{1}_{\{P(i) \neq P(j)\}}\\
= & \frac{1}{K}\sum_{i \in [N]} ||x_i||_{l^2}^2 - \frac{1}{KN} \sum_{i,j \in [N]} \langle x_i,x_j \rangle_{l^2} + \frac{1}{KN} \sum_{i,j \in [N]} \langle x_i,x_j \rangle_{l^2} \mathds{1}_{\{P(i) = P(j)\}}
\end{align*}

Finally, combine the left and right hand sides, we obtain:

\begin{equation} \label{eq:anticlustering_random_duality}
\min_{\substack{P \in \mathbf{P}(N,K) \\ |p| \equiv c}} \sum_{p \in P} \Var(X_p) \iff \max_{\substack{P \in \mathbf{P}(N,K) \\ |p| \equiv c}} \mathbb{E}_{\mathcal{Q} \sim uniform(\mathcal{Q}(P)) }[\sum_{q \in Q} \Var(X_q)]
\end{equation}

\end{proof}

\subsection{Proof of Proposition \ref{prop:clustering_random_duality}}

\begin{proof}
For the right hand side, we have
\begin{align*}
\mathbb{E}_{\mathcal{P}}[\sum_{p \in \mathcal{P}} \Var(X_p)] & = \sum_{P \in \mathcal{P}(Q)} \mathbb{P}_{\mathcal{P}}(Q)\Big[\frac{1}{c}\sum_{i \in [N]} ||x_i||_{l^2}^2 - \frac{1}{c^2} \sum_{p \in P} \sum_{i,j \in q} \langle x_i,x_j \rangle_{l^2}\Big]\\
& =  \frac{1}{c} \sum_{i \in [N]} ||x_i||_{l^2}^2 - \frac{1}{c^2} \sum_{i,j \in [N]} \langle x_i,x_j \rangle_{l^2} \Big[\sum_{P \in \mathcal{P}(Q)} \mathbb{P}_{\mathcal{P}}(P) \mathds{1}_{\{P(i) = P(j)\}}\Big]\\
& = \frac{1}{c}\sum_{i \in [N]} ||x_i||_{l^2}^2 - \frac{K}{N^2} \sum_{i,j \in [N]} \langle x_i,x_j \rangle_{l^2} \mathds{1}_{\{Q(i) \neq Q(j)\}}.
\end{align*}
Here, the last equality follows from
\begin{align*}
\sum_{P \in \mathcal{P}(Q)} \mathbb{P}_{\mathcal{P}}(P) \mathds{1}_{\{P(i) = P(j)\}} & = \mathbb{P}_{\mathcal{P}}(\{\mathcal{P}(i) = \mathcal{P}(j)\}|\mathcal{P} \in \mathcal{P}(Q))\\
& = \begin{cases} 0 &\mbox{if } Q(i) = Q(j), \\
\frac{1}{K} & \mbox{if } Q(i) \neq Q(j). \end{cases}
\end{align*}
Now, for the left hand side, we have
\begin{align*}
\sum_{q\in Q} \Var(X_q) & = \sum_{q \in Q} (\frac{1}{K}\sum_{i \in q} ||x_i - \frac{1}{K}\sum_{j \in q}x_j||_{l^2}^2) \nonumber\\
& = \frac{1}{K}\sum_{i \in [N]} ||x_i||_{l^2}^2 - \frac{1}{K^2} \sum_{i,j \in [N]} \langle x_i,x_j \rangle_{l^2} \mathds{1}_{\{Q(i) = Q(j)\}}\\
& = \frac{1}{K}\sum_{i \in [N]} ||x_i||_{l^2}^2 - \frac{1}{K^2} \sum_{i,j \in [N]} \langle x_i,x_j \rangle_{l^2} + \frac{1}{K^2} \sum_{i,j \in [N]} \langle x_i,x_j \rangle_{l^2} \mathds{1}_{\{Q(i) \neq Q(j)\}}.
\end{align*}
Therefore, it follows from the left and right hand sides that
\begin{equation}
    \max_{\substack{Q \in \mathbf{P}(N,c) \\ |q| \equiv K }} \sum_{q \in Q} \Var(X_q)  \iff \max_{\substack{Q \in \mathbf{P}(N,c) \\ |q| \equiv K }} \mathbb{E}_{\mathcal{P} \sim uniform(\mathcal{P}(Q)) }[\sum_{p \in \mathcal{P}} \Var(X_p)]
\end{equation}

\end{proof}

\section{Appendix: Proofs of Results in Section \ref{S:4}} \label{A:4}

\subsection{Proof of Lemma \ref{l:barycenter_variance_characterization}}

\begin{proof}
The minimum and maximum below are all over the set $\{Q \in \mathbf{P}(N,c): |q| \equiv K\}$.
    \begin{align*}
        \min_{Q \in \mathbf{P}(N,c)} \sum_{q \in Q} \mathcal{W}_2^2(X_q,X)
        \equiv & \min_{Q \in \mathbf{P}(N,c)} \sum_{q \in Q} \sum_{q' \neq q} \mathcal{W}_2^2(X_q,X_{q'})\\
        \equiv & \min_{Q \in \mathbf{P}(N,c)} \sum_{q \in Q} \sum_{q' \in Q} \mathcal{W}_2^2(X_q,X_{q'})\\
        \equiv & \min_{Q \in \mathbf{P}(N,c)} \sum_{q \in Q} \mathcal{W}_2^2(X_q, \Bar{X}_Q)\\
        \equiv & \min_{Q \in \mathbf{P}(N,c)} [\var(X) - \var(\Bar{X}_Q) ]\\
        \equiv & \max_{Q \in \mathbf{P}(N,c)} [\var(\Bar{X}_Q) ]\\
    \end{align*}
Here, the first line follows from the optimality of optimal transport on subsets: $\mathcal{W}_2^2(X_q,X) = \sum_{q' \in Q} \mathcal{W}_2^2(X_q,X_{q'})$, the third line follows from the fact that $$2 \sum_{q \in Q} \mathcal{W}_2^2(X_q,\Bar{X}_Q) = \sum_{q \in Q} \sum_{q' \in Q} \mathcal{W}_2^2(X_q,X_{q'}),$$ the forth line follows from the variance reduction formulation: $\sum_{q \in Q} \mathcal{W}_2^2(X_q, \Bar{X}_Q) = \var(X) - \var(\Bar{X}_Q)$, and the last line follows from the fact that $\var(X)$ is constant.
\end{proof}

\subsection{Proof of Lemma \ref{l:random_barycenter}}

\begin{proof}
    Let $Q \in \mathcal{Q}(P)$ be arbitrary. For each $q \in Q$, let $T_q$ denote the optimal transport map that pushes $\mathcal{L}(\mathbbm{E}(X_P))$ to $\mathcal{L}(X_q)$. We claim that $T_q(\mathbbm{E}(X_p)) = X_{p \cap q}$ for all $p \in P$. It follows that
    \begin{equation*}
        \frac{1}{c} \sum_{q \in Q} T_q(\mathbbm{E}(X_p)) = \frac{1}{c} \sum_{q \in Q} X_{p \cap q} = \mathbbm{E}(X_p), \forall \mathbbm{E}(X_p) \in \mathbbm{E}(X_P).
    \end{equation*}
    It follows from the fixed point characterization of Wasserstein barycenter \cite{alvarez2016fixed} that $$\mathcal{W}_2(\mathbbm{E}(X_P), \bar{X}_Q) = 0.$$

    It remains to prove the claim. Indeed, assume for contradiction that the claim is not true, then there exists $q \in Q$ such that
    \begin{align*}
        \mathcal{W}_2^2(\mathcal{L}(\mathbbm{E}(X_P)), \mathcal{L}(X_q)) & = \frac{1}{K} \sum_{p \in P} ||\mathbbm{E}(X_p) - T_q(\mathbbm{E}(X_p))||^2\\
        & < \frac{1}{K} \sum_{p \in P} ||\mathbbm{E}(X_p) - X_{p \cap q}||^2
    \end{align*}
    But then define a new partition $P' \in \mathbf{P}(N,K)$ by $X_{p'} := \bigcup_{q \in Q} T_q(\mathbbm{E}(X_p))$ for each $p' \in P'$. It follows that
    \begin{align*}
        \frac{1}{K} \sum_{p' \in P'} \var(X_{p'}) & = \frac{1}{K} \sum_{p' \in P'} \frac{1}{c} \sum_{q \in Q} ||\mathbbm{E}(X_{p'}) - T_q(\mathbbm{E}(X_p))||^2\\
        & \leq \frac{1}{K} \sum_{p' \in P'} \frac{1}{c} \sum_{q \in Q} ||\mathbbm{E}(X_{p}) - T_q(\mathbbm{E}(X_p))||^2\\
        & < \frac{1}{K} \sum_{p' \in P'} \frac{1}{c} \sum_{q \in Q} ||\mathbbm{E}(X_{p}) - X_{p \cap q}||^2\\
        & = \frac{1}{K} \sum_{p \in P} \var(X_{p}).
    \end{align*}
    Here, the first line follows from the definition of $P'$, the second from the fact that Euclidean average is the Fréchet mean, and the third from the assumption. But that contradicts the optimality of $P$. Therefore, we have proved the claim by contradiction. Finally, since our choice of $Q \in \mathcal{Q}(P)$ is arbitrary, we are done.

\end{proof}

\subsection{Proof of Theorem \ref{th:max_var_characterization}}

\begin{proof}
Let $T'_p$ be the bijective map from $\mathbbm{E}(X_q)$ to $X_p \cap X_q$. For each $p \in P$, we have
\begin{align*}
    \frac{1}{|Q|}\sum_{q \in Q} \var(X_q) & = \frac{1}{c} \sum_{q \in Q} (\frac{1}{K}\sum_{x \in X_q} ||x - \mathbbm{E}(X_q)||^2)\\
    & = \frac{1}{c} \sum_{q \in Q} (\frac{1}{K} \sum_{p \in P} ||T'_p(\mathbbm{E}(X_q)) - \mathbbm{E}(X_q)||^2)\\
    & = \frac{1}{K} \sum_{p \in P} ( \frac{1}{c} \sum_{q \in Q} ||T'_p(\mathbbm{E}(X_q)) - \mathbbm{E}(X_q)||^2)\\
    & = \frac{1}{K} \sum_{p \in P} ||X_p - \mathbbm{E}(X_Q)||_2^2\\
    & \geq \frac{1}{K} \sum_{p \in P} \mathcal{W}_2^2(\mathcal{L}(X_p), \mathcal{L}(\bar{X}_P)).
\end{align*}
Here, the second line follows from the definition of $T'_p$, the penultimate line from the fact that $T'_{p_{\sharp}}\mathcal{L}(\mathbbm{E}(X_Q)) = \mathcal{L}(X_p)$ for all $p \in P$, and the last line follows from the definition of the Wasserstein-2 barycenter. Now, it follows from Lemma \ref{l:subsample_BV_tradeoff} that
    \begin{align*}
        \var(\mathbbm{E}(X_Q)) & = \var(X) - \frac{1}{|Q|} \sum_{q \in Q} \var(X_q)\\
        & \leq \var(X) - \frac{1}{K} \sum_{p \in P} \mathcal{W}_2^2(\mathcal{L}(X_p), \mathcal{L}(\bar{X}_P))\\
        & = \var(\bar{X}_P),
    \end{align*}
where the last line follows from the variance reduction of the Wasserstein-2 barycenter \cite[{Lemma 5.6}]{JMLR:v24:22-0005}.

Finally, when $T_p$ are the optimal transport maps from $\mathcal{L}(\bar{X}_P)$ to $\mathcal{L}(X_p)$, we have
\begin{equation}
    \mathbbm{E}(X_{q(\bar{x})}) = \frac{1}{K} \sum_{x \in X_{q(\bar{x})}} x = \frac{1}{K} \sum_{p \in P} T_p(\bar{x}) = \bar{x}.
\end{equation}
Since this is true for all $\bar{x} \in \Bar{X}_P$, we have $\mathcal{W}_2^2(\mathbbm{E}(X_{Q(\{T_p\}_p)}), \bar{X}_P) = 0$ which implies $\var(\mathbbm{E}(X_{Q(\{T_p\}_p)})) = \var(\bar{X}_P)$. That completes the proof.
\end{proof}

\end{document}